\newtheorem{theorem}{Theorem}
\newtheorem{lemma}{Lemma}
\newtheorem{proposition}{Proposition}
\newtheorem{remark}{Remark}
\newtheorem{assumption}{Assumption}
\renewcommand{\tilde}{\widetilde}
\newcommand{\Expect}{\mathbb{E}}
\newcommand{\calD}{{\mathcal{D}}}
\newcommand{\calH}{{\mathcal{H}}}
\newcommand{\calN}{{\mathcal{N}}}
\newcommand{\calO}{{\mathcal{O}}}
\newcommand{\calS}{{\mathcal{S}}}
\DeclareMathOperator*{\argmin}{arg\,min}
\DeclareMathOperator*{\argmax}{arg\,max}
\def\bt{\mathrm{t}}
\def\bX{\mathbf{X}}
\def\bzero{\mathbf{0}}
\def\bSigma{\boldsymbol{\Sigma}}
\newcommand{\Var}{\mathrm{Var}}
\newcommand{\Cov}{\mathrm{Cov}}
\newcommand{\BR}{\mathbb R}
\newcommand{\BE}{\mathbb E}
\newcommand{\BP}{\mathbb P}
\begin{document}
\title{Nonparametric Variable Screening with Optimal Decision Stumps}
\author[ ]{Jason M. Klusowski\thanks{\href{mailto:jason.klusowski@princeton.edu}{jason.klusowski@princeton.edu} \\ Supported in part by NSF DMS-1915932 and NSF TRIPODS DATA-INSPIRE CCF-1934924.}}
\author[ ]{Peter M. Tian\thanks{\href{mailto:ptian@princeton.edu}{ptian@princeton.edu} \\ Supported in part by the Gordon Wu Fellowship of Princeton University.}}
\affil[ ]{Department of Operations Research and Financial Engineering, Princeton University}
\date{}
\maketitle

\begin{abstract}
Decision trees and their ensembles are endowed with a rich set of diagnostic tools for ranking and screening variables in a predictive model. Despite the widespread use of tree based variable importance measures, pinning down their theoretical properties has been challenging and therefore largely unexplored. To address this gap between theory and practice, we derive finite sample performance guarantees for variable selection in nonparametric models using a single-level CART decision tree (a decision stump). Under standard operating assumptions in variable screening literature, we find that the marginal signal strength of each variable and ambient dimensionality can be considerably weaker and higher, respectively, than state-of-the-art nonparametric variable selection methods. Furthermore, unlike previous marginal screening methods that attempt to directly estimate each marginal projection via a truncated basis expansion, the fitted model used here is a simple, parsimonious decision stump, thereby eliminating the need for tuning the number of basis terms. Thus, surprisingly, even though decision stumps are highly inaccurate for estimation purposes, they can still be used to perform consistent model selection.


\end{abstract}






\section{Introduction}

A common task in many applied disciplines involves determining which variables, among many, are most important in a predictive model.
In high-dimensional sparse models, many of these predictor variables may be irrelevant in how they affect the response variable. As a result, variable selection techniques are crucial for filtering out irrelevant variables in order to prevent overfitting, improve accuracy, and enhance the interpretability of the model. Indeed, algorithms that screen for relevant variables have been instrumental in the modern development of fields such as genomics, biomedical imaging, signal processing, image analysis, and finance, where high-dimensional, sparse data is frequently encountered \citep{fan2008sis}.

Over the years, numerous parametric and nonparametric methods for variable selection in high-dimensional models have been proposed and studied. 
For linear models, the LARS algorithm \citep{efron2004lars} (for Lasso \citep{tibshirani1996lasso}) 
and Sure Independence Screening (SIS) \citep{fan2008sis} serve as prototypical examples that have achieved immense success, both practically and theoretically. Other strategies for nonparametric additive 
models such as Nonparametric Independence Screening (NIS) \citep{fan2011nonparametric} and Sparse Additive Models (SPAM) \citep{ravikumar2009spam} have also enjoyed a similar history of success. 

\subsection{Tree based variable selection procedures}
Alternatively, 
because they are built from highly interpretable and simple objects,
decision tree models are another important tool in the data analyst's repertoire.
Indeed, after only a brief explanation, one is able to understand the tree construction and its output in terms of meaningful domain specific attributes of the variables. In addition to being interpretable, tree based model have good computational scalability as the number of data points grows, making them faster than many other methods when dealing with large datasets. In terms of flexibility, they can naturally handle a mixture of numeric variables, categorical variables, and missing values. Lastly, they require less preprocessing (because they are invariant to monotone transformations of the inputs), are quite robust to outliers, and are relatively unaffected by the inclusion of many irrelevant variables \citep{hastie2009elements, klusowski2020sparse}, the last point being of relevance to the variable selection problem.


Conventional tree structured models such as CART \citep{breiman1984}, random forests \citep{breiman2001}, ExtraTrees \citep{geurts2006extratrees}, and gradient tree boosting \citep{friedman2001greedy} are also equipped with heuristic variable importance measures that can be used to rank and identify relevant predictor variables for further investigation (such as plotting their partial dependence functions). 
In fact, tree based variable importance measures have been used 
to discover genes in
bioinformatics \citep{breiman2001statistical,lunetta2004screening,bureau2005snps,diaz2006gene,huynh-thu2010inferring}, 
identify loyal customers and clients \citep{buckinx2007predicting,buckinx2005customer,lariviere2005predicting}, detect network intrusion \citep{zhang2006hybrid,zhang2008hybrid}, and understand income redistribution preferences \citep{keely2008understanding}, to name a few applications. 

\subsection{Mean decrease in impurity} \label{mdi}

An attractive feature specific to CART methodology is that one can compute, essentially for free, measures of variable importance (or influence) using the optimal splitting variables and their corresponding impurities. The canonical CART-based variable importance measure is the Mean Decrease in Impurity (MDI) \citep[Section 8.1]{friedman2001greedy}, \citep[Section 5.3.4]{breiman1984}, \citep[Sections 10.13.1 \& 15.3.2]{hastie2009elements}, 
which calculates an importance score for a variable by summing the largest impurity reductions (weighted by the fraction of samples in the node) over all non-terminal nodes split with that variable, averaged over all trees in the ensemble. Another commonly used and sometimes more accurate measure is the Mean Decrease in Accuracy (MDA) or permutation importance measure, defined as the average difference in out-of-bag errors before and after randomly permuting the data values of a variable in out-of-bag samples over all trees in the ensemble \citep{breiman2001}. However, from a computational perspective, MDI is preferable to MDA since it can be computed as each tree is grown with no additional cost.
While we view the analysis of MDA as an important endeavor, it is outside the scope of the present paper and therefore not our current focus. 
In contrast to the aforementioned variable selection procedures like SIS, NIS, Lasso, and SPAM, except for a few papers, little is known about the finite sample performance of MDI. Theoretical results are mainly limited to the asymptotic, fixed dimensional data setting and to showing what would be expected from a reasonable measure of variable importance. For example, it is shown in \citep{louppe2013understanding} that the MDI importance of a categorical variable (in an asymptotic data and ensemble size setting) is zero precisely when the variable is irrelevant, and that the MDI importance for relevant variables remains unchanged if irrelevant variables are removed or added. 

Recent complementary work by \citep{scornet2020trees} established the large sample properties of MDI for additive models by showing that it converges to sensible quantities like the variance of the component functions, provided the decision tree is sufficiently deep. However, these results are not fine-grained enough to handle the case where either the dimensionality grows or the marginal signals decay with the sample size. Furthermore, \citep{scornet2020trees} crucially relies on the assumption that the CART decision tree is a \emph{consistent} predictor, which is currently only known for additive models \citep{scornet2015}. It is therefore unclear whether the techniques can be generalized to models beyond those considered therein.
On the other hand, our results suggest that tree based variable importance measures can still have good variable selection properties even though the underlying tree model may be a poor predictor of the data generating process—which can occur with CART and random forests.


Lastly, we mention that important steps have also been taken to characterize the finite sample properties of MDI; \citep{li2019debiased} show that MDI is less biased for irrelevant variables when each tree is shallow. This work therefore covers one facet of the variable selection problem, i.e., controlling the number of false positives, and will be employed in our proofs.

\subsection{New contributions}


The lack of theoretical development for tree based variable selection is likely because the training mechanism involves complex steps —which present new theoretical challenges— such as bagging, boosting, pruning, random selections of the predictor variables for candidate splits, recursive splitting, and line search to find the best split points \citep{kazemitabar2017variable}. The last consideration, importantly, means that the underlying tree construction (e.g., split points) depends on \emph{both} the input and output data,
which enables it to adapt to structural properties of the underlying statistical model (such as sparsity). This data adaptivity is a double-edged sword 
 from a theoretical standpoint, though, since unravelling the data dependence is a formidable
task.

Despite the aforementioned challenges, we advance the study of tree based variable selection by focusing on the following two fundamental questions:
\begin{itemize}
\item Does MDI enjoy finite sample guarantees for its variable ranking?
\item What are the benefits of MDI over other variable screening methods?
\end{itemize}

Specifically, we derive rigorous finite sample guarantees for what we call the SDI importance measure, a sobriquet for {\it Single-level Decrease in Impurity}, which is a special case of MDI for a single-level CART decision tree or ``decision stump'' \citep{iba1992stumps}. This is similar in spirit to the approach of DSTUMP \citep{kazemitabar2017variable}
but, importantly, SDI incorporates the line search step by finding the \emph{optimal} split point, instead of the empirical median, of every predictor variable.
As we shall see, ranking variables according to their SDI is equivalent to ranking the variables according to the marginal sample correlations between the response data and the optimal decision stump with respect to those variables. This equivalence also yields connections with other variable selection methods: for linear models with Gaussian variates, we show that SDI is asymptotically equivalent to SIS (up to logarithmic factors in the sample size), and so SDI inherits the so-called sure screening property \citep{fan2008sis} under suitable assumptions. 

Unlike SIS, however, SDI is accompanied by provable guarantees for nonparametric 
models. 
We show that under certain conditions, SDI achieves {\it model selection consistency}; that is, it correctly selects the relevant variables of the model with probability approaching one as the sample size increases. In fact, the minimum signal strength of each relevant variable and maximum dimensionality of the model
are shown to be less restrictive for SDI than NIS or SPAM. In the linear model case with Gaussian variates, SDI is shown to nearly match the optimal sample size threshold (achieved by Lasso) for exact support recovery. 
These favorable properties are particularly striking when one is reminded that the underlying model fit to the data is a simple, parsimonious decision stump—in particular, there is no need to specify a flexible function class (such as a polynomial spline family) and be concerned with calibrating the number of basis terms or bandwidth parameters. 

Finally, we empirically compare SDI to other contemporaneous variable selection algorithms, namely, SIS, NIS, Lasso, and SPAM. We find that SDI is competitive and performs favorably in cases where the model is less smooth.
Furthermore, we empirically verify the similarity between SDI and SIS for the linear model case, and confirm the model selection consistency properties of SDI for various types of nonparametric models.

Let us close this section by saying a few words about the primary goals of this paper. In practice, tree based variable importance measures such as MDI and MDA are most commonly used to \emph{rank variables}, in order to determine which ones are worthy of further examination. This is a less ambitious endeavor than that sought by the aforementioned variable selection methods, which aim for consistent \emph{model selection}, namely, determining exactly which variables are relevant and irrelevant. Though we do study the model selection problem, our priority is to demonstrate the power of tree based variable importance measures as interpretable, accurate, and efficient variable ranking tools.

\subsection{Organization} 

The paper is organized according to the following schema. 
First, in Section \ref{setup}, we formally describe our learning setting and problem, discuss prior art, and introduce the SDI algorithm.
We outline some key lemmas and ideas used in the proofs in Section \ref{preliminaries}. In the next two sections, we establish our main results:
in Section \ref{linear}, we establish that for linear models, SDI is asymptotically equivalent to SIS (up to logarithmic factors in the sample size) and in Section \ref{general}, we establish the variable ranking and model selection consistency properties of SDI for more general nonparametric models. 
Lastly, we compare the performance of SDI to other well-known variable selection techniques from a theoretical perspective in Section \ref{comparisons} and from an empirical perspective in Section \ref{experiments}.
We conclude with a few remarks in Section \ref{conclusion}. Due to space constraints, we include the full proofs of our main results in Sections \ref{linear} and \ref{general} in the appendix. 

\section{Setup and algorithm} \label{setup}

In this section we introduce notation, formalize the learning setting, and give an explicit layout of our SDI algorithm. At the end of the section, we discuss its complexity and provide several interpretations.

\subsection{Notation} For labeled data $ \{(U_1, V_1), \dots, (U_n, V_n))\} $ drawn from a population distribution $ (U, V) $, we let $ \widehat{\Cov}(U, V) = \frac{1}{n}\sum_{i=1}^n (U_i - \overline U)(V_i - \overline V) $, $ \widehat{\Var}(U) = \frac{1}{n}\sum_{i=1}^n (U_i-\overline U)^2 $, $ \overline U = \frac{1}{n}\sum_{i=1}^n U_i $, and 
$
\widehat\rho\, (U, V) = \frac{ \widehat{\Cov}(U, V)}{\sqrt{\widehat{\Var}(U) \widehat{\Var}(V)}}
$
denote the sample covariance, variance, mean, and Pearson product-moment correlation coefficient respectively. The population level covariance, variance, and correlation are denoted by $ \Cov(U, V) $,  $\Var(U) = \sigma_U^2 $, and $ \rho(U, V) $, respectively. 




\subsection{Learning setting}
Throughout this paper, we operate under a standard regression framework where the statistical model is $Y = g(\bX)+\varepsilon $, the vector of predictor variables is
$\bX = (X_{1},\ldots,X_{p})^\top$, and $\varepsilon$ is statistical noise. While our results are valid for general nonparametric models, for conceptual simplicity, the canonical model class we have in mind is {\it additive models}, i.e., 
\begin{equation} \label{additive}
g(X_{1},\ldots,X_{p}) = g_1(X_1)+\cdots+g_p(X_p) 
\end{equation}
for some univariate component functions $g_1(\cdot),\ldots,g_p(\cdot)$. As is standard with additive modeling \citep[Section 9.1.1]{hastie2009elements}, for identifiability of the components, we assume that the $g_j(X_j)$ have population mean zero for all $j$. This model class strikes a balance between flexibility and learnability—it is more flexible than linear models, but, by giving up on modeling interaction terms, it does not suffer from the curse of dimensionality.
We observe data $ \calD = \{(\bX_1, Y_1), \dots, (\bX_n, Y_n)\} $ with the $ i^{\text{th}}$ sample point $(\bX_i, Y_i) = (X_{i1},\ldots,X_{ip},Y_i)$ drawn independently from the model above. Note that with this notation, $X_{ij}$ are i.i.d. instances of the random variable $X_j$. 
We assume that the regression function $g(\cdot)$ depends only on a small subset of the variables $ \{X_j\}_{j\in\calS} $, which we call {\it relevant variables} with {\it support} $\calS \subset \{1,\ldots,p\}$ and {\it sparsity level} $s = |\calS| \ll p$. Equivalently, $g_j(\cdot)$ is identically zero for the {\it irrelevant variables} $\{X_j\}_{j\in\calS^c}$. 
In this paper, we consider the \emph{variable ranking} problem, defined here as ranking the variables so that the top $ s $ coincide with $\calS$ with high probability. As a corollary, this will enable us to solve the {\it variable selection} problem, namely, determining the subset $\calS$. 
We pay special attention to the high-dimensional regime where $ p \gg n$. In fact, in Section \ref{main results} we will provide conditions under which consistent variable selection occurs even when $ p = \exp(o(n)) $.

\subsection{Prior art}
The conventional approach to marginal screening for nonparametric additive models
is to directly estimate either the nonparametric components $ g_j(X_j) $ or
the marginal projections
$$ 
f_j(X_j) \coloneqq \mathbb{E}[Y | X_j],
$$ 
with the ultimate goal of studying their variances 
or their correlations with the response variable.\footnote{Note that $ f_j(X_j) $ need not be the same as $g_j(X_j)$ unless, for instance, the predictor variables are independent and the noise is independent and mean zero.} To accomplish this, SIS, NIS, and \citep{hall2009correlation} rank the variables according to the correlations between the response values and least squares fits over a univariate model class $ \calH $, i.e.,
\begin{equation} \label{emp marginal}
\widehat\rho\,(\hat h(X_j), Y), \quad \text{where} \quad \hat h(\cdot) \in \argmin_{h(\cdot) \in \calH}\frac{1}{n}\sum_{i=1}^n (Y_i - h(X_{ij}))^2.
\end{equation}
The model class $ \calH $ is chosen to make the above optimization tractable, while at the same time, be sufficiently rich in order to approximate $ f_j(X_j) $. For example, if $ \calH $ is the space of polynomial splines of a fixed degree, then $ \hat h(\cdot) $ in \eqref{emp marginal} can be computed efficiently via a truncated B-spline basis expansion
\begin{equation} \label{eq:spline}
\beta_1 \Psi_1(X_j) + \beta_2 \Psi_2(X_j) + \cdots + \beta_{d_n}\Psi_{d_n}(X_j),
\end{equation}
as is done with NIS. Similarly, SIS takes $ \calH $ to be the family of linear functions in a single variable.
Complementary methods that aim to directly estimate each $ g_j(X_j) $ include SPAM, which uses a smooth back-fitting algorithm with soft-thresholding, and \citep{huang2010variable}, which combines adaptive group Lasso with truncated B-spline basis expansions.

As we shall see, SDI is equivalent to ranking the variables according to \eqref{emp marginal} when $ \calH $ consists of the collection of all decision stumps in $ X_j $ of the form
\begin{equation} \label{eq:stumps} 
\beta_1 \,\mathbf{1}(X_j \leq z)  + \beta_2 \, \mathbf{1}(X_j > z), \qquad z, \beta_1, \beta_2 \in \mathbb{R}.
\end{equation}
Unlike the previous expressive models such a polynomial splines \eqref{eq:spline}, a one-level decision tree, realized by the model \eqref{eq:stumps} above, severely underfits the data and would therefore be ill-advised for estimating $ f_j(X_j) $, if that were the goal. Remarkably, we show that this rigidity does not hinder SDI for variable selection. What redeems SDI 
is that, unlike the aforementioned methods that are based on linear estimators, decision stumps \eqref{eq:stumps} are \emph{nonlinear} since the splits points can depend on the response data. These model nonlinearities equip SDI with the ability to discover nonlinear patterns in the data, despite its poor approximation capabilities.

Finally, we mention that one benefit of such a simple model as \eqref{eq:stumps} is that it is completely free of tuning parameters. In contrast, other methods such as the ones listed here require careful calibration of, for example, variable bandwidth smoothers or the number of terms in the basis expansions (e.g., SPAM and NIS).

\subsection{The SDI algorithm} \label{sdi}
In this section, we provide the details for the SDI algorithm. We first provide some high-level intuition.

In order to determine whether, say, $X_j$ is relevant for predicting $ Y $ from $ \bX $, it is natural to first divide the data into two groups according to whether $X_j$ is above or below some predetermined cutoff value and then assess how much the variance in $Y$ changes before and after this division. A small change in the variability indicates a weak or nonexistent dependence of $Y$ on $ X_j $; whereas, a moderate to large change indicates heterogeneity in $Y$ across different values of $X_j$. As we now explain, this is precisely what SDI does when the predetermined cutoff value is sought by a least squares fit over all possible ways of dividing the data.

Let $z$ be a candidate split for a variable $X_j$ that divides the response data $ Y $ into left and right daughter nodes based on the $j^{\text{th}}$ variable. Define the mean of the left daughter node to be $ \overline Y_L = \frac{1}{N_L}\sum_{i: X_{ij} \leq z} Y_i $ and the mean of the right daughter node to be $ \overline Y_R = \frac{1}{N_R}\sum_{i: X_{ij} > z}Y_i $ and let the size of the left and right daughter nodes be $ N_L = \#\{i : X_{ij} \leq z\} $ and $ N_R = \#\{i : X_{ij}>  z\} $, respectively. For CART regression trees, the \emph{impurity reduction} (or variance reduction) in the response variable $Y$ from choosing the split point $z$ for the $j^{\text{th}}$ variable is defined to be
\begin{equation} \label{impurity}
\widehat\Delta(z; X_j,Y) = \frac{1}{n}\sum_{i=1}^n (Y_i - \overline Y)^2 -  \frac{1}{n}\sum_{i: X_{ij} \le z}(Y_i - \overline Y_L )^2 - \frac{1}{n}\sum_{i: X_{ij} > z}(Y_i - \overline Y_R )^2.
\end{equation}
For each variable $X_j$, we choose a split point $\hat z_j$ that maximizes the impurity reduction
$$
\hat z_j \in \argmax_z \widehat\Delta(z; X_j, Y),
$$
and for convenience, we denote the largest impurity reduction by
$$
\widehat \Delta(X_j, Y) \coloneqq \widehat\Delta(\hat z_j; X_j, Y).
\footnote{The impurity reduction can be highly non-concave 
	and therefore the optimal split point need not be unique. In such cases, we break ties arbitrarily.}
$$
We then rank the variables commensurate with the sizes of their impurity reductions, i.e., we obtain a ranking $ (\hat{\jmath}_1, \dots, \hat{\jmath}_p) $ where $ \widehat \Delta(X_{\hat{\jmath}_1},Y) \geq \cdots \geq \widehat \Delta(X_{\hat{\jmath}_p},Y)  $. If desired, these rankings can be repurposed to perform model selection (e.g., an estimate $ \widehat \calS $ of $ \calS $), as we now explain.
If we are given the sparsity level $ s $ in advance, we can choose $ \widehat\calS $ to be the top $ s $ of these ranked variables; otherwise, we must find a data-driven choice of how many variables to include.
Equivalently, the latter case is realized by choosing $\mathcal{\widehat{S}}$ to be the indices $j$ for which $\widehat \Delta(X_j,Y) \geq \gamma_n$, where $\gamma_n $ is a threshold to be described in Section \ref{gamma_n}. This is of course a delicate task as including too many variables may lead to more false positives.

By \citep[Section 9.3]{breiman1984}, using a sum of squares decomposition, we can rewrite the impurity reduction \eqref{impurity} as
\begin{equation} \label{LR}
\widehat\Delta(z;X_j, Y) = \frac{N_L}{n}\frac{N_R}{n} (\overline{Y}_L - \overline{Y}_R)^2,
\end{equation}
which allows us to compute the largest impurity reductions for all possible split points with a single pass over the data by first ordering the data along $ X_j $ and then updating $\overline{Y}_L $ and $\overline{Y}_R$ in an online fashion. This alternative expression for the objective function facilitates its rapid evaluation and \emph{exact} optimization. Pseudocode for SDI is given in Algorithm \ref{alg:SDI}.


\begin{algorithm}[H]
	\SetAlgoLined
	\caption{Single-level Decrease in Impurity (SDI)}
	\label{alg:SDI}
	\KwIn{Dataset $\mathcal{D} = \{(X_{i1},\ldots,X_{ip},Y_i)\}_{i=1}^n$}
	\DontPrintSemicolon
	\For{$j=1,\ldots,p$}{
		Relabel $\mathcal{D}$ with $ X_{ij} $ sorted in increasing order\;
		Initialize $ \overline Y_L = 0 $, $ \overline Y_R = \overline Y $, $ \widehat \Delta(X_j,Y) = 0 $\;
		\For{$i=1,\ldots,n-1$}{
			Update $ \overline Y_L \leftarrow \frac{i-1}{i}\overline Y_L + \frac{Y_{i}}{i}, \quad \overline Y_R \leftarrow \frac{n-i+1}{n-i}\overline Y_R - \frac{Y_{i}}{n-i} $ \;
			Compute $\widehat \Delta(X_{ij}; X_j,Y) = \frac{i}{n}(1-\frac{i}{n})(\overline Y_L - \overline Y_R)^2$\;
			\If{$\widehat\Delta(X_{ij}; X_j,Y) > \widehat \Delta(X_j,Y)$}{
			Update $ \widehat \Delta(X_j,Y) \leftarrow \widehat\Delta(X_{ij}; X_j,Y) $
			}
		}
	}
\KwOut{Ranking $ (\hat{\jmath}_1, \dots, \hat{\jmath}_p) $ such that $ \widehat \Delta(X_{\hat{\jmath}_1},Y) \geq \cdots \geq \widehat \Delta(X_{\hat{\jmath}_p},Y)  $}
\end{algorithm}


\subsection{Data-driven choices of $\gamma_n$} \label{gamma_n}


As briefly mentioned in Section \ref{sdi}, if we do not know the sparsity level $s$ in advance, we can instead use a data-driven threshold $\gamma_n$ to modulate the number of selected variables. Here we propose two data-driven methods to determine the threshold $\gamma_n$.

\paragraph{\it Permutation method} The first thresholding method is similar to the Iterative Nonparametric Independence Screening (INIS) method based on NIS  \citep[Section 4]{fan2011nonparametric}. The first step is to choose a random permutation $\pi : \{1, \dots, n \} \to \{1, \dots, n \}$ of the data to decouple $\bX_i$ from $Y_i$ so that the new dataset $\mathcal{D}^\pi = \{(\bX_{\pi(i)},Y_i)\}$ follows a null model. Then we choose the threshold $\gamma_n$ to be the 
maximum of the impurity reductions
$\widehat\Delta(X_j,Y;\mathcal{D}^\pi)$ 
 over all $ j $
based on the dataset $\mathcal{D}^\pi$. We can also generate $T$ different permutations $\pi$ and take the maximum of $ \widehat\Delta(X_j,Y;\mathcal{D}^\pi) $ over all such permuted datasets to get a more significant threshold, i.e., $\gamma_n=\max_{j,\,\pi} \widehat{\Delta}(X_j,Y; \mathcal{D}^{\pi})$. 
With $ \gamma_n $ selected in this way, SDI will then output the variable indices $\widehat \calS$ consisting of the indices $j$ for which the original impurity reductions $\widehat\Delta(X_j,Y;\mathcal{D})$
are at least $\gamma_n$.
Interestingly, this method has parallels to MDA in that we permute the data values of a given variable and calculate the resulting change in the quality of the fit. 	

\paragraph{\it Elbow method} One problem with the permutation method is that it performs poorly when there is correlation between relevant and irrelevant variables. This is because, in the decoupled dataset $\mathcal{D}^\pi$, there is essentially no correlation between $Y$ and any variable; whereas in the original dataset, $Y$ may be correlated with some of the irrelevant variables.
In this case, the threshold $\gamma_n$ will be too small and the selected set of variables will include many irrelevant variables (false positives). 
An alternative is to employ visual inspection via
the \emph{elbow method}, which is typically used to determine the number of clusters in cluster analysis. Here we plot the largest impurity reductions $\widehat\Delta(X_j, Y)$ in decreasing order and search for an ``elbow'' in the curve. We then let $\widehat \calS$ consist of the variable indices that come before
the elbow. Instead of choosing the cutoff point visually, we can also automate the process by clustering the impurity reductions with, for example, a two-component
Gaussian mixture model. Both of these implementations are generally more robust to correlation between relevant and irrelevant variables than the permutation method. 

We illustrate the empirical performance of both the permutation method and the elbow method in Section \ref{experiments}.

\subsection{Computational issues}
We now briefly discuss the computational complexity of Algorithm \ref{alg:SDI}—or equivalently—the computational complexity of growing a single-level CART decision tree.
For each variable $X_j$, we first sort the input data along $ X_j $ with $ \calO(n\log n) $ operations. We then evaluate the decrease in impurity along $ n $ data points (as done in the nested for-loop of Algorithm \ref{alg:SDI}),
and finally find the maximum among these $ n $ values (as done in the nested if-statement of Algorithm \ref{alg:SDI}), all with $ \calO(n) $ operations. 
Thus, the total number of calculations for all of the $ p $ variables is $ \calO(pn\log(n)) $. 
This is only slightly worse than the complexity of SIS for linear models 
$\calO(pn) $, comparable to NIS based on the complexity of fitting B-splines, and favorable to that of Lasso or stepwise regression
$ \calO(p^3+p^2n) $, especially when $ p $ is large \citep{efron2004lars, hastie2009elements}. While approximate methods like coordinate descent for Lasso can reduce the complexity to $ \calO(pn) $ at each iteration, their convergence properties are unclear. Thus, as SPAM is a generalization of Lasso for nonparametric additive models, its implementation (via a functional version of coordinate descent for Lasso) may be similarly expensive.


\subsection{Interpretations of SDI} \label{interpret}

In this section we outline two interpretations of SDI. 

\paragraph{\textit{Interpretation 1}} Our first interpretation of SDI is in terms of the sample correlation between the response and a decision stump. To see this, denote the decision stump that splits $X_j $ at $ z $ by 
$$ \widetilde Y(X_j) \coloneqq \overline Y_L \,\mathbf{1}(X_j \leq z)  + \overline Y_R \, \mathbf{1}(X_j > z) $$ 
and one at an optimal split value $ \hat z_j $ by 
$$\widehat Y(X_j) \coloneqq \overline Y_L \,\mathbf{1}(X_j \leq \hat z_j)  + \overline Y_R \, \mathbf{1}(X_j > \hat z_j).$$ 
Note that $ \widehat Y(X_j) $ equivalently minimizes the marginal sum of squares \eqref{emp marginal} over the collection of all decision stumps \eqref{eq:stumps}.
Next, by Lemma A.1 in \citep{klusowski2020sparse}, we have:
\begin{equation} \label{impurity corr}
\begin{aligned}
\widehat \Delta(z, X_j, Y) & =  \widehat{\Var}(Y) \times \widehat\rho^{\,2}(\widetilde Y(X_j), Y), \quad \text{and}\\
 \widehat\rho^{\,2}(\widetilde Y(X_j), Y) & = 1 -  \frac{\frac{1}{n}\sum_{i=1}^n(Y_i - \widetilde Y(X_{ij}) )^2}{\frac{1}{n}\sum_{i=1}^n (Y_i - \overline Y)^2},
 \end{aligned}
\end{equation}
where 
$$
\widehat\rho\,(\widetilde Y(X_j), Y) \coloneqq \frac{\frac{1}{n} \sum_{i=1}^n (\widetilde Y(X_{ij}) - \overline Y)(Y_i - \overline{Y})}{\sqrt{\frac{1}{n} \sum_{i=1}^n (\widetilde Y(X_{ij}) - \overline Y)^2 \times \frac{1}{n} \sum_{i=1}^n (Y_i - \overline Y)^2 } } \geq 0
$$
is the Pearson product-moment sample correlation coefficient between the data $ Y $ and decision stump $ \widetilde Y(X_j)$. In other words, we see from \eqref{impurity corr} that an optimal split point $ \hat z_j $ is chosen to maximize the Pearson sample correlation between the data $ Y $ and decision stump $ \widetilde Y(X_j)$. This reveals that SDI is, at its heart, a correlation ranking method, in the same spirit as SIS, NIS, and \citep{hall2009correlation} via \eqref{emp marginal}. In fact, as we shall see in Section \ref{preliminaries}, SDI is for all intents and purposes also similar to ranking the variables according to the correlation between the response data and
the marginal projections $ f_j(X_j) $.

\sloppy
Like $r^2$ for linear models, \eqref{impurity corr} reveals that the squared sample correlation $ \widehat\rho^{\,2}(\widetilde Y(X_j), Y) $ equals the \emph{coefficient of determination} $ R^2 $, i.e., the fraction of variance in $ Y $ explained by a decision stump $ \widetilde Y(X_j) $ in $ X_j $.\footnote{However, unlike linear models, for this relationship to be true, the decision stump $ \widetilde Y(X_j) $ need not necessarily be a least squares fit, i.e., $ \widehat Y(X_j) $.} 
Thus, SDI is also equivalent to ranking the variables according to the goodness-of-fit for decision stumps of each variable. In fact, the equivalence between ranking with correlations and ranking with squared error goodness-of-fit is a ubiquitous trait among most models \citep[Theorem 1]{hall2009correlation}. 
\paragraph{\textit{Interpretation 2}}   The other interpretation is in terms of the aforementioned MDI importance measure.
Recall the definition of MDI in Section \ref{mdi}, i.e.,
for an individual decision tree $ T $, the MDI for $ X_j $ is the total reduction in impurity attributed to the splitting variable $ X_j $.
More succinctly, the MDI of $ T $ for $ X_j $ equals 
\begin{equation}\label{mdi} \sum_{\bt}\frac{N(\bt)}{n} \widehat\Delta(X_j, Y | \bX \in \bt),
\end{equation}
where the sum extends over all non-terminal nodes $ \bt $ in which $X_j $ was split, $ N(\bt) $ is the number of sample points in $ \bt $, and $ \widehat\Delta(X_j, Y | \bX \in \bt) $ is the largest reduction in impurity for samples in $ \bt $. Note that if $ T $ is a decision stump with split along $ X_j $, then \eqref{mdi} equals $ \widehat\Delta(X_j, Y) $, the largest reduction in impurity at the root node. Because a split at the root node captures the main effects of the model, $ \widehat\Delta(X_j, Y) $ can be seen as a first order approximation of \eqref{mdi} in which higher order interaction effects are ignored.

\section{Preliminaries} \label{preliminaries}

Our first lemma, developed by the first author in recent work, reveals the crucial role that optimization (of a nonlinear model) plays in assessing whether a particular variable is relevant or irrelevant—by relating the impurity reduction for a particular variable $X_j$ to the sample correlation between the response variable $ Y $ and {\it any} function of $ X_j $. This lemma also highlights a key departure from other approaches in past decision tree literature
that do not consider splits that depend on \emph{both} input and output data (see, for example, DSTUMP \citep{kazemitabar2017variable}).


In order to state the lemma, we will need to introduce the concept of stationary intervals. 
We define a \emph{stationary interval} of a univariate function $ h(\cdot) $ to be a maximal interval $I$ such that $h(I) = c$, where $c$ is a local extremum of $h(\cdot)$ ($ I $ is maximal in the sense that there does not exist an interval $I'$ such that $I \subset I'$ and $h(I') = c$). 
In particular, note that a monotone function does not have any stationary intervals. 

\begin{lemma}[Lemma A.4, Supplementary Material in \citep{klusowski2020sparse}]
	\label{corr}
	Almost surely, uniformly over all functions $ h(\cdot) $ of $ X_j $ that have at most $ M $ stationary intervals, we have
	\begin{equation} \label{cor lower}
	\widehat\Delta(X_j, Y) \geq \frac{1}{D^{-1}Mn + \log(2n)+1} \times\widehat{\Cov}^2\Bigg(\frac{ h(X_j)}{\sqrt{\widehat{\Var}(h(X_j))}}, Y\Bigg),
	\end{equation}
	where $ D \geq 1 $ is the smallest number of data points in a stationary interval of $ h(\cdot) $ that contains at least one data point.\footnote{More precisely, if $ I_1, \dots, I_M $ are the stationary intervals of $ h(\cdot) $ and $ D_k = \#\{X_{ij} \in I_k \} $, then $ D = \min_k \{ D_k : D_k \geq 1 \} $.}
\begin{remark}
Note that $ M $ can also be thought of as the number of times $ h(\cdot) $ changes from strictly increasing to decreasing (or vice versa).
\end{remark}
\begin{remark}
The bound \eqref{cor lower} is tight (up to universal constant factors) when $ M = 0 $, since $ \widehat\Delta(X_j, Y) = 1 $ and $ \widehat{\text{Var}}(Y) \asymp \log(n) $ when  $ X_{1j} \leq \cdots \leq X_{nj} $, $ h(X_{ij}) = Y_i $, and
$$ Y_i = \sqrt{(i-1)(n-i+1)}-\sqrt{i(n-i)}. $$
\end{remark}
\end{lemma}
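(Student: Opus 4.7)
I would rewrite $\widehat{\Cov}\bigl(h(X_j)/\sqrt{\widehat{\Var}(h(X_j))}, Y\bigr)$ in terms of the partial-sum process of $Y$ after sorting by $X_j$, and compare it to the maximum statistic that the identity \eqref{LR} exposes. First, I would reduce to the centered case $\bar h = \bar Y = 0$, since both sides of \eqref{cor lower} are invariant under shifts of $h$ and $Y$. Sort the sample so that $X_{(1)j}\leq\cdots\leq X_{(n)j}$ and set $H_i := h(X_{(i)j})$, $T_i := Y_{(i)}$, $S_k := \sum_{i\leq k} T_i$ (so $S_0=S_n=0$). From \eqref{LR}, $A := \widehat{\Delta}(X_j, Y) = \max_{1\leq k\leq n-1} S_k^2/(k(n-k))$, and Abel summation gives
\[
n\,\widehat{\Cov}(h,Y) \;=\; \sum_{i=1}^n H_i T_i \;=\; -\sum_{k=1}^{n-1}(H_{k+1}-H_k)\,S_k.
\]
Combined with the pointwise bound $|S_k|\leq \sqrt{A\,k(n-k)}$, the target inequality \eqref{cor lower} reduces to showing
\[
\Bigl(\sum_{k=1}^{n-1}|H_{k+1}-H_k|\sqrt{k(n-k)}\Bigr)^{\!2} \;\leq\; n\,\bigl(D^{-1}Mn+\log(2n)+1\bigr)\sum_{i=1}^n H_i^2.
\]

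I would split $\{1,\dots,n-1\}$ into bonds lying strictly inside a monotone run of $(H_i)$ and bonds that enter or leave one of the at most $M$ stationary intervals of $h$. On monotone runs the increments $d_k := H_{k+1}-H_k$ are sign-constant; the $\log(2n)+1$ factor would then come from a dyadic decomposition $\mathcal J_\ell = \{k : 2^{\ell-1}\leq \min(k,n-k)<2^\ell\}$ for $1\leq \ell \leq \lceil\log_2 n\rceil$: on each band $k(n-k)\asymp 2^\ell n$, and a two-layer Cauchy--Schwarz---within each band and then across the $\lceil\log_2 n\rceil$ bands---together with the monotonicity of $(H_i)$ (which forces the band-wise sum of $|d_k|$ to equal the oscillation of $h$ on that band, and thus to be controlled by $\sqrt{\widehat{\Var}(h)}$) leaves only a logarithmic penalty. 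On the stationary side, $d_k=0$ except at the at most $2M$ boundary bonds, where, if $c$ denotes the constant value of $h$ on the adjacent stationary interval, the definition of $D$ forces $Dc^2 \leq \sum_i H_i^2$, so $|d_k|^2\leq 4\sum_i H_i^2/D$ (after the usual $(a-b)^2\leq 2a^2+2b^2$); combined with $|S_k|^2\leq An^2/4$ and summed over the $\leq 2M$ such bonds, this produces the $D^{-1}Mn$ contribution to the denominator after squaring.

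The main obstacle is the dyadic argument on the monotone part: a single, global Cauchy--Schwarz applied to $\sum_k |d_k|\sqrt{k(n-k)}$ yields only a polynomial-in-$n$ factor, and it is the careful balance between within-band and cross-band Cauchy--Schwarz---crucially using that $(d_k)$ is sign-constant on each monotone run, so that its band-wise sums telescope to the oscillation of $h$ on that band and are therefore absorbed into $\sqrt{\widehat{\Var}(h)}$---that buys back all but a logarithmic factor. Once the two contributions are bounded, adding them, dividing by $n^2\,\widehat{\Var}(h)$, and rearranging with $A = \widehat{\Delta}(X_j,Y)$ yields \eqref{cor lower}.
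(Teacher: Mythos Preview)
Your reduction via Abel summation is correct and is essentially the discrete form of the paper's own argument: the paper introduces an empirical prior $d\Pi(z)\propto |h'(z)|\sqrt{N_L(z)N_R(z)}\,dz$ and uses Jensen's inequality together with integration by parts, which in discrete terms is exactly your identity $n\,\widehat{\Cov}(h,Y)=-\sum_k(H_{k+1}-H_k)S_k$ combined with $|S_k|\le\sqrt{A\,k(n-k)}$. Both routes reduce to the same target inequality
\[
\Bigl(\sum_{k}|d_k|\sqrt{k(n-k)}\Bigr)^{2}\le n\bigl(D^{-1}Mn+\log(2n)+1\bigr)\sum_i H_i^2,
\]
and the paper then obtains the constant on the right by solving a quadratic program over all $h$ with the stated stationary-interval constraints.

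Where your plan breaks down is in attributing the $D^{-1}Mn$ contribution to the ``at most $2M$ boundary bonds''. On a bond entering a stationary interval, $|d_k|=|c-H_k|$ can be arbitrarily small, and the claimed bound $|d_k|^2\le 4\sum_i H_i^2/D$ is not justified: $Dc^2\le\sum_i H_i^2$ controls $c$, but nothing controls $H_k^2$ by $\sum_i H_i^2/D$. Even granting that bound, summing over $2M$ bonds and squaring would produce $M^2$, not $M$. The correct mechanism is different: the stationary-interval constraint bounds the \emph{values of $h$ at the local extrema} via $c_k^2\le D^{-1}\sum_i H_i^2$, and these extremal values control the oscillation of each of the $\approx M$ monotone runs. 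A Cauchy--Schwarz \emph{across the monotone runs} (not a sum over boundary bonds) then converts $\sum_{\text{runs}}|\text{osc}|$ into $\sqrt{M\sum_{\text{runs}}\text{osc}^2}\lesssim\sqrt{M\sum_i H_i^2/D}$, giving the linear $M$. Relatedly, your dyadic argument as written only treats a single monotone run (the case $M=0$); for general $M$, a monotone run lying entirely near $k\approx n/2$ gets no help from the dyadic bands, and it is precisely the extremum-value mechanism above that must cover it.
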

\begin{proof}[Proof sketch of Lemma \ref{corr}] 
For self-containment, we sketch the proof when $ h(\cdot) $ is differentiable. The essential idea is to construct an empirical prior $ \Pi $ on the split points $z$ and lower bound $ \widehat\Delta(X_j, Y) $ by
	$$
	\int \widehat\Delta(z; X_j, Y)d\Pi(z).
	$$
Recall from Section \ref{sdi} that $ N_L = N_L(z) $ and $ N_R = N_R(z) $ are the number of samples in the left and right daughter nodes, respectively, if the $j^{\text{th}}$ variable is split at $ z $. The special prior we choose has density
	$$
	\frac{d\Pi(z)}{dz} = \frac{|h'(z)|\sqrt{N_L(z)N_R(z)}}{\int |h'(z')|\sqrt{N_L(z')N_R(z')}dz'},
	$$
with support between the minimum and maximum values of the data $ \{X_{ij}\} $.
This then yields $ \widehat\Delta(X_j, Y) \geq C(h) \times \widehat{\Cov}^2\bigg(\frac{ h(X_j)}{\sqrt{\widehat{\Var}(h(X_j))}}, Y\bigg) $. The factor $ C(h) $ can be minimized (by solving a simple quadratic program) over all functions $ h(\cdot) $ 
under the constraint that they have at most $M$ stationary intervals containing at least $ D $ data points,
yielding the desired result \eqref{cor lower}.
	We direct the reader to \citep[Lemma A.4, Supplementary Material]{klusowski2020sparse} 
	for the full proof.
\end{proof}
Ignoring the factor $ (D^{-1}Mn + \log(2n)+1)^{-1} $ in \eqref{cor lower} and focusing only on the squared sample covariance term,
note that choosing 
$h(\cdot) $ to be the marginal projection $ f_j(\cdot)$, we have 
$$
\widehat{\Cov}^2\Bigg(\frac{ f_j(X_j)}{\sqrt{\widehat{\Var}(f_j(X_j))}}, Y\Bigg) \approx \Cov^2\Bigg(\frac{ f_j(X_j)}{\sqrt{\Var(f_j(X_j))}}, Y\Bigg) = \Var(f_j(X_j)),
$$

where the last equality can be deduced from the fact that the marginal projection $ f_j(X_j) $ is orthogonal to the residual $ Y - f_j(X_j) $.
Thus, in an ideal setting, Lemma \ref{corr} enables us to asymptotically lower bound $ \widehat\Delta(X_j, Y) $ by a multiple of the variance of the marginal projections—which can then be used to screen for important variables and control the number of false negatives.



To summarize, the previous lemma shows that $ \widehat\Delta(X_j, Y) $ is large for variables $ X_j $ such that $ f_j(X_j) $ is strongly correlated with $ Y $—or equivalently—variables that have large signals in terms of the variance of the marginal projection. Conversely, our next lemma shows that $ \widehat\Delta(X_j, Y) $ is with high probability not greater than the variance of the marginal projection. 
A special instance of this lemma, namely, when $ Y $ is independent of $ X_j $, was stated in \citep[Lemma 1]{li2019debiased} and serves as the inspiration for our proof. 
 Due to space constraints, we include the proof in Appendix \ref{new-delta-tail}.



\begin{lemma}\label{delta-tail}
	Suppose that $Z_j = Y-f_j(X_j)$ is conditionally sub-Gaussian given $ X_j $, with variance parameter $\sigma_{Z_j}^2$, i.e., $ \mathbb{E}[\exp(\lambda Z_j) | X_j] \leq \exp(\lambda^2\sigma^2_{Z_j}/2) $ for all $ \lambda \in \mathbb{R} $. With probability at least $ 1- 4n\exp(-n\xi^2/(12\sigma_{Z_j}^2)) $, 
$$
\widehat \Delta(X_j, Y) \leq 3\widehat\Var(f_j(X_j)) + \xi^2.
$$
\end{lemma}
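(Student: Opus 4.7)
The plan is to follow the template of Lemma 1 in \citep{li2019debiased} (which treats the null case $f_j \equiv 0$) while tracking an extra ``signal'' contribution coming from the marginal projection. Write $Y_i = f_j(X_{ij}) + \zeta_i$ with $\zeta_i := Y_i - f_j(X_{ij})$, so that $\mathbb{E}[\zeta_i \mid X_{ij}] = 0$ by construction of $f_j$, and, via the tower property applied to the hypothesis $\mathbb{E}[e^{\lambda Y_i} \mid \bX_i] \leq e^{\lambda^2 \sigma_Y^2/2}$, also $\mathbb{E}[e^{\lambda \zeta_i} \mid X_{ij}] \leq e^{\lambda^2 \sigma_Y^2/2}$. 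For any candidate split $z$, the pointwise identity
\begin{equation*}
\overline{Y}_L - \overline{Y}_R = (\overline f_L - \overline f_R) + (\overline{\zeta}_L - \overline{\zeta}_R),
\end{equation*}
with $\overline f_L := \frac{1}{N_L}\sum_{i: X_{ij} \leq z} f_j(X_{ij})$ and $\overline f_R$ defined analogously, combined with the elementary inequalities $(a+b)^2 \leq 3 a^2 + \tfrac{3}{2} b^2$ and $(u-v)^2 \leq 2 u^2 + 2 v^2$, yields
\begin{equation*}
(\overline{Y}_L - \overline{Y}_R)^2 \;\leq\; 3\, (\overline f_L - \overline f_R)^2 + 3\, \overline{\zeta}_L^{\,2} + 3\, \overline{\zeta}_R^{\,2}.
\end{equation*}

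Multiplying through by $N_L N_R / n^2$ and using $N_L/n, N_R/n \leq 1$ gives the pointwise bound
\begin{equation*}
\widehat{\Delta}(z; X_j, Y) \;\leq\; 3\, \widehat{\Delta}(z; X_j, f_j) + \frac{3 N_L}{n}\,\overline{\zeta}_L^{\,2} + \frac{3 N_R}{n}\,\overline{\zeta}_R^{\,2}.
\end{equation*}
The signal piece is dispatched by the standard between-/within-group variance identity: $\widehat{\Delta}(z; X_j, f_j) \leq \widehat{\Var}(f_j(X_j))$ uniformly in $z$. Hence it suffices to show that, with probability at least $1 - 4n\exp(-n\xi^2/(12\sigma_Y^2))$, both $\tfrac{3 N_L}{n}\overline{\zeta}_L^{\,2} \leq \xi^2/2$ and $\tfrac{3 N_R}{n}\overline{\zeta}_R^{\,2} \leq \xi^2/2$ hold simultaneously for every candidate split $z$.

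To make this uniformity tractable, I would condition on the $j$th column $(X_{1j}, \dots, X_{nj})$, which freezes the at most $n-1$ candidate split locations. For each fixed split, $\overline{\zeta}_L = \frac{1}{N_L}\sum_{i: X_{ij} \leq z} \zeta_i$ is a mean-zero sum of (conditionally) independent sub-Gaussian summands, hence itself sub-Gaussian with parameter $\sigma_Y^2/N_L$. The standard one-sided tail bound then gives
\begin{equation*}
\mathbb{P}\!\left( \tfrac{3 N_L}{n}\,\overline{\zeta}_L^{\,2} > \tfrac{\xi^2}{2} \,\Big|\, X_{1j}, \dots, X_{nj} \right) = \mathbb{P}\!\left( |\overline{\zeta}_L| > \sqrt{\tfrac{n \xi^2}{6 N_L}} \,\Big|\, \cdots \right) \;\leq\; 2 \exp\!\left( -\tfrac{n \xi^2}{12 \sigma_Y^2} \right),
\end{equation*}
with a symmetric bound for $\overline{\zeta}_R$. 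Union bounding over the two events at each of at most $n$ splits produces the conditional failure probability $4n\exp(-n\xi^2/(12\sigma_Y^2))$; integrating out the conditioning preserves the same bound unconditionally.

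The main obstacle is the data-dependence of the optimizing split $\hat z_j$: because $\hat z_j$ is itself a function of the responses, sub-Gaussian concentration cannot be applied to $\widehat{\Delta}(X_j, Y) = \widehat{\Delta}(\hat z_j; X_j, Y)$ directly. Conditioning on the entire $j$th column sidesteps this by turning the candidate split set into a deterministic collection of order-statistic locations while leaving the mean-zero, independent, sub-Gaussian structure of the $\zeta_i$'s intact. A secondary technicality is the propagation of the sub-Gaussian parameter from the $Y \mid \bX$ level down to the $\zeta_i \mid X_{ij}$ level; this is handled by the tower property, with $f_j(X_{ij})$ functioning as the exact conditional-mean correction so that the effective variance parameter on the $\zeta_i$'s remains $\sigma_Y^2$.
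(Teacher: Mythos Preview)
Your proposal is correct and follows essentially the same route as the paper: decompose $\overline Y_L - \overline Y_R$ into a signal piece $\overline f_L - \overline f_R$ and two noise pieces $\overline\zeta_L,\overline\zeta_R$, bound the signal by $\widehat{\Var}(f_j(X_j))$, and control the noise terms via conditional sub-Gaussian concentration and a union bound over the $n$ candidate splits. The only cosmetic difference is that the paper reaches the signal bound through a Cauchy--Schwarz computation rather than your direct ``between-group variance $\leq$ total variance'' observation, but the two arguments are equivalent and yield identical constants.
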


In other words, Lemmas \ref{corr} and \ref{delta-tail} together imply that SDI is a proxy for the variance of the marginal projection and therefore it roughly ranks the variables accordingly, up to constant factors. 
These lemmas are a key ingredient of the proofs for model selection and may be of independent interest.

\section{SDI for linear models} \label{linear}
To connect SDI to other variable screening methods that are perhaps more familiar to the reader, we first consider a linear model with Gaussian distributed variables. We allow for any correlation structure between covariates. Recall from \eqref{impurity corr} that $ \widehat \Delta(X_j,Y)$ is equal to $ \widehat\Var(Y) $ times $\widehat\rho^{\,2}(\widehat Y(X_j),Y)$, so that SDI is equivalent to ranking by $\widehat\rho\,(\widehat Y(X_j),Y)$.
 Our first theorem shows that $ \widehat\rho\,(\widehat Y(X_j), Y) $, the sample correlation between $ Y $ and an optimal decision stump in $ X_j $, behaves roughly like the correlation between a linear model $ Y $ and a coordinate $ X_j $. 

\begin{theorem}[SDI is asymptotically equivalent to SIS] \label{sis}
	Let $ Y = \beta_1X_1 + \beta_2X_2 + \cdots + \beta_p X_p + \varepsilon $ 
	and assume that $\bX \sim \mathcal{N}(\bzero, \bSigma) $ for some positive semi-definite matrix $ \bSigma $ and $ \varepsilon \sim \mathcal{N}(0, \sigma^2) $ for some $ \sigma^2 > 0 $. Let $ \delta \in (0, 1) $. 
	 There exists a universal positive constant $C_0$ such that, with probability at least $ 1 - \frac{C_0}{\sqrt{n \delta^2 \rho^2(X_j,Y)}} \exp(-n\delta^2\rho^2(X_j,Y)/2)$,
	\begin{equation}\label{sis-lower}
	\widehat\rho\,(\widehat Y(X_j),Y) \ge \frac{(1-\delta)|\rho(X_j,Y)|}{\sqrt{\log(2n)+1}}.
	\end{equation}
        Furthermore, with probability at least $ 1 - 4n\exp(-n\delta^2/12) - 2\exp(-(n-1)/16)$,
	\begin{equation}\label{sis-upper}
	\widehat\rho\,(\widehat Y(X_j),Y)  \le 5 |\rho(X_j,Y)| + 2\delta.
	\end{equation}
\end{theorem}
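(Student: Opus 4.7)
The plan is to derive \eqref{sis-lower} as an almost direct application of Lemma \ref{corr} with the identity test function, and \eqref{sis-upper} as a consequence of Lemma \ref{delta-tail} combined with the explicit form of the marginal projection $f_j$ under joint Gaussianity. In both cases the bridge from impurity reductions back to sample correlations is the identity $\widehat\Delta(X_j, Y) = \widehat\Var(Y)\,\widehat\rho^{\,2}(\widehat Y(X_j), Y)$ recorded in \eqref{impurity corr}.

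For the lower bound, I would apply Lemma \ref{corr} with $h(x)=x$. Since the identity map is strictly monotone it has $M=0$ stationary intervals, so the denominator $D^{-1}Mn+\log(2n)+1$ collapses to $\log(2n)+1$. Combining with \eqref{impurity corr} and cancelling $\widehat\Var(Y)$ yields
$$
\widehat\rho\,(\widehat Y(X_j),Y)\;\geq\;\frac{|\widehat\rho\,(X_j,Y)|}{\sqrt{\log(2n)+1}}.
$$
It then suffices to show that $|\widehat\rho\,(X_j,Y)| \geq (1-\delta)\,|\rho(X_j,Y)|$ on the event in the theorem statement. Because $(X_j,Y)$ is jointly Gaussian, this is a classical concentration fact for the sample correlation: one reduces the one-sided deviation of $\widehat\rho\,(X_j,Y)$ to a (nearly) standard normal tail via a Fisher/delta-method decomposition (or, equivalently, by writing $Y=cX_j+Z$ with $Z\perp X_j$ and handling the resulting ratio) and invokes the bound $\bar\Phi(t)\leq \tfrac{1}{\sqrt{2\pi}\,t}e^{-t^2/2}$ with $t\asymp\sqrt{n}\,\delta\,|\rho(X_j,Y)|$, which produces precisely the $C_0/\sqrt{n\delta^2\rho^2}\,\exp(-n\delta^2\rho^2/2)$ prefactor.

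For the upper bound, under joint Gaussianity the marginal projection is linear, $f_j(X_j)=(\Cov(Y,X_j)/\Var(X_j))\,X_j$, so $\Var(f_j(X_j))=\rho^2(X_j,Y)\,\Var(Y)$ and $\widehat\Var(f_j(X_j))=\rho^2(X_j,Y)\,\Var(Y)\,\widehat\Var(X_j)/\Var(X_j)$. Applying Lemma \ref{delta-tail} with $\xi=\delta\,\sigma_Y$ gives
$$
\widehat\Delta(X_j,Y)\;\leq\;3\,\widehat\Var(f_j(X_j))+\delta^2\sigma_Y^2
$$
with probability at least $1-4n\exp(-n\delta^2/12)$, which accounts for the first exceptional term in \eqref{sis-upper}. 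Dividing by $\widehat\Var(Y)$ and invoking \eqref{impurity corr} reduces the task to controlling the two ratios $\widehat\Var(X_j)/\Var(X_j)$ and $\Var(Y)/\widehat\Var(Y)$. Each is a rescaled $\chi^2_{n-1}$ random variable, and a standard two-sided chi-square deviation inequality pins both inside a narrow constant window around $1$ off an event of probability $2\exp(-(n-1)/16)$, matching the second exceptional term. Chaining these estimates produces an inequality of the form $\widehat\rho^{\,2}(\widehat Y(X_j),Y)\leq c_1\,\rho^2(X_j,Y)+c_2\,\delta^2$ with $c_1$ close to $3$ (say $\leq 9$) and $c_2$ a small constant, and applying $\sqrt{a+b}\leq\sqrt{a}+\sqrt{b}$ then absorbs everything into the stated constants $5$ and $2$.

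The main obstacle is the lower-bound tail: what is needed is not merely exponential concentration of $\widehat\rho\,(X_j,Y)$ around $\rho(X_j,Y)$, but the sharp Gaussian-tail prefactor $1/\sqrt{n\delta^2\rho^2}$ multiplying the exponential. Under joint Gaussianity the deviations really are (approximately) normal so this prefactor is available, but one must be careful not to discard it while moving from a Chernoff-style inequality to the final statement. The upper-bound assembly is almost entirely bookkeeping with well-understood sub-Gaussian and $\chi^2$ inequalities; the only calibration point is choosing $\xi$ in Lemma \ref{delta-tail} so that the probability and the numerical constants simultaneously line up with \eqref{sis-upper}.
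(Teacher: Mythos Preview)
Your proposal is correct and follows essentially the same route as the paper: Lemma~\ref{corr} with $h(x)=x$ for \eqref{sis-lower}, Lemma~\ref{delta-tail} with $\xi=\delta\sigma_Y$ plus $\chi^2$ concentration for $\widehat\Var(X_j)$ and $\widehat\Var(Y)$ for \eqref{sis-upper}, and the identity \eqref{impurity corr} as the bridge in both directions. The one place your write-up diverges in flavor is the concentration of $\widehat\rho\,(X_j,Y)$: you propose a Fisher/delta-method reduction to a standard normal tail and then the Mills-ratio bound, whereas the paper invokes Hotelling's classical asymptotic for the sample-correlation tail directly (Equation~(44) of \citep{hotelling1953correlation}), which already packages the $(n\delta^2\rho_j^2)^{-1/2}\exp(-n\delta^2\rho_j^2/2)$ shape and the $1+\calO(n^{-1})$ error into a single statement and makes the universal $C_0$ immediate.
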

\begin{proof}[Proof sketch of Theorem \ref{sis}]
	We only sketch the proof due to space constraints, but a more complete version is provided in Appendix \ref{correlated predictors}.
	
	The first step in proving the lower bound \eqref{sis-lower} is to apply Lemma \ref{corr} with $h(X_j) = X_j$ (a monotone function) to see that
\begin{equation} 
\label{high-prob}
\widehat\Delta(X_j, Y) \geq \frac{\widehat \Var(Y)}{\log(2n)+1} \times\widehat{\rho}^{\;2}(X_j, Y),
\end{equation}
since $ M = 0 $.
Next, we can apply asymptotic tail bounds for Pearson's sample correlation coefficient $ \widehat{\rho}\,(X_j, Y) $ between two correlated Gaussian distributions \citep{hotelling1953correlation} to show that with high probability, $|\widehat{\rho}\,(X_j, Y)| \ge (1-\delta) |\rho(X_j,Y)|$. Finally, we divide \eqref{high-prob} by $ \widehat \Var(Y) $, use \eqref{impurity corr}, and take square roots
to complete the proof of the high probability lower bound \eqref{sis-lower}.

To prove the upper bound \eqref{sis-upper}, notice that since $ X_j $ and $Y$ are jointly Gaussian with mean zero, we have $ f_j(X_j) = \rho_j\frac{\sigma_Y}{\sigma_{X_j}}X_j $, where $ \rho_j = \rho(X_j, Y) $. Thus, by Lemma \ref{delta-tail} with $ \sigma^2_{Z_j} = (1-\rho^2_j)\sigma^2_Y $ and $ \xi^2 = (1-\rho^2_j)\sigma^2_Y \delta^2$, with probability at least $ 1- 4n\exp(-n\delta^2/12) $,
\begin{equation}\label{Delta-additive}
\begin{aligned}
\widehat \Delta(X_j, Y) & \leq 3\widehat\Var(f_j(X_j)) + \delta^2(1-\rho^2_j)\sigma_Y^2 \\ & = 3\rho^2_j\frac{\sigma^2_Y}{\sigma^2_{X_j}}\widehat\Var(X_j) + \delta^2(1-\rho^2_j)\sigma_Y^2.
\end{aligned}
\end{equation}

We further upper bound \eqref{Delta-additive} by obtaining high probability upper and lower bounds, respectively, for $ \widehat{\Var}(X_j)$ and $\widehat{\Var}(Y)$ in terms of $ \sigma^2_{X_j} $ and $ \sigma^2_Y $, with a standard chi-squared concentration bound, per the Gaussian assumption. This yields that 
with high probability,
\begin{equation}\label{Delta-additive2}
\widehat \Delta(X_j, Y) \lesssim \rho^2_j\widehat{\Var}(Y) + \delta^2\widehat{\Var}(Y).
\end{equation}
Finally, dividing both sides of \eqref{Delta-additive2}
by $\widehat{\Var}(Y)$, using \eqref{impurity corr}, and taking square roots proves \eqref{sis-upper}.
\end{proof}



Theorem \ref{sis} shows that with high-probability, SDI is asymptotically equivalent (up to logarithmic factors in the sample size) to SIS for linear models in that it ranks the magnitudes 
 of the marginal 
sample correlations between a variable and the model, i.e., $\widehat\rho\,(X_j, Y) \approx \rho(X_j, Y) $. As a further parallel with decision stumps (see Section \ref{interpret}), the square of the sample correlation, $ \widehat\rho^{\,2}(X_j, Y) $, is also equal to the coefficient of determination $r^2 $ for the least squares linear fit of $ Y $ on $ X_j $. 
We confirm the similarity between SDI and SIS empirically in Section \ref{experiments}.

One corollary of Theorem \ref{sis} is that, like SIS, SDI also enjoys the sure screening property, under the same assumptions as \citep[Conditions 1-4]{fan2008sis}, which include mild conditions on the eigenvalues of the design covariance matrices and minimum signals of the parameters $\beta_j$. 
Similarly, like SIS, SDI can also be paired with lower dimensional variable selection methods such as Lasso or SCAD \citep{antoniadis2001scad} for a complete variable selection algorithm  in the correlated linear model case. 

On the other hand, SDI, a nonlinear method, applies to broader contexts far beyond the rigidity of linear models. 
In the next section, we will investigate how SDI performs for general nonparametric models with additional assumptions on the distribution of the variables.

\section{SDI for nonparametric models} \label{general}

In this section, we establish the variable ranking and selection consistency properties of SDI for general nonparametric models; that is, we show that for Algorithm \ref{alg:SDI}, we have $\BP(\widehat{\calS} = \mathcal{\calS}) \to 1$ as $n\to \infty$. We describe the assumptions needed in Section \ref{assumptions} and outline the main consistency results and their proofs in Section \ref{main results}. Finally, we describe how to modify the main results for binary classification in Section \ref{classification}.

Although our approach differs substantively, to facilitate easy comparisons with other marginal screening methods, our framework and assumptions will be similar. 
As mentioned earlier, SDI is based on a more parsimonious but significantly more biased model fit than those than underpin conventional methods. As we shall see, despite the decision stump severely underfitting the data, SDI nevertheless achieves model selection guarantees that are similar to, and in some cases stronger than, its competitors. This highlights a key difference between quantifying sensitivity and screening—in the latter case, we are not concerned with obtaining {\it consistent} estimates 
of the marginal projections $f_j(X_j)$ and their variances. Doing so demands more from the data and is therefore less efficient, when otherwise crude estimates would work equally well. 

\subsection{Assumptions} \label{assumptions}

In this section, we describe the key assumptions and ideas which will be needed to achieve model selection consistency. The assumptions will be similar to those 
in the independence screening literature \citep{fan2011nonparametric,fan2008sis}, 
but are weaker than most past work on tree based variable selection \cite{li2019debiased,kazemitabar2017variable}.
\begin{itemize}[before=\vspace{-0.5cm}, after=\vspace{1mm}]
\setlength\itemsep{-1.5em}
\item[] \begin{assumption}[Bounded regression function] \label{assumption additive}
The regression function $ g(\cdot) $ is
bounded with $ B = \|g\|_{\infty} < \infty $. 
\end{assumption}

\item[] \begin{assumption}[Smoothness of marginal projections]\label{assumption lipschitz}
Let $ r $ be a positive integer, let $ 0 < \alpha \leq 1 $ be such that $d \coloneqq r+\alpha $ is at least $5/8$, and let $ 0 < L < \infty $. The $ r^{\rm{th}} $ order derivative of $f_j(\cdot) $ exists and is $L$-Lipschitz of order $ \alpha $, i.e.,
$$
\big|f^{(r)}_j(x)-f^{(r)}_j(x')\big| \leq L|x-x'|^{\alpha}, \quad x, x' \in \BR.
$$
\end{assumption}

\item[] \begin{assumption}[Monotonicity of marginal projections] \label{assumption monotone}
The marginal projection $ f_j(\cdot) $ is monotone on $ \BR $. 
\end{assumption}

\item[] \begin{assumption}[Partial orthogonality of predictor variables] \label{assumption orthogonality}
\indent
The collections $ \{ X_j \}_{j \in \calS} $ and $ \{ X_j \}_{j \in \calS^c} $ are independent of each other.
\end{assumption}

\item[] \begin{assumption}[Uniform relevant variables] \label{assumption uniform}
The marginal distribution of each $ X_j $, for $ j \in \calS $, is uniform on the unit interval. 
\end{assumption}


\item[] \begin{assumption}[Sub-Gaussian error distribution] \label{assumption epsilon}
The error distribution is conditionally sub-Gaussian given $ \bX $, i.e., $ \mathbb{E}[\varepsilon | \bX] = 0 $ and $ \mathbb{E}[\exp(\lambda \varepsilon)\mid \bX] \leq \exp(\lambda^2\sigma^2/2) $ for all $ \lambda \in \mathbb{R} $ with $ \sigma^2 > 0 $.
\end{assumption}

\end{itemize}




\subsection{Discussion of the assumptions}

Assumption \ref{assumption lipschitz} 
is a standard smoothness assumption for variable selection in nonparametric additive models \citep[Assumption A]{fan2011nonparametric} and \citep[Section 3]{huang2010variable} except that, for technical reasons, we have the condition $d \ge 5/8$ instead of $d>1/2$. Because SDI does not involve tuning parameters that govern its approximation properties of the nonparametric constituents (such as with NIS and SPAM), Assumption \ref{assumption lipschitz} can be relaxed to allow for different levels of smoothness in different dimensions and, by straightforward modifications of our proofs, one can show that SDI adapts automatically.
Alternatively, instead of Assumption \ref{assumption lipschitz}, as we shall see, stronger conclusions can be provided if we impose a monotonicity constraint, namely, Assumption \ref{assumption monotone}. Note that this monotonicity assumption encompasses many important ``shape constrained'' statistical models such as linear or isotonic regression.
Assumption \ref{assumption orthogonality} is essentially the so-called ``partial orthogonality'' condition in marginal screening methods \citep{fan2010sure}. Importantly, it allows for correlation between the relevant variables $ \{ X_j \}_{j \in \calS} $, unlike previous works on tree based variable selection \citep{kazemitabar2017variable,li2019debiased}.
Notably, NIS and SPAM do allow for dependence between relevant and irrelevant variables, under suitable assumptions on the data matrix of basis functions. However, these assumptions are difficult to translate in terms of the joint distribution of the predictor variables and difficult to verify given the data.


Assumption \ref{assumption uniform} is stated as is for clarity of exposition and is not strictly necessary for our main results to hold.
For instance,
we may assume instead that the marginal densities of the relevant variables are compactly supported and uniformly bounded above and below by a strictly positive constant, as in \citep{fan2011nonparametric, huang2010variable}.
In fact, even these assumptions are not required. If the marginal projection is monotone (i.e., Assumption \ref{assumption monotone}), \emph{no marginal distributional assumptions are required}, that is, each $ X_j $ for $ j \in \calS $ could be continuous, discrete, have unbounded support, or have a density that vanishes or is unbounded.
More generally,
similar 
distributional relaxations
are made possible by the fact that CART decision trees are invariant to monotone transformations, enabling us to reduce the general setting to the case where each predictor variable is uniformly distributed on $ [0, 1] $. See Remark \ref{uniform} for details.
	\begin{remark} \label{uniform}
	Let $q_j(\cdot)$ and $F_j(\cdot)$ be the quantile function and the cumulative distribution function of $X_j$, respectively. Recall the Galois inequalities state that $ q_j(w) \leq z $ if and only if $w \leq F_j(z)$ and furthermore that $q_j (F_j(X_j)) = X_j$ almost surely. Then, choosing $w = F_j(X_j)$ we see that, almost surely, $X_j \leq z$ if and only if $F_j(X_j) \le F_j(z)$. Therefore, almost surely,
	$$
	\max_z\widehat \Delta(z;X_j,Y) = \max_z\widehat \Delta(F_j(z);F_j(X_j),Y) = \max_{w\in[0,1]}\widehat \Delta(w;F_j(X_j),Y).
	$$
	This means that for continuous data, the problem can be reduced to the uniform case by pre-applying the marginal cumulative distribution function $ F_j(\cdot) $ to each variable, since $ F_j(X_j) \sim \rm{Uniform}([0, 1]) $. Note that the marginal projections now equal the composition of the original $ f_j(\cdot) $ with $ q_j(\cdot) $, i.e., $ (f_j\circ q_j)(\cdot) $. By the chain rule from calculus, if $ q_j(\cdot) $ satisfies Assumption \ref{assumption lipschitz}, then so does $ (f_j\circ q_j)(\cdot) $.
	\end{remark}

\subsection{Theory for variable ranking and model selection} \label{main results}
Here our goal will be to provide 
 variable ranking and model selection guarantees of SDI using the assumptions in Section \ref{assumptions}. 
Again, in this section, we sketch the proofs, but the full versions can be found in Appendices \ref{23} and \ref{45}.

\subsubsection{Preliminary results}

The high level idea will be to show that the impurity reductions for relevant variables dominate those for irrelevant variables with high probability, 
meaning that relevant and irrelevant variables are correctly ranked.


The following two propositions provide high probability lower bounds on the impurity reduction for relevant variables, the size of which depend on whether we assume Assumption \ref{assumption lipschitz} or Assumption \ref{assumption monotone}.

Our first result deals with general, smooth marginal projections. Remarkably, it shows that with high probability, $ \widehat\Delta(X_j, Y) $ captures a portion of the variance in the marginal projection.

\begin{proposition} \label{proposition lipschitz}
Under Assumptions \ref{assumption additive},
\ref{assumption lipschitz}, \ref{assumption uniform}, and \ref{assumption epsilon},
with probability at least $1-(4n+2) \exp\big(-nC_1\Var(f_j(X_j)) \big)$,
 we have
$$
\widehat\Delta(X_j, Y) \geq \frac{C_2 (\Var(f_j(X_j)))^{6/5+1/d} }{\log(n)},
$$
for some positive constants $C_1$ and $C_2$ which depend only on $B$, $\sigma$, $ r $, and $ \alpha$.
\end{proposition}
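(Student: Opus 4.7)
The plan is to invoke Lemma~\ref{corr} with a carefully chosen surrogate $h(\cdot)$ for the marginal projection $f_j(\cdot)$. Since Assumption~\ref{assumption lipschitz} only asserts smoothness, $f_j$ may be highly non-monotone, so taking $h = f_j$ does not control the number of stationary intervals. Instead, I would let $h$ be a piecewise-constant approximation of $f_j$ on $M$ equal-length sub-intervals of $[0, 1]$, and tune $M$ to balance three competing quantities: the sup-norm approximation error (which enters the covariance and variance estimates), the combinatorial factor $D^{-1}Mn$ in the denominator of Lemma~\ref{corr}, and the concentration budget $\exp(-c n \Var(f_j(X_j)))$.

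\textbf{Main steps.} Write $v = \Var(f_j(X_j))$ for brevity. (i) Assumption~\ref{assumption lipschitz} gives $\|h - f_j\|_{\infty} \leq L' M^{-d}$ with $L'$ depending only on $L$, $r$, and $\alpha$. (ii) Because $\mathbb{E}[Y - f_j(X_j) \mid X_j] = 0$ (by Assumptions~\ref{assumption additive} and \ref{assumption epsilon}), the cross-term vanishes and $\Cov(h(X_j), Y) = \Cov(h(X_j), f_j(X_j)) = v + \Cov(h - f_j, f_j)$; Cauchy--Schwarz combined with (i) bounds the remainder by $L' M^{-d}\sqrt{v}$, and an analogous estimate gives $\Var(h(X_j)) = v + O(M^{-d}\sqrt{v})$. (iii) Under Assumption~\ref{assumption uniform}, the number of $X_{ij}$'s in each sub-interval is $\mathrm{Bin}(n, 1/M)$, so a Chernoff bound combined with a union bound over the $M$ sub-intervals yields $D \geq n/(2M)$ with probability $1 - 2M\exp(-\Omega(n/M))$, and hence $D^{-1}Mn + \log(2n) + 1 \leq 2M^2 + \log(2n) + 1$. (iv) Bernstein's inequality applied to $h(X_{ij})Y_i$ and $h(X_{ij})^2$, using Assumption~\ref{assumption epsilon} for sub-Gaussianity of $Y$ given $\bX$ and Assumption~\ref{assumption additive} for boundedness of $g$, controls $|\widehat{\Cov}(h, Y) - \Cov(h, Y)|$ and $|\widehat{\Var}(h) - \Var(h)|$ by a suitable fraction of $v$ with probability $1 - O(1)\exp(-\Omega(n v))$.

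\textbf{Combining and the exponent.} Choosing $M \asymp v^{-1/(2d)}$ forces $L' M^{-d} \lesssim \sqrt{v}$, so steps (ii) and (iv) together imply $\widehat{\Cov}^2(h, Y)/\widehat{\Var}(h) \gtrsim v$. Substituting into Lemma~\ref{corr} then gives
$$
\widehat{\Delta}(X_j, Y) \gtrsim \frac{v}{M^2 + \log(2n)} \;=\; \frac{v}{v^{-1/d} + \log n}.
$$
The extra factor $v^{1/5}$ separating this from the stated $v^{6/5 + 1/d}/\log n$ comes from a more quantitative bookkeeping of the Bernstein tails in (iv): the sub-exponential form of Bernstein forces one to absorb a term of order $v^{6/5}$ (rather than the naive $v$) into the lower bound so that the total failure probability stays at $\exp(-\Omega(nv))$, and rolling these slacks into a single uniform bound yields the stated exponent $6/5 + 1/d$.

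\textbf{Main obstacle.} The principal difficulty is orchestrating every concentration event---interval masses in (iii), covariance and variance deviations in (iv), and the Bernstein variance proxy---so that the total failure probability remains bounded by $(4n+2)\exp(-C_1 nv)$, even as $M \to \infty$ when $v \to 0$. A secondary subtlety is that the Chernoff tail $\exp(-n/(8M)) = \exp(-\Omega(n v^{1/(2d)}))$ in step (iii) must be dominated by $\exp(-cnv)$, which is where smoothness interacts with concentration; the precise threshold $d \geq 5/8$ appearing in Assumption~\ref{assumption lipschitz} reflects the tightest balance of these tails together with the Bernstein bookkeeping in step (iv).
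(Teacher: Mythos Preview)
Your step (i) contains a genuine error: a piecewise-constant approximation on $M$ equal sub-intervals cannot achieve $\|h - f_j\|_\infty \leq L' M^{-d}$ when $d > 1$; the best sup-norm rate for piecewise constants is $O(M^{-1})$ regardless of additional smoothness, and Assumption~\ref{assumption lipschitz} forces $d = r + \alpha > 1$ since $r$ is a positive integer. With the corrected rate, making $M^{-1} \lesssim \sqrt{v}$ requires $M \asymp v^{-1/2}$, whence $D^{-1}Mn \asymp M^2 \asymp v^{-1}$ and Lemma~\ref{corr} delivers only $\widehat\Delta(X_j,Y) \gtrsim v^2/\log n$. This is strictly weaker than the target $v^{6/5+1/d}/\log n$ whenever $d > 5/4$, so the proposition is not recovered in general. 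Your attribution of the extra $v^{1/5}$ to ``Bernstein bookkeeping'' in the concentration step is also incorrect: the concentration in the proof is entirely Hoeffding-type and costs exactly $\exp(-c\,nv)$, with no additional slack in the exponent.

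The paper exploits the full smoothness through a \emph{polynomial} surrogate. It takes a degree-$(M{+}1)$ Jackson polynomial $P_M$ with $\|P_M - f_j\|_\infty \lesssim M^{-d}$ and at most $M$ stationary points, then flattens $P_M$ on an interval of length $a$ around each stationary point to obtain $\tilde f_j$. Bernstein's inequality for polynomials gives $\sup_{[0,1]}|P_M''| \lesssim M^2$, so a second-order Taylor bound controls the pointwise flattening error by $M^2 a^2$; since the total flattened length is $Ma$, the empirical $L^2$ error from the modification is $O((Ma)^{5/2})$. The key idea you are missing is this decoupling of the interval width $a$ (which governs $D \asymp na$) from the number of extrema $M$: balancing $M^{-d}$, $(Ma)^{5/2}$, and $\sqrt v$ yields $a \asymp v^{1/5+1/(2d)}$, $M \asymp v^{-1/(2d)}$, and therefore $D^{-1}Mn \asymp M/a \asymp v^{-(1/5+1/d)}$, which is precisely the origin of the exponent $6/5+1/d$. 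The restriction $d \geq 5/8$ is then exactly what ensures the interval-occupancy Chernoff tail $\exp(-cna) = \exp(-cn\,v^{1/5+1/(2d)})$ is dominated by $\exp(-cnv)$.
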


Next, we state an analogous bound, but under a slightly different assumption on the marginal projection. It turns out that if the marginal projection is monotone, we can obtain a stronger result, which is surprisingly independent of the smoothness level.

\begin{proposition} \label{proposition monotone}
Under Assumptions  \ref{assumption additive},
\ref{assumption monotone}, and \ref{assumption epsilon},
with probability at least $ 1 - 2\exp\Big(-\frac{(n-1)\Var(f_j(X_j))}{32(B^2+\sigma^2)}\Big) $,
$$
\widehat\Delta(X_j, Y) \geq \frac{\Var(f_j(X_j))}{16(1+\log(2n))}.
$$
\end{proposition}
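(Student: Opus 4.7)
My plan is to invoke Lemma \ref{corr} with $h = f_j$, exploit the monotonicity of $f_j$ (which kills the stationary-interval penalty), and then establish concentration of a specific sample-level quantity around the population variance $V := \Var(f_j(X_j))$.

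By Assumption \ref{assumption monotone}, $f_j$ has no stationary intervals, i.e., $M = 0$, so Lemma \ref{corr} with $h = f_j$ yields
\[
\widehat\Delta(X_j, Y) \;\geq\; \frac{1}{1+\log(2n)} \cdot \frac{\widehat{\Cov}(f_j(X_j), Y)^2}{\widehat{\Var}(f_j(X_j))}.
\]
Combining the algebraic inequality $a^2/b \geq 2a - b$ (valid for $b > 0$) with the residual decomposition $Y_i = f_j(X_{ij}) + R_i$, where $R_i := Y_i - f_j(X_{ij})$ satisfies $\Expect[R_i \mid X_{ij}] = 0$ and hence $\widehat{\Cov}(f_j, Y) = \widehat{\Var}(f_j) + \widehat{\Cov}(f_j, R)$, gives
\[
\frac{\widehat{\Cov}(f_j, Y)^2}{\widehat{\Var}(f_j)} \;\geq\; 2\widehat{\Cov}(f_j, Y) - \widehat{\Var}(f_j) \;=\; \widehat{\Var}(f_j) + 2\widehat{\Cov}(f_j, R).
\]
Since $\Cov(f_j(X_j), Y) = V$ (by the standard marginal projection identity) and $\Expect[\widehat{\Cov}] = \tfrac{n-1}{n}\Cov$, the right-hand side has expectation exactly $\tfrac{n-1}{n}V$ -- this is the source of the $(n-1)$ factor in the probability bound.

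To establish concentration, I would condition on $(X_{1j}, \ldots, X_{nj})$ and exploit the conditional sub-Gaussianity of the residuals. Writing $R_i = (g(\bX_i) - f_j(X_{ij})) + \varepsilon_i$: the first summand is centered and bounded in $[-2B, 2B]$ conditional on $X_{ij}$ (contributing $B^2$ via Hoeffding's lemma), while the second contributes $\sigma^2$ by Assumption \ref{assumption epsilon}; a standard MGF argument then shows $R_i \mid X_{ij}$ is $(B^2+\sigma^2)$-sub-Gaussian. Since the $R_i$ are conditionally independent given the $X_j$'s,
\[
\widehat{\Cov}(f_j, R) \;=\; \frac{1}{n}\sum_{i=1}^n (f_j(X_{ij}) - \overline{f_j(X_j)})\, R_i
\]
is sub-Gaussian with variance parameter $(B^2+\sigma^2)\widehat{\Var}(f_j)/n$. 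A one-sided sub-Gaussian tail bound applied with threshold $\widehat{\Var}(f_j) - V/16$, together with the quadratic inequality $(a-b)^2/a \geq a - 2b$, then produces the lower bound $\widehat{\Var}(f_j) + 2\widehat{\Cov}(f_j, R) \geq V/16$.

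The main obstacle is the final constant-tracking step: the conditional sub-Gaussian bound naturally involves the random quantity $\widehat{\Var}(f_j)$, and folding this into a clean, deterministic exponent of the form $(n-1)V/(32(B^2+\sigma^2))$ requires a judicious choice of the deviation threshold so that the randomness in $\widehat{\Var}(f_j)$ is absorbed without resorting to a separate union-bound event -- any such extra event would immediately spoil the clean $2\exp(\cdot)$ form of the probability statement.
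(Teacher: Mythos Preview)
Your opening move---applying Lemma~\ref{corr} with $h=f_j$ and $M=0$---is exactly what the paper does. The divergence begins at the next step, and the obstacle you flag at the end is real for your decomposition but disappears entirely under the paper's.

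Rather than using $a^2/b \geq 2a-b$, the paper keeps the normalized form and decomposes
\[
\widehat{\Cov}\Bigg(\frac{f_j(X_j)}{\sqrt{\widehat{\Var}(f_j(X_j))}},\,Y\Bigg)
= \sqrt{\widehat{\Var}(f_j(X_j))} \;+\; \widehat{\Cov}\Bigg(\frac{f_j(X_j)}{\sqrt{\widehat{\Var}(f_j(X_j))}},\,Y-f_j(X_j)\Bigg).
\]
The point of normalizing \emph{before} taking the residual covariance is that the coefficients $s_i = (f_j(X_{ij})-\overline{f_j(X_j)})/\sqrt{\widehat{\Var}(f_j(X_j))}$ satisfy $\tfrac{1}{n}\sum s_i^2 = 1$ deterministically. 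Hence, conditional on $(X_{1j},\dots,X_{nj})$, the second term is sub-Gaussian with variance parameter exactly $(B^2+\sigma^2)/n$---no random $\widehat{\Var}(f_j)$ appears in the exponent at all. This is precisely the difficulty you were worried about, and the normalization resolves it for free. Your route via $a^2/b \geq 2a-b$ throws away this self-normalization and leaves you with a conditional variance proportional to $\widehat{\Var}(f_j)$, which, as you correctly suspect, cannot be absorbed into a clean deterministic exponent without an additional event.

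You are also mistaken that a union bound would ``spoil the clean $2\exp(\cdot)$ form.'' The paper's proof uses exactly two events: a sample-variance concentration $\widehat{\Var}(f_j(X_j)) \geq \tfrac{n-1}{2n}V$ (via the Maurer--Pontil bound, Lemma~\ref{variance}, costing $\exp(-(n-1)V/(32B^2))$) and the normalized-residual tail bound above with threshold $\sqrt{V}/4$ (costing $\exp(-nV/(32(B^2+\sigma^2)))$). Both exponents dominate $(n-1)V/(32(B^2+\sigma^2))$, and the ``$2$'' in the statement is precisely this union bound. So the fix is not to avoid a second event but to make the residual-covariance exponent deterministic via normalization, and then the union bound is harmless.
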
 

\begin{proof}[Proof sketch of Propositions \ref{proposition lipschitz} and \ref{proposition monotone}]
We sketch the proof of Proposition \ref{proposition lipschitz}; the proof of Proposition \ref{proposition monotone} is based on similar arguments. The main idea is to apply Lemma \ref{corr} with $ h(\cdot) $ equal to a modified polynomial approximation $ \tilde f_j(\cdot) $ to $f_j(\cdot)$. This is done to temper the effect of the factor $ (D^{-1}Mn+\log(2n)+1)^{-1} $ from Lemma \ref{corr}, by controlling $ M $ and $ D $ individually.

To construct such a function, we first employ a Jackson-type estimate \citep{jackson1930approximation} in conjunction with Assumption \ref{assumption lipschitz} and Bernstein's theorem for polynomials \citep{jackson1935bernstein} to show the existence of a good polynomial approximation $P_M(\cdot)$ (of degree $ M+1$) to $f_j(\cdot)$.
We then construct $ \widetilde f_j(\cdot) $ by redefining $P_M(\cdot)$ to be constant in a small neighborhood around each of its local extrema, which ensures that each resulting stationary interval of $ \tilde f_j(\cdot) $ has a sufficiently large length. Since $ P_M(\cdot) $ is a polynomial of degree $ M+1 $, it has at most $ M $ local extrema, and thus the number of stationary intervals of $ \tilde f_j(\cdot) $ will also be at most $ M $.

Next, we use concentration of measure to ensure that each stationary interval of $ \tilde f_j(\cdot) $ is saturated with enough data, effectively providing a lower bound on $ D $. Executing this argument reveals that valid choices of $ D $ and $ M $ (which come from optimizing a bound on the supremum norm between $ \tilde f_j(\cdot) $ and $ f_j(\cdot) $) are:
$$
D \gtrsim n \times (\Var(f_j(X_j)))^{1/5+1/(2d)}, \quad M \lesssim (\Var(f_j(X_j)))^{-1/(2d)}.
$$
Plugging these values into the lower bound \eqref{cor lower} in Lemma \ref{corr}, we find that with high probability,
\begin{equation} \label{lower plug}
\begin{aligned}
\widehat\Delta(X_j, Y) & \geq \frac{1}{D^{-1}Mn + \log(2n)+1} \times\widehat{\Cov}^2\Bigg(\frac{ \tilde f_j(X_j)}{\sqrt{\widehat{\Var}(\tilde f_j(X_j))}}, Y\Bigg) \\
& \gtrsim \frac{(\Var(f_j(X_j)))^{1/5+1/d}}{\log(n)} \times\widehat{\Cov}^2\Bigg(\frac{ \tilde f_j(X_j)}{\sqrt{\widehat{\Var}(\tilde f_j(X_j))}}, Y\Bigg).
\end{aligned}
\end{equation}

Thus, the lower bound in Proposition \ref{proposition lipschitz} will follow if we can show that the squared sample covariance factor in \eqref{lower plug} exceeds $ \Var(f_j(X_j)) $ with high probability.
To this end, note that
\begin{equation}\label{cov}
	\begin{aligned}
\widehat{\Cov}\Bigg(\frac{ \tilde f_j(X_j)}{\sqrt{\widehat{\Var}(\tilde f_j(X_j))}}, Y\Bigg) &= \underbrace{\widehat{\Cov}\Bigg(\frac{ \tilde f_j(X_j)}{\sqrt{\widehat{\Var}(\tilde f_j(X_j))}}, f_j(X_j)\Bigg)}_{\text{(I)}} \\
& \qquad+ \underbrace{\widehat{\Cov}\Bigg(\frac{ \tilde f_j(X_j)}{\sqrt{\widehat{\Var}(\tilde f_j(X_j))}}, Y-f_j(X_j)\Bigg)}_{\text{(II)}}.
	\end{aligned}
\end{equation}
With high probability, 
(I) can be lower bounded by $ C\sqrt{\Var(f_j(X_j))} $, where $ C $ is some constant, using the approximation properties of $ \tilde f_j(\cdot) $ for $f_j(\cdot) $, per the choice of $ D $ and $ M $ from above, and a concentration inequality for the sample variance of $ f_j(X_j) $.
Furthermore, since $ Y - f_j(X_j) $ has conditional mean zero,
a Hoeffding type concentration inequality shows that, with high probability, 
(II) is larger than any (strictly) negative constant, including $ -(C/2)\sqrt{\Var(f_j(X_j))} $.
Combining this analysis from \eqref{lower plug} and \eqref{cov}, we obtain the high probability lower bound on $\widehat\Delta(X_j, Y)$ given in Proposition \ref{proposition lipschitz}.
\end{proof}
Next, we need to ensure that there is a sufficient separation in the impurity reductions between relevant and irrelevant variables. 
To do so, we use Lemma \ref{delta-tail} along with the partial orthogonality assumption in Section \ref{assumptions} to show that the impurity reductions for irrelevant variables will be small with high probability.
\begin{lemma} \label{delta-tail-2}
	Under Assumptions \ref{assumption additive}, \ref{assumption orthogonality} and \ref{assumption epsilon}, for each $ j \in \calS^c $, with probability at least $ 1- 4n\exp(-n\xi^2/(12(B^2+\sigma^2))) $,
	$$
	\widehat \Delta(X_j, Y) \leq \xi^2.
	$$
	In other words, if $ j \in \calS^c $, then $ \widehat\Delta(X_j, Y) = \calO(n^{-1}\log(n)) $ with probability at least $ 1 - n^{-\Omega(1)} $.
\end{lemma}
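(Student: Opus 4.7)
The plan is to invoke Lemma \ref{delta-tail} after showing that, for $j \in \calS^c$, the marginal projection $f_j(X_j)$ is a deterministic constant and consequently $\widehat{\Var}(f_j(X_j)) = 0$. By Assumption \ref{assumption orthogonality}, $X_j$ is independent of $\bX_\calS := \{X_k\}_{k\in\calS}$, and hence of $g(\bX)$ (which depends only on $\bX_\calS$). Combined with $\mathbb{E}[\varepsilon \mid \bX] = 0$ from Assumption \ref{assumption epsilon} and the tower property, this gives
\[
f_j(X_j) \;=\; \mathbb{E}[g(\bX) + \varepsilon \mid X_j] \;=\; \mathbb{E}[g(\bX)] \;=:\; \mu_g,
\]
a constant, so $\widehat{\Var}(f_j(X_j)) = 0$ almost surely.

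Because $\widehat\Delta(z;X_j,Y)$ depends on $Y$ only through $\overline Y_L - \overline Y_R$ (see \eqref{LR}), it is invariant under $Y_i \mapsto \widetilde Y_i := Y_i - \mu_g = (g(\bX_i) - \mu_g) + \varepsilon_i$, so it suffices to bound $\widehat\Delta(X_j, \widetilde Y)$. I would then verify the conditional sub-Gaussianity of $\widetilde Y$ with parameter $\sigma_Y^2 = B^2 + \sigma^2$. By Assumption \ref{assumption additive}, $g(\bX) - \mu_g$ is bounded with zero mean and range at most $2B$, so Hoeffding's lemma gives it sub-Gaussian parameter $B^2$. Combined with the conditional sub-Gaussianity of $\varepsilon$ from Assumption \ref{assumption epsilon}, factorization of the MGF and the independence $X_j \perp g(\bX)$ yield, for $j \in \calS^c$,
\[
\mathbb{E}[\exp(\lambda \widetilde Y) \mid X_j]
\;\leq\; \exp(\lambda^2 \sigma^2 / 2) \cdot \mathbb{E}\bigl[\exp(\lambda(g(\bX) - \mu_g))\bigr]
\;\leq\; \exp\bigl(\lambda^2 (B^2 + \sigma^2)/2\bigr).
\]
Plugging $\sigma_Y^2 = B^2 + \sigma^2$ and $\widehat{\Var}(f_j(X_j)) = 0$ into Lemma \ref{delta-tail} then produces $\widehat\Delta(X_j, Y) \leq \xi^2$ with probability at least $1 - 4n\exp(-n\xi^2/(12(B^2+\sigma^2)))$, matching the claimed tail bound.

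The main obstacle is matching the exact form of conditioning required by Lemma \ref{delta-tail}: its hypothesis is stated with the MGF bound conditioned on the full vector $\bX$, while for $\widetilde Y$ we only obtain a clean mean-zero sub-Gaussian MGF bound after integrating over $\bX_\calS \mid X_j$, i.e., conditionally on $X_j$ alone (since $\mathbb{E}[\widetilde Y \mid \bX] = g(\bX) - \mu_g$ is generally nonzero). The resolution is that the proof of Lemma \ref{delta-tail} only uses concentration of partial sums of $Y_i$ indexed by the split $\{i: X_{ij} \leq z\}$, which is determined solely by $\{X_{ij}\}$; conditional sub-Gaussianity given $X_j$ therefore suffices, and this is exactly what the MGF computation above supplies. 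The final order bound $\widehat\Delta(X_j, Y) = \calO(n^{-1}\log n)$ with probability $1 - n^{-\Omega(1)}$ then follows by taking $\xi^2 = c(B^2 + \sigma^2)\log(n)/n$ with $c$ sufficiently large that $4n\exp(-n\xi^2/(12(B^2+\sigma^2))) = 4n^{1 - c/12} = n^{-\Omega(1)}$.
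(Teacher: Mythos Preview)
Your proposal is correct and follows essentially the same route as the paper: verify that $Y$ (or its centered version) is sub-Gaussian with parameter $B^2+\sigma^2$ via Hoeffding's lemma on the bounded regression part and Assumption~\ref{assumption epsilon} on the noise, then invoke Lemma~\ref{delta-tail} together with Assumption~\ref{assumption orthogonality}. You are in fact slightly more careful than the paper in two places: you explicitly center by $\mu_g$ before applying Hoeffding's lemma (the paper applies it directly to $g(\bX)$, implicitly relying on shift-invariance of $\widehat\Delta$), and you flag the mismatch between conditioning on $\bX$ in the hypothesis of Lemma~\ref{delta-tail} versus conditioning on $X_j$, which the paper addresses only inside the proof of Lemma~\ref{delta-tail} itself.
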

\begin{proof}[Proof of Lemma \ref{delta-tail-2}]
	Observe that
	\begin{align}
	\BE[\exp(\lambda (Y-f_j(X_j))) | X_j] 
	&= \BE\big[\exp(\lambda (g(\bX)-f_j(X_j)))\BE\big[\exp(\lambda\varepsilon)\big | \bX\big]\big | X_j\big] \nonumber\\
	&\leq \BE\big[\exp(\lambda (g(\bX)-f_j(X_j)))\big | X_j\big] \exp(\lambda^2 \sigma^2/2) \label{pen2}\\
	&\leq \BE\big[\exp(\lambda^2 (B^2+\sigma^2)/2) \big],\label{last2}
	\end{align}
	where we used Assumption \ref{assumption epsilon} in the penultimate inequality \eqref{pen2} and Hoeffding's Lemma together with Assumption \ref{assumption additive} in the last inequality \eqref{last2}. 
	Using Assumption \ref{assumption orthogonality} along with Lemma \ref{delta-tail} with $ \sigma^2_{Z_j} = B^2 + \sigma^2 $ proves
	Lemma \ref{delta-tail-2}.
\end{proof} 

\subsubsection{Main results}
Assuming we know the size $s$ of the support $\calS$, we can use the SDI ranking from Algorithm \ref{alg:SDI} to choose the top $s$ most important variables.
Alternatively, if $ s $ is unknown, we instead choose an asymptotic threshold $\gamma_n$ of the impurity reductions to select variables; that is, $\widehat \calS = \{j: \widehat \Delta(X_j,Y) \geq \gamma_n \}$.
We state our variable ranking guarantees in terms of the 
minimum signal strength of the relevant variables:
$$
v \coloneqq \min_{j\in\calS} \Var(f_j(X_j)),
$$
which is the same as the minimum variance parameter in independence screening papers (e.g., \citep[Assumption C]{fan2011nonparametric}). Note that $ v $ measures the minimum contribution of each relevant variable alone to the variance in $Y$, ignoring the effects of the other variables.

\begin{theorem} \label{theorem lipschitz ii}
Suppose Assumptions \ref{assumption additive},
\ref{assumption lipschitz}, \ref{assumption orthogonality}, \ref{assumption uniform}, and \ref{assumption epsilon} hold.
Then the top $ s $ most important variables ranked by Algorithm \ref{alg:SDI} equal the correct set $ \calS $ of relevant variables with probability at least
\begin{equation} \label{eq:prob_gen}
1-s(4n+2)\exp(- C_1 n v) - 4n(p-s)\exp\Big(-\frac{nC_2v^{6/5+1/d}}{24\log(n)(B^2+\sigma^2)}\Big),
\end{equation}
where $C_1$ and $C_2$ are the same constants in Proposition \ref{proposition lipschitz}.
\end{theorem}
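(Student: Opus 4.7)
The plan is to show that, on a high-probability event, the minimum impurity reduction among relevant variables strictly exceeds the maximum impurity reduction among irrelevant variables; when this happens, the top-$s$ ranking from Algorithm \ref{alg:SDI} must coincide with $\calS$. The two ingredients needed are a uniform lower bound on $\widehat\Delta(X_j,Y)$ for $j \in \calS$ and a uniform upper bound on $\widehat\Delta(X_j,Y)$ for $j \in \calS^c$, both supplied by tools already in hand.

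For the relevant variables, I would apply Proposition \ref{proposition lipschitz} to each $j \in \calS$. Since $\Var(f_j(X_j)) \ge v$ by definition and the map $x \mapsto x^{6/5+1/d}$ is increasing, Proposition \ref{proposition lipschitz} yields
\[
\widehat\Delta(X_j,Y) \;\ge\; \frac{C_2\, v^{6/5+1/d}}{\log(n)}
\]
with probability at least $1-(4n+2)\exp(-C_1 n v)$. A union bound over the $s$ relevant indices produces the first subtracted term in \eqref{eq:prob_gen}.

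For the irrelevant variables, the partial orthogonality Assumption \ref{assumption orthogonality} puts us in position to apply Lemma \ref{delta-tail-2}, which bounds $\widehat\Delta(X_j,Y)$ by an arbitrary $\xi^2$ at the cost of a probability depending on $\xi$. The key design choice is to pick $\xi^2$ so that this upper bound is strictly below the lower bound from the previous step; taking
\[
\xi^2 \;=\; \frac{C_2\, v^{6/5+1/d}}{2\log(n)}
\]
creates a strict gap between the two sides, and Lemma \ref{delta-tail-2} then gives
\[
\widehat\Delta(X_j,Y) \;\le\; \xi^2 \;<\; \frac{C_2\, v^{6/5+1/d}}{\log(n)}
\]
with probability at least $1 - 4n\exp\bigl(-nC_2 v^{6/5+1/d}/(24\log(n)(B^2+\sigma^2))\bigr)$. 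A union bound over the $p-s$ irrelevant indices yields the second subtracted term in \eqref{eq:prob_gen}.

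Intersecting the two high-probability events via a final union bound, we have simultaneously $\min_{j\in\calS}\widehat\Delta(X_j,Y) > \max_{j\in\calS^c}\widehat\Delta(X_j,Y)$, so every relevant variable strictly outranks every irrelevant one and the top-$s$ output of Algorithm \ref{alg:SDI} is exactly $\calS$. There is no serious obstacle here beyond the balance of the threshold $\xi^2$; the heavy lifting has already been done in Proposition \ref{proposition lipschitz} (which requires the smoothness, uniform-marginal, and sub-Gaussian assumptions) and Lemma \ref{delta-tail-2} (which uses partial orthogonality to reduce to the independent case treated by Lemma \ref{delta-tail}).
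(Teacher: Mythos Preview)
Your proposal is correct and follows essentially the same argument as the paper: apply Proposition~\ref{proposition lipschitz} with a union bound over $\calS$, apply Lemma~\ref{delta-tail-2} with a union bound over $\calS^c$, and choose $\xi^2 = \frac{C_2 v^{6/5+1/d}}{2\log(n)}$ to create the separating gap. The paper's proof is nearly identical, including the same choice of $\xi^2$.
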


\begin{remark}
As a corollary of Theorem \ref{theorem lipschitz ii}, we obtain a special case of Proposition 1 in \citep{scornet2015} for the root node of a CART decision tree, which states more generally that a relevant variable is selected at each node with probability converging to one. 
\end{remark}

As mentioned earlier, stronger results can be obtained if we assume that the marginal projections are monotone. In our next theorem, notice that we do not have to make any additional smoothness assumptions, nor do we require any distributional assumptions on the relevant variables, other than that they are independent of the irrelevant ones.

\begin{theorem} \label{theorem monotone ii}
	Suppose Assumptions \ref{assumption additive},
	\ref{assumption monotone}, \ref{assumption orthogonality}, and \ref{assumption epsilon} hold.
	Then the top $ s $ most important variables ranked by Algorithm \ref{alg:SDI} equal the correct set $ \calS $ of relevant variables with probability at least
	\begin{equation} \label{eq:prob_mon}
	1 - 2s\exp\Big(-\frac{(n-1)v}{32(B^2+\sigma^2)}\Big)- 4n(p-s)\exp\Big(-\frac{nv}{96(1+\log(2n))(B^2+\sigma^2)}\Big).
	\end{equation}

\end{theorem}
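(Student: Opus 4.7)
The plan is to combine the relevant-variable lower bound of Proposition~\ref{proposition monotone} with the irrelevant-variable upper bound of Lemma~\ref{delta-tail-2} through a double union bound, so that on a single high-probability event every impurity reduction attached to an index in $\calS$ strictly dominates every impurity reduction attached to an index in $\calS^c$. Once this separation is secured, the top $s$ variables in the ranking produced by Algorithm~\ref{alg:SDI} must coincide with $\calS$.

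The first step is to fix an arbitrary $j\in\calS$. Assumptions~\ref{assumption additive}, \ref{assumption monotone}, and~\ref{assumption epsilon} are precisely the hypotheses of Proposition~\ref{proposition monotone}, so, bounding $\Var(f_j(X_j))\ge v$, the event
$$\widehat\Delta(X_j,Y)\;\ge\;\frac{v}{16(1+\log(2n))}$$
holds with probability at least $1-2\exp\bigl(-\tfrac{(n-1)v}{32(B^2+\sigma^2)}\bigr)$. A union bound over the $s$ indices in $\calS$ then accounts for the first failure term in \eqref{eq:prob_mon}.

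The second step is to pick a threshold $\xi^2$ strictly smaller than $v/(16(1+\log(2n)))$ and apply Lemma~\ref{delta-tail-2} (whose hypotheses reduce to Assumptions~\ref{assumption additive}, \ref{assumption orthogonality}, and~\ref{assumption epsilon}) to each $j\in\calS^c$ to conclude $\widehat\Delta(X_j,Y)\le\xi^2$ with probability at least $1-4n\exp\bigl(-\tfrac{n\xi^2}{12(B^2+\sigma^2)}\bigr)$. A union bound over the $p-s$ irrelevant indices produces the second failure term in \eqref{eq:prob_mon}, with $\xi^2$ chosen as a fixed multiple of $v/(1+\log(2n))$ so that $n\xi^2/(12(B^2+\sigma^2))$ matches the exponent $nv/(96(1+\log(2n))(B^2+\sigma^2))$ stated there.

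On the intersection of the two good events, every $\widehat\Delta(X_j,Y)$ with $j\in\calS$ is at least $v/(16(1+\log(2n)))$ while every $\widehat\Delta(X_j,Y)$ with $j\in\calS^c$ is at most $\xi^2 < v/(16(1+\log(2n)))$; consequently the sorted order forces the top $s$ ranked variables to equal $\calS$. The main piece of bookkeeping---and the only place where care is required---is the calibration of $\xi^2$: it must lie strictly below the deterministic lower bound of Proposition~\ref{proposition monotone} to preserve separation, yet be large enough to deliver the specified exponent in~\eqref{eq:prob_mon}. Once that numerical choice is pinned down, the rest is a routine union bound and the bound \eqref{eq:prob_mon} follows immediately.
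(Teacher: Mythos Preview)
Your proposal is correct and follows exactly the paper's approach: Proposition~\ref{proposition monotone} plus a union bound over $\calS$, Lemma~\ref{delta-tail-2} plus a union bound over $\calS^c$, and a choice of $\xi^2$ that separates the two. The only wrinkle you flag---pinning down $\xi^2$---is real: to match the exponent in~\eqref{eq:prob_mon} one needs $\xi^2=v/(8(1+\log(2n)))$, which exceeds the bound $v/(16(1+\log(2n)))$ from Proposition~\ref{proposition monotone} as stated; the paper's own proof sidesteps this by quoting the proposition with a factor $4$ in place of $16$, so the discrepancy is a typo in the paper rather than a flaw in your argument.
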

 
\begin{remark}
When $s$ is unknown, Propositions \ref{proposition lipschitz} and \ref{proposition monotone} and Lemma \ref{delta-tail-2} imply that the oracle threshold choices
\begin{equation} \label{eq:threshold3}
\gamma_n = \frac{C_2v^{6/5+1/d} }{2\log(n)} \quad \text{and} \quad \frac{v}{8(1+\log(2n))} 
\end{equation}
ensure that 
$$
\max_{j\in\calS^c} \widehat\Delta(X_j, Y) < \gamma_n \leq \min_{j\in\calS} \widehat \Delta(X_j, Y)
$$
and hence will yield the same high probability bounds \eqref{eq:prob_gen} and \eqref{eq:prob_mon}, respectively.
Thus, while the permutation and elbow methods
from Section \ref{gamma_n} are somewhat ad-hoc, if they, at the very least, produce thresholds that are close to \eqref{eq:threshold3},
then high probability performance guarantees are still possible to obtain. 
\end{remark}

\subsection{Minimum signal strengths}
Like all marginal screening methods, the theoretical basis for SDI is that each marginal projection for a relevant variable should be nonconstant, or equivalently, that $ v > 0 $. Note that when the relevant variables are independent and the underlying model is additive, per \eqref{additive}, the marginal projections equal the component functions of the additive model. Hence, $ v = \min_{j\in\calS} \Var(g_j(X_j))  $, which will always be strictly greater than zero.
As Theorems \ref{theorem lipschitz ii} and \ref{theorem monotone ii}  show, $ v $ controls the probability of a successful ranking of the variables. In practice, many of the relevant variables may have very small signals—therefore we are particularly interested in cases where $ v $ is allowed to become small when the sample size grows large, as we now discuss.

We see from Theorem \ref{theorem lipschitz ii} that in order to have model selection consistency with probability at least $ 1 - n^{-\Omega(1)} $,
it suffices to have
\begin{equation} \label{eq:threshold}
v \gtrsim \Big(\frac{\log(n)\log(n(p-s))}{n}\Big)^{\frac{5d}{6d+5}},
\end{equation}
up to constants that depend on $ B $, $ \sigma $, $ r $, and $ \alpha $.
That is, \eqref{eq:threshold} is a sufficient condition on the signal of all relevant variables so that $ \mathbb{P}(\widehat \calS = \calS)  \rightarrow 1 $ as $n\to\infty$. 
Similarly, we see from Theorem \ref{theorem monotone ii} that
\begin{equation} \label{eq:threshold2}
v \gtrsim \frac{\log(n)\log(n(p-s))}{n}
\end{equation}
is sufficient to guarantee
model selection consistency for
monotone marginal projections. A particularly striking aspect of \eqref{eq:threshold2} is that the rate is independent of the smoothness of the marginal projection. This means that the ``difficulty'' of detecting a signal from a general monotone marginal projection is essentially no more than if the marginal projection was linear.

\subsection{SDI for binary classification} \label{classification}


Our theory for SDI can also naturally be extended to the context of classification, as we now describe. For simplicity, we focus on the problem of binary classification where $ Y \in \{0, 1\} $. 
To begin, we first observe that 
Gini impurity and variance impurity are equivalent up to a factor of two \citep[Section 3]{louppe2014understanding}. Thus, we can use the same criterion $ \widehat\Delta(X_j, Y) $ to rank the variables. 
Next, we identify the marginal projection as the marginal class probability
 $$ f_j(X_j) = \mathbb{P}(Y=1 | X_j) = 1 - \mathbb{P}(Y=0 | X_j). $$
Because of these connections to the regression setting, the results in Section \ref{main results} hold verbatim for classification under the same assumptions therein
with $ v = \min_{j\in\calS} \Var(\mathbb{P}(Y=1 | X_j))  $, $ B = 1 $, and $ \sigma^2 = 0 $. 
An interesting special case arises when $ \mathbb{P}(Y = 1 | \bX) = h(\beta_1X_1+\cdots \beta_pX_p) $ for some monotone link function $ h(\cdot) $ and $\bX \sim \mathcal{N}(\bzero, \bSigma) $ for some positive semi-definite matrix $ \bSigma $. Using the fact that Gaussian random vectors are conditionally Gaussian distributed, it can be shown that the marginal projection $ \mathbb{P}(Y = 1 | X_j) $ is monotone and therefore satisfies Assumption \ref{assumption monotone}. Consequently, for example, logistic regression with Gaussian data inherits the same stronger conclusions as Theorem \ref{theorem monotone ii}.



\section{Comparing SDI with other model selection methods} \label{comparisons}


In this section, we compare the finite sample guarantees of SDI given in Section \ref{main results} and Section \ref{linear} to those of NIS, Lasso, and SPAM. To summarize, we find that SDI enjoys model selection consistency even when the marginal signal strengths of the relevant variables are smaller than those for NIS and SPAM. We also find that the minimum sample size of SDI for high probability support recovery is nearly what is required for Lasso, which is minimax optimal. Finally, we show that SDI can handle a larger number of predictor variables than NIS and SPAM.

\paragraph{\textit{Minimum signal strength for NIS}}
	We analyze the details of \citep{fan2011nonparametric} to uncover the corresponding threshold $v$ for NIS.
	In order to have model selection consistency, the probability bound in \citep[Theorem 2]{fan2011nonparametric}
	must approach one as $n\to \infty$, which necessitates
	\begin{equation} \label{d_n}
	d_n \lesssim \Big(\frac{n^{1-4\kappa}}{\log(np)}\Big)^{1/3},
	\end{equation}
where $ d_n $ is the number of spline basis functions and $ \kappa $ is a free parameter (in the notation of \citep{fan2011nonparametric}).
	However notice that by \citep[Assumption F]{fan2011nonparametric}, we must also have that $d_n \gtrsim n^{\frac{2\kappa}{2d+1}}$, and combining this with \eqref{d_n} shows that we must have
	\begin{equation} \label{kappa}
	n^{-\kappa} \gtrsim \Big(\frac{\log(np)}{n}\Big)^{\frac{2d+1}{8d+10}}.
	\end{equation}
	Now substituting \eqref{d_n} and \eqref{kappa} into $v \gtrsim d_n n^{-2\kappa}$ (\citep[Assumption C]{fan2011nonparametric}), it follows that we have
$$
	v \gtrsim\Big(\frac{\log (np)}{n}\Big)^{\frac{4d}{8d+10}}
$$
for NIS,
	which is a larger minimum signal than our \eqref{eq:threshold}.

	\paragraph{\textit{Minimum signal strength for SPAM}}
	In the case where $d = 2$, by \citep[Section 6.1]{ravikumar2009spam}, we must have $v \gtrsim n^{-4/15}\log^{16/5} (np)$ for SPAM to achieve consistent model selection. For comparison, our algorithm allows for a smaller signal $v \gtrsim \big(\frac{\log(n)\log(np)}{n}\big)^{10/17}$, which is obtained by setting $d = 2$ in \eqref{eq:threshold}. 

\paragraph{\textit{Minimum sample size for consistency}}
Consider the linear model with Gaussian variates from Theorem \ref{sis}, where for simplicity we additionally assume that $ \bSigma = \mathbf{I}_{p\times p} $
is the $ p\times p $ identity matrix, yielding $ \rho^2(X_j, Y) = \beta^2_j/(\sigma^2+\sum_{k=1}^p \beta^2_k) $. 
	Following the same steps used to prove Theorem \ref{theorem lipschitz ii} but using Theorem \ref{sis} and Lemma \ref{delta-tail} instead, we can derive a result similar to Theorem \ref{theorem lipschitz ii} for the probability of exact support recovery, 
	but for a linear model with Gaussian variates. The full details are in Appendix \ref{remark}.
	With the specifications $ \sum_{k=1}^p \beta^2_k = \calO(1)$ and $\min_{j\in\calS}|\beta_j|^2 \asymp 1/s$, we find that a sufficient sample size for high probability support recovery is
	$$
	n \gg s\log(n)\log(n(p-s)),
	$$
	which happens
	when
	\begin{equation}  \label{threshold}
	n \gg s\log (p-s) \times (\log (s) + \log \log (p-s)).
	\end{equation}
	Now, it is shown in \citep[Corollary 1]{wainwright2009information} 
	that the minimax optimal threshold for support recovery under these parameter specifications is $n \asymp s \log (p-s)$, which is achieved by Lasso \citep{wainwright2009sharp}. Amazingly, \eqref{threshold} coincides with this optimal threshold up to $\log (s) $ and $ \log \log(p-s)$ factors, despite SDI not being tailored to linear models. 

\paragraph{\textit{Maximum dimensionality}}
Suppose the signal strength $v$ is bounded above and below by a positive constant when the sample size increases. 
Then Theorems \ref{theorem lipschitz ii} and \ref{theorem monotone ii} show model selection consistency for SDI up to dimensionality $ p = \exp(o(n)) $. This is larger than the maximum dimensionality $ p = \exp(o(n^{2(d-1)/(2d+1)})) $ for NIS \citep[Section 3.2]{fan2011nonparametric}, thus applying to an even broader spectrum of ultra high-dimensional problems. Furthermore, when $ d = 2 $, SPAM is able to handle dimensionality up to $ p = \exp(o(n^{1/6})) $ \citep[Equation (45)]{ravikumar2009spam},
which is again lower than the dimensionality $ p = \exp(o(n)) $ for SDI. 
\section{Experiments} \label{experiments}


In this section, we conduct computer experiments of SDI with synthetic data.
As there are many existing empirical studies of 
the related MDI measure
\citep{kazemitabar2017variable,li2019debiased,louppe2014understanding,louppe2013understanding,lundberg2018consistent,strobl2007bias,wang2016experimental,wei2015variable},
we do not aim for comprehensiveness.

Our first set of experiments compare the performance of SDI with SIS, NIS, Lasso, and SPAM, MDI for a single CART decision tree, and MDI for a random forest. 
In Section \ref{exact}, we assess performance based on the probability of {\it exact support recovery}.
To ensure a fair comparison between SDI and the other algorithms, we assume a priori knowledge of the true sparsity level $ s $, which is incorporated
 into Lasso and SPAM by specifying the model degrees of freedom in advance. These simulations 
 were conducted in R using the packages \texttt{rpart} for SDI, \texttt{SAM} for SPAM, \texttt{SIS} for SIS, and \texttt{glmnet} for Lasso with default settings. We also compute two versions of MDI: MDI RF using the package \texttt{randomForest} with \texttt{ntrees = 100} and MDI CART (based on a pruned CART decision tree) using the package \texttt{rpart} with default settings. The source code from \citep{fan2011nonparametric} was used to conduct experiments with NIS.

The second set of experiments, in Section \ref{gamma_n_2}, deals with the case when the sparsity level $ s $ is unknown, whereby we demonstrate the empirical performance of the permutation and elbow methods from Section \ref{gamma_n}.

In all our experiments, we generate $n$ samples from an $s$-sparse additive model
$ g(\bX) =\sum_{j=1}^s g_j(X_j) $ for various types of components $g_j(X_j)$. 
The error distribution is $ \varepsilon \sim \mathcal{N}(0, \sigma^2) $,
the sparsity level is fixed at $s = 4$, and the ambient dimension is fixed at $ p = 2000 $. We consider the following model types.

\paragraph{Model 1}
Consider linear additive components $g_j(X_j) = X_j$ and variables $ \bX \sim \mathcal{N}(\bzero, \bSigma) $, where the covariance matrix $ \bSigma $ has diagonal entries equal to $ 1 $ and off-diagonal entries equal to some constant $ \rho \in (-1, 1) $. We set the noise level $\sigma^2 = 1$ and consider correlation level $\rho = 0.5$.

\paragraph{Model 2} 
Consider linear additive components $g_j(X_j) = X_j$ with noise level $\sigma^2 = 3$. The variables $X_2,\ldots,X_p$ are i.i.d standard normal random variables and $X_1= -\frac{1}{3}X_2^3+ \tilde{\varepsilon}$ where $\tilde{\varepsilon} \sim \calN(0,1)$. Notice that even though the model is linear, the marginal projections $\BE[Y|X_1]$ and $\BE[Y|X_2]$ are nonlinear. This is similar to Example 2 of \citep{fan2011nonparametric}.

\paragraph{Model 3} We consider additive components
$g_j(X_j) = \cos(4\pi X_j)$, where $ \bX \sim \text{Uniform}([0, 1]^p)$ 
(i.e., all predictor variables are independent) and $\sigma^2 = 1$.

\paragraph{Model 4} 
Consider the nonlinear additive components
\begin{align*}
& g_1(x) = 5x, \quad g_2(x) = 3(2x-1)^2, \quad g_3(x) = \frac{4\sin(2\pi x)}{2 - \sin(2 \pi x)},\\
& g_4(x) = 6(0.1\sin(2\pi x) + 0.2\cos(2\pi x)+0.3\sin^2(2\pi x)+0.4\cos^3(2\pi x) \\
& \quad \quad \quad \quad + 0.5\sin^3(2\pi x)).
\end{align*}
Let $ \bX \sim \text{Uniform}([0, 1]^p)$ and set the noise level $\sigma^2 = 1.74$. This is the same model as Example 3 of \citep{fan2011nonparametric} with $t = 0$ (and correlation $0$).



\paragraph{Model 5} 
We consider monotone additive components 
\begin{align*}
& g_1(x) = -\exp(x^2), \quad g_2(x) = -\log(x+0.1), \\
& g_3(x) = 2\tanh(20x^2)+0.5\exp(x^3), \quad g_4(x) = \frac{2\exp(10x - 5)}{1 + \exp(10x-5)}.
\end{align*}
The variables are $ \bX \sim \text{Uniform}([0, 1]^p)$ and we have noise level $\sigma^2 = 1$.
This is similar to Model A in \citep[Section 3]{bergersen2014monotone}.

In our simulations, Models 1 tests the correlated Gaussian linear model setting of Theorem \ref{sis}. Model 2 modifies the setting of Theorem \ref{sis} by including a relevant variable that is a nonlinear function of another relevant variable so that some marginal projections are nonlinear. Models 3 and 4 are examples of the setting of Theorem \ref{theorem lipschitz ii} for general nonparametric models. Model 5 considers the shape-constrained setting of Theorem \ref{theorem monotone ii}.
Though our main results apply to general nonparametric models, we have chosen to focus our experiments on additive models to facilitate comparison with other methods designed for the same setting.

\subsection{Exact support recovery} \label{exact}
For our experiments on exact recovery, we fix the sparisty level $ s = 4 $
and estimate the probability of exact support recovery by running $50$ independent replications and computing the fraction of replications which exactly recover the support of the model.
In Figure \ref{fig:exact}, we plot this estimated probability against various sample sizes $ n < p $, namely, $ n \in \{100, 200, 300, \dots, 1000 \} $.
In agreement with Theorem \ref{sis}, in Figure \ref{fig:1}, we observe that SDI and SIS exhibit similar behavior for correlated Gaussian linear models, a case in which all methods appear to achieve model selection consistency. However, in Figure \ref{fig:2}, we see that SDI is significantly more robust than SIS to a linear model when the marginal projections are nonlinear.
As expected, Figure \ref{fig:3} and \ref{fig:4} show that SDI, NIS, SPAM, and MDI significantly outperform SIS and Lasso when the model has nonlinear and non-monotone additive components.
For more irregular component functions such as sinusoids, SDI appears to 
outperform SPAM, as seen in Figure \ref{fig:3}. In agreement with Theorem \ref{theorem monotone ii}, Figure \ref{fig:5} shows that SDI appears to perform better for nonlinear monotone components, though all methods (even linear methods such as Lasso and SIS) appear to perform better as well. In general, for additive models, SDI appears to outperform MDI CART though it appears to sacrifice a small amount of accuracy for simplicity compared to its progenitor MDI RF.
In summary, in agreement with Theorems \ref{theorem lipschitz ii} and \ref{theorem monotone ii}, Figure \ref{fig:exact} demonstrates the robustness of SDI across various additive models. 

\begin{figure}[H] 
	\centering
	\begin{subfigure}[t]{0.3\textwidth}
	\centering
	\includegraphics[width=1\linewidth]{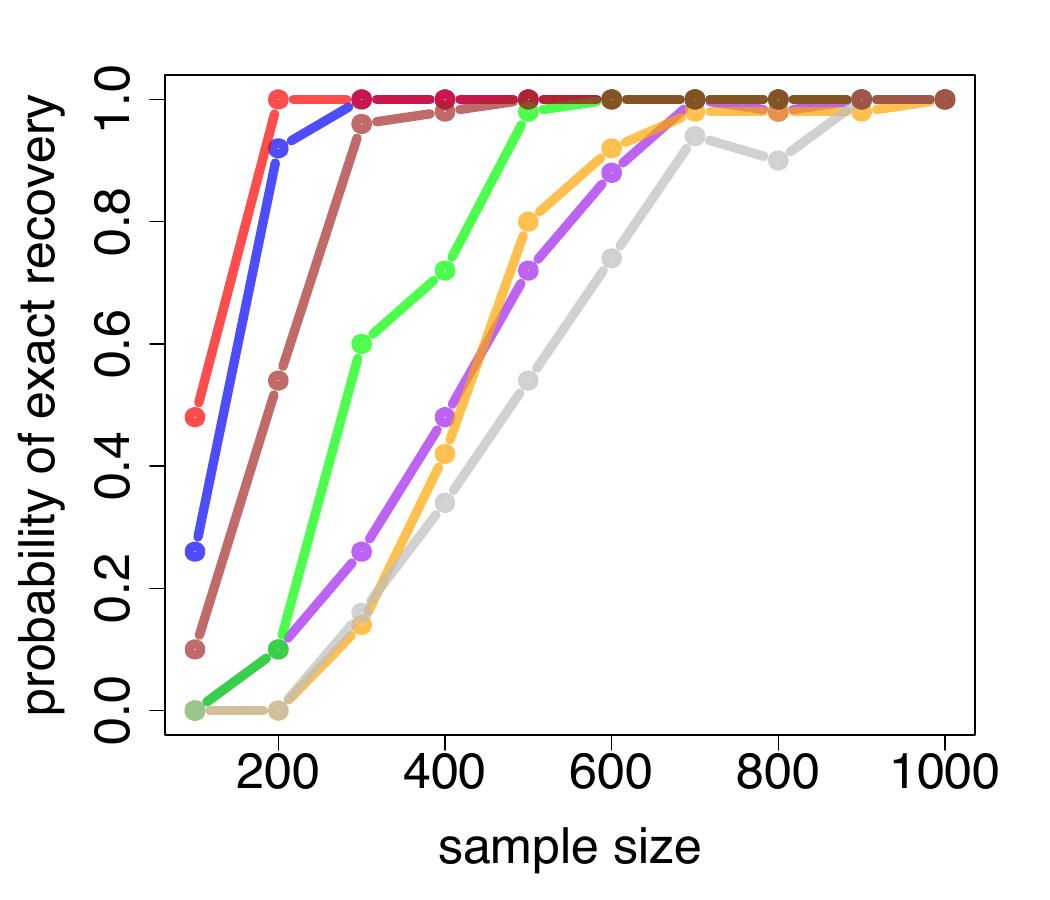}
	\caption{Model 1 (SNR $10.00$)}
			\label{fig:1}
	\end{subfigure}
	\begin{subfigure}[t]{0.3\textwidth}
	\centering
	\includegraphics[width=1\linewidth]{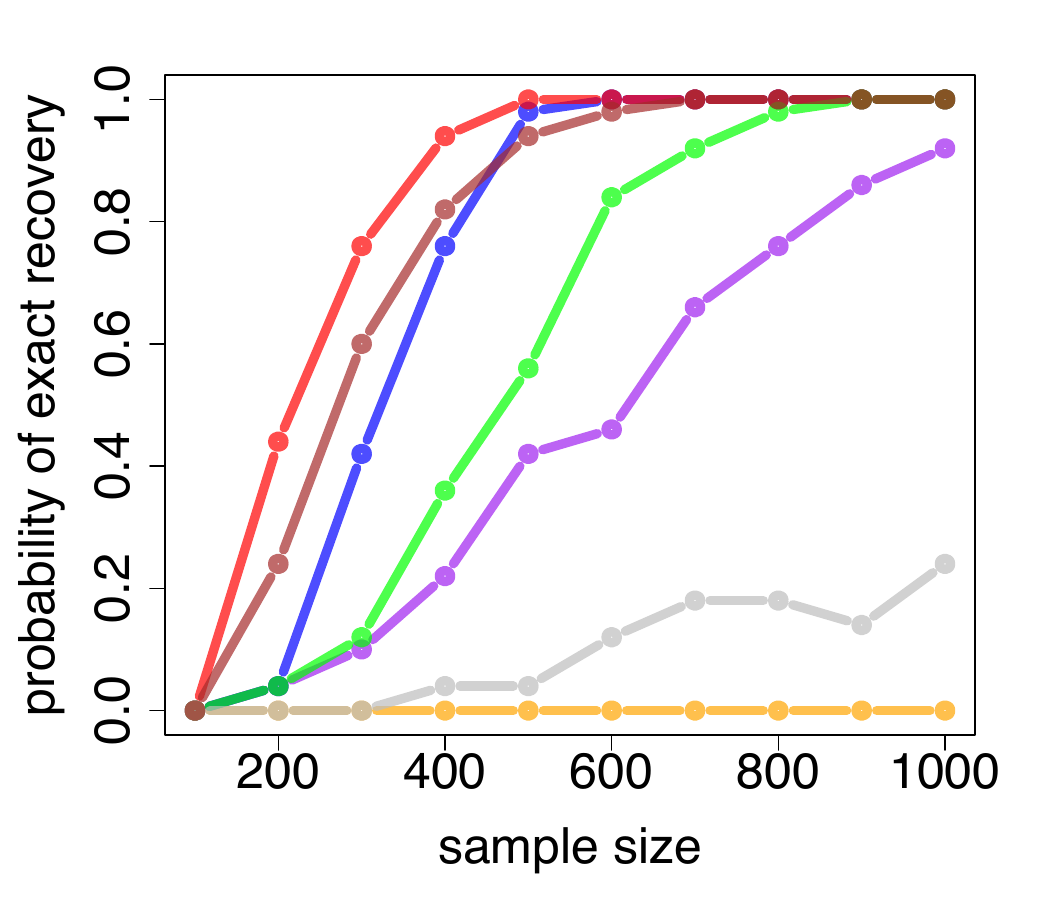}
	\caption{Model 2 (SNR $1.23$)}
	\label{fig:2}
	\end{subfigure}
	\begin{subfigure}[t]{0.3\textwidth}
		\centering
		\includegraphics[width=1\linewidth]{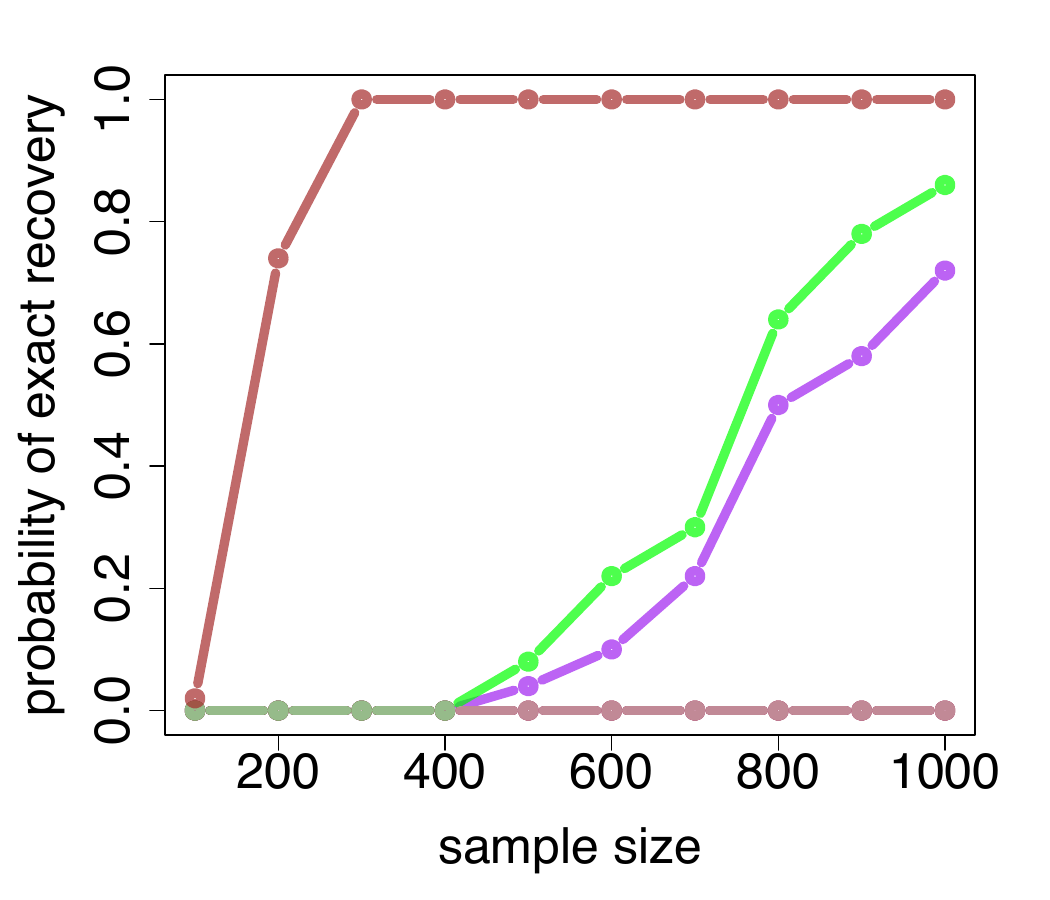}
		\caption{Model 3 (SNR $2.00$)}
				\label{fig:3}
	\end{subfigure}
	\begin{subfigure}[t]{0.3\textwidth}
		\centering
		\includegraphics[width=1\linewidth]{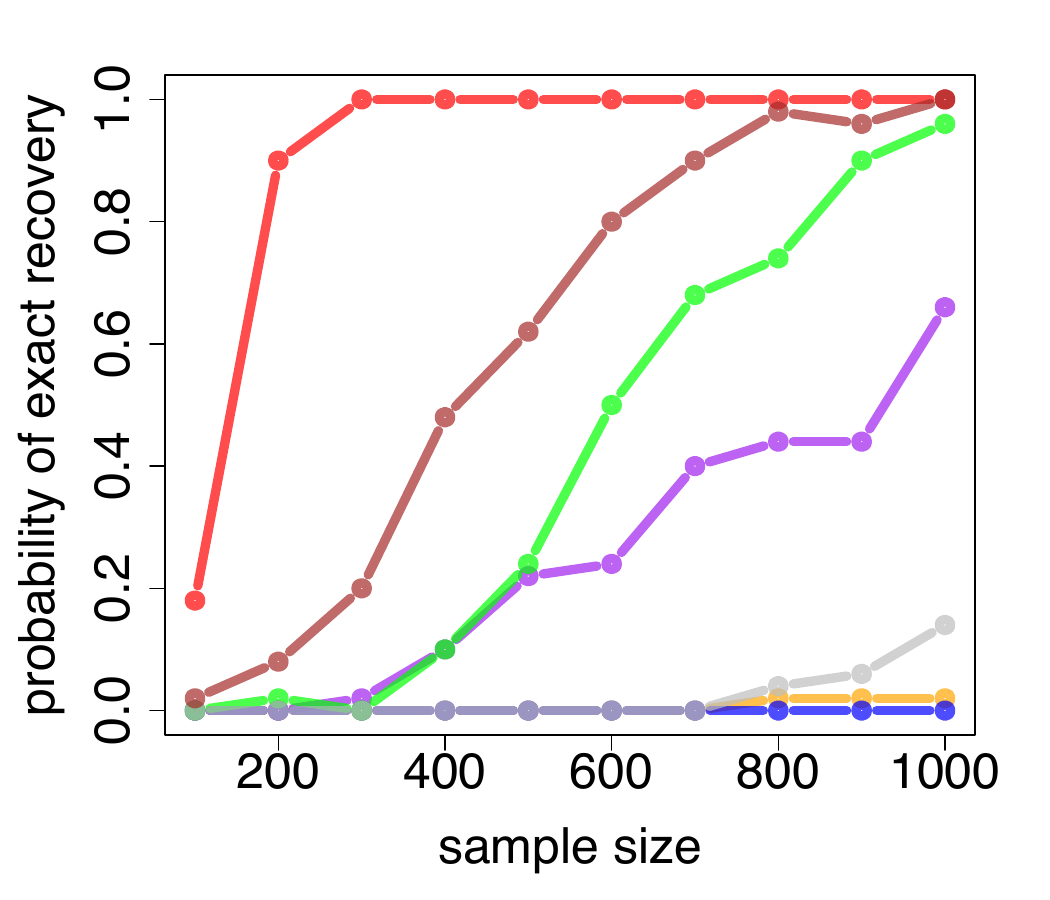}
		\caption{Model 4 (SNR $9.01$)}
		\label{fig:4}
	\end{subfigure}
	\begin{subfigure}[t]{0.3\textwidth}
	\centering
	\includegraphics[width=1\linewidth]{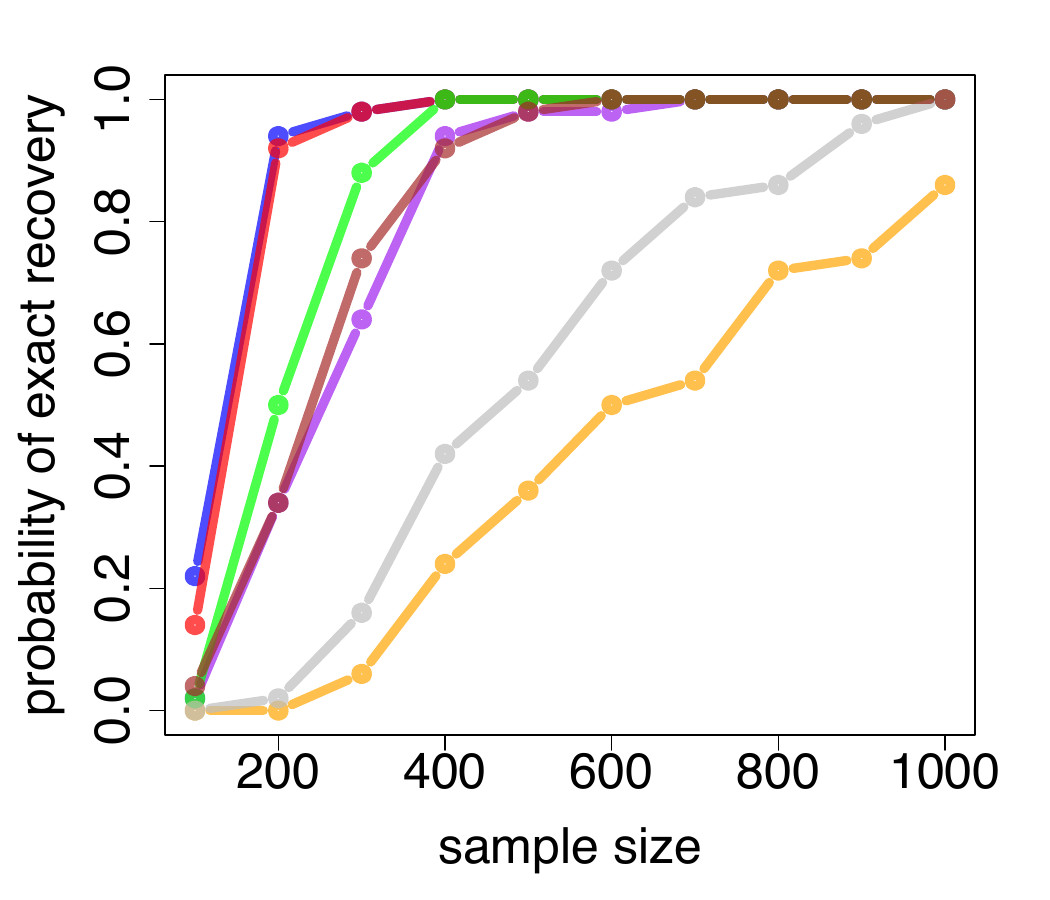}
	\caption{Model 5 (SNR $1.76$)}
	\label{fig:5}
	\end{subfigure}
	\begin{subfigure}[t]{0.3\textwidth}
	\centering
	\includegraphics[width=1\linewidth]{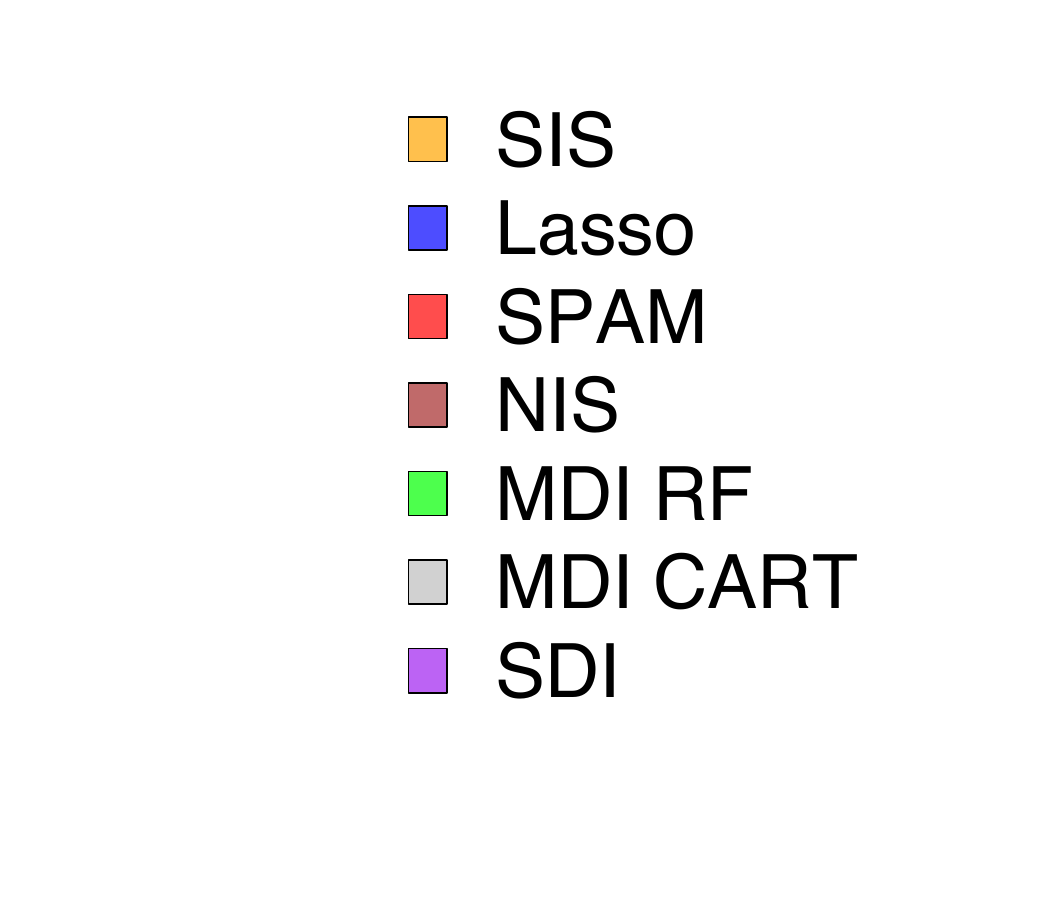}
	\end{subfigure}

	\caption{Plots of estimated $\BP(\widehat{\calS}=\calS)$ as $n$ increases for various models (approximate signal to noise ratio in parentheses).}
	\label{fig:exact}

	\end{figure} 




\subsection{Data driven choices of $\gamma_n$} \label{gamma_n_2}
We now consider the case when $s$ is unknown. As we have already demonstrated the performance of SDI in Section \ref{exact}, here we examine how well
the data-driven thresholding methods discussed in Section \ref{gamma_n} can estimate the true sparsity level $s$.

We first consider Model 5 and fix the sparsity level $s=4$ and the sample size $n=1000$.
In Figure \ref{fig:isdi}, we plot the probability of exact support recovery (averaged over $50$ independent replications) using the permutation method against the number of permutations. After a very small number of permutations, the significance level and performance of the algorithm appear to stabilize. 
For comparison, in Figure \ref{fig:elbow}, we show the graph of the ranked impurity reductions (SDI importance measures) used for the elbow method. When there is no correlation between irrelevant and relevant variables, as is the case with Model 5, the permutation method may be more precise than the elbow method.

However, as discussed in Section \ref{gamma_n}, the permutation method may be inaccurate if the irrelevant and relevant variables are correlated. To illustrate this, we now consider Model 1 with sparsity level $s=4$.
As we can see from Figure \ref{fig:isdi_cor}, the permutation method fails completely. Using $20$ permutations, it chooses a threshold $\gamma_n \approx 0.024$,
which will lead to all variables being selected, as can be seen from the ranked impurity reductions shown in Figure \ref{fig:elbow_half}. In other words, when there is strong correlation between the variables, the permutation method significantly underestimates the threshold $\gamma_n$, which in turn, creates too many false positives. In contrast, as can be seen from Figure \ref{fig:elbow_half}, the ranked impurity reductions still exhibit a distinct ``elbow'', from which the relevant and irrelevant variables can be discerned. 

\begin{figure}[H]
	\centering
\begin{subfigure}[t]{0.4\textwidth}
	\centering
	\includegraphics[width=1\linewidth]{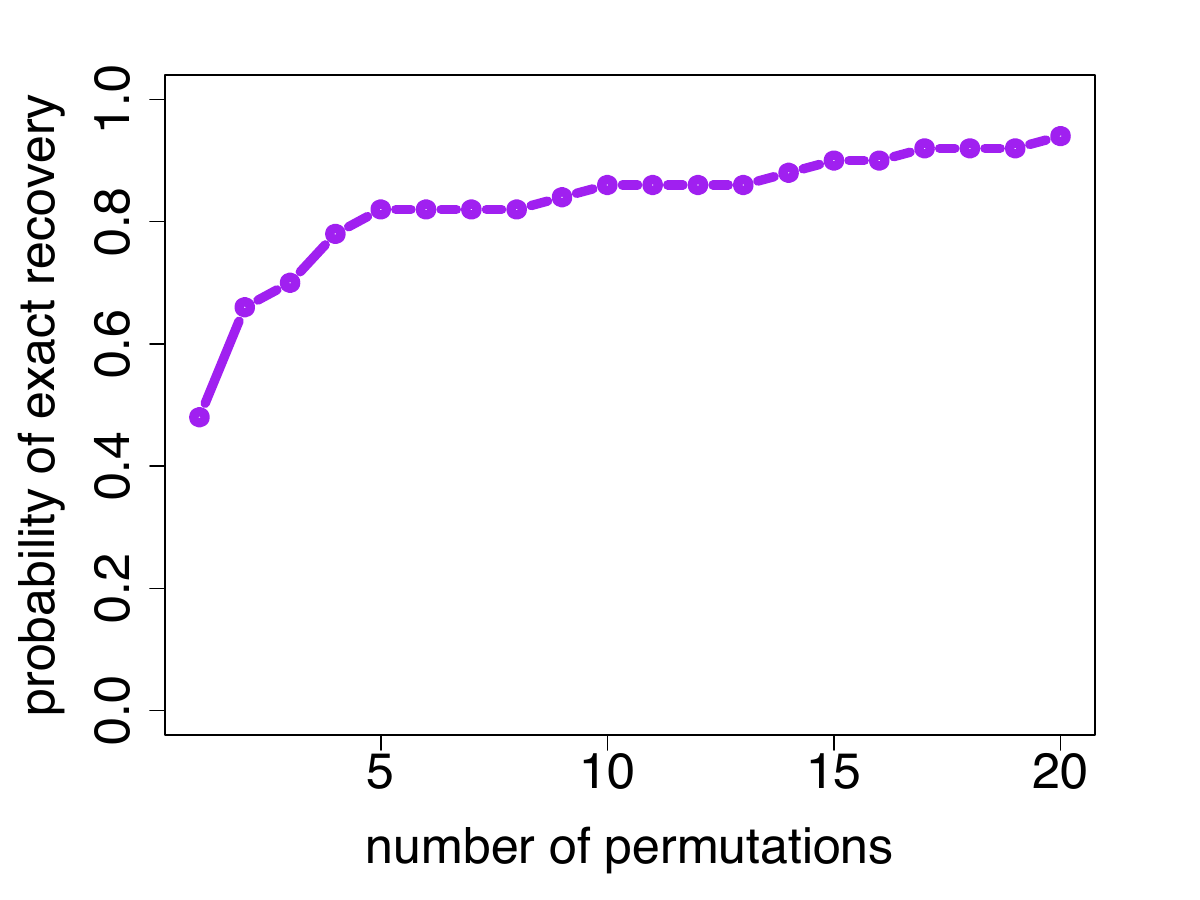}
	\caption{Permutation method (Model 5)}
	\label{fig:isdi}
\end{subfigure}
\hspace{0.3cm}
\begin{subfigure}[t]{0.4\textwidth}
	\centering
	\includegraphics[width=1\linewidth]{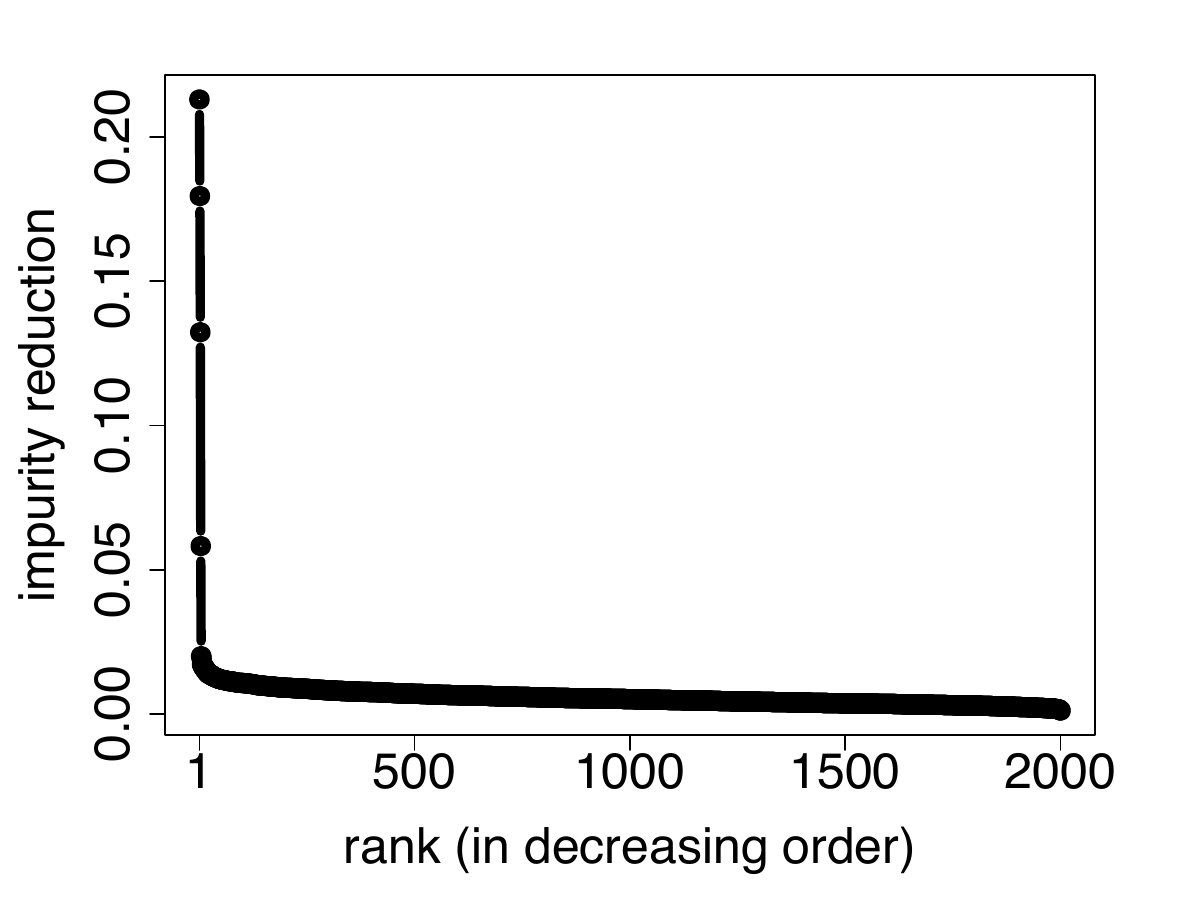}
	\caption{Elbow method (Model 5)}
	\label{fig:elbow}
\end{subfigure}
\hspace{0.3cm}
\begin{subfigure}[t]{0.4\textwidth}
	\centering
	\includegraphics[width=1\linewidth]{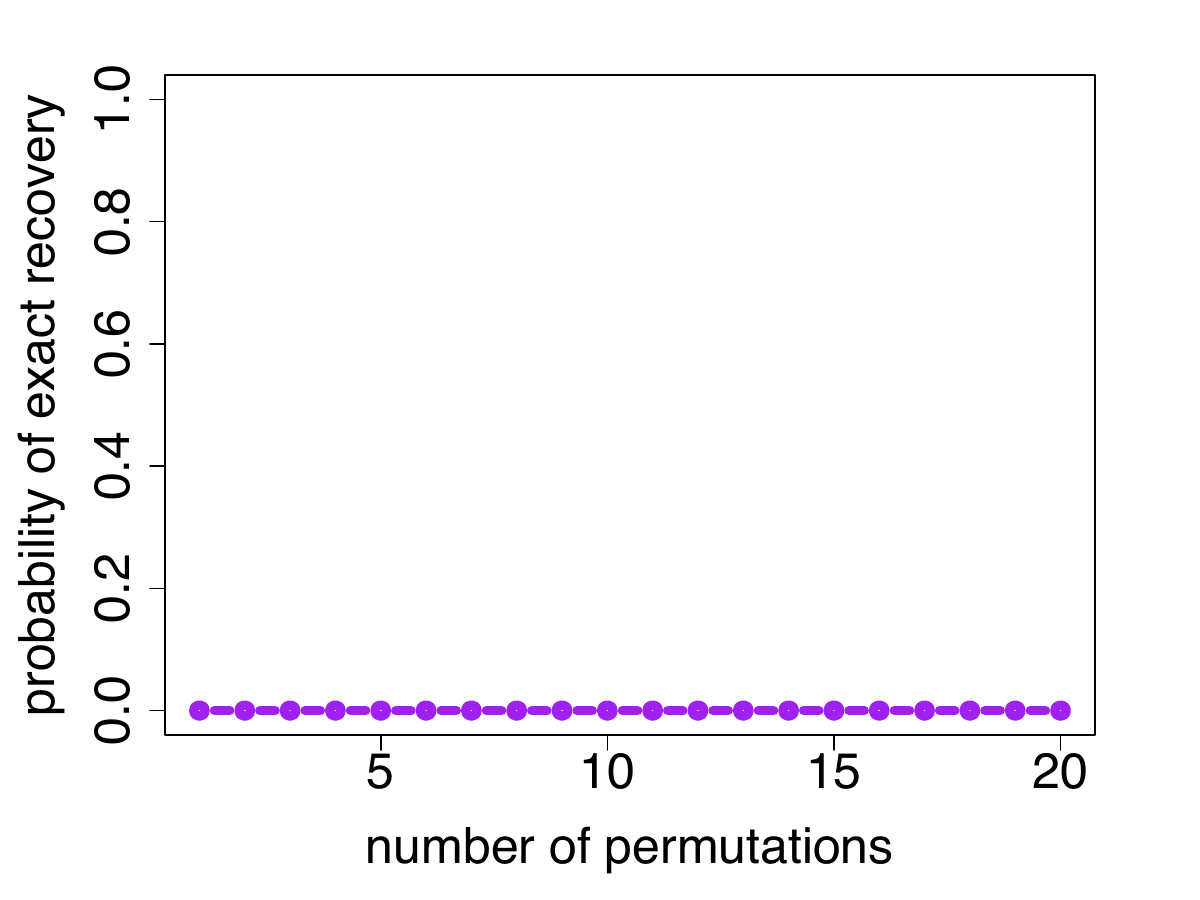}
	\caption{Permutation method (Model 1)}
	\label{fig:isdi_cor}
	\end{subfigure}
	\hspace{0.3cm}
	\begin{subfigure}[t]{0.4\textwidth}
	\centering
	\includegraphics[width=1\linewidth]{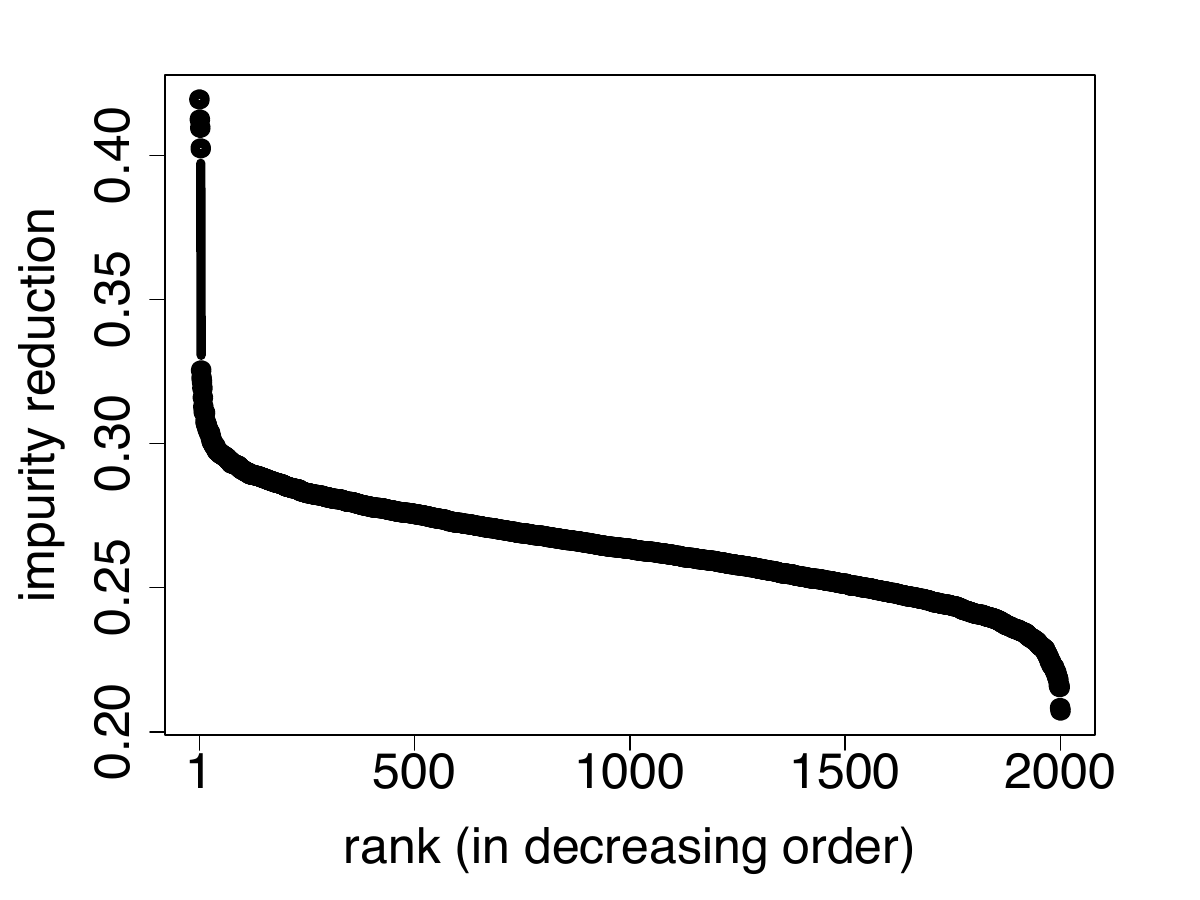}
	\caption{Elbow method (Model 1)}
	\label{fig:elbow_half}
\end{subfigure}
\caption{The probability of exact recovery by the number of random permutations used in the permutation method are shown in Figure \ref{fig:isdi} and Figure \ref{fig:isdi_cor} for models without and with correlation, respectively. Figures \ref{fig:elbow} and \ref{fig:elbow_half} show the plots of the corresponding ranked impurity reductions used for the elbow method.
}
\label{fig:gamma_n}
\end{figure}

\section{Discussion and conclusion} \label{conclusion}

In this paper, we developed a theoretically rigorous approach for variable selection based on decision trees. The underlying approach is simple, intuitive, and interpretable—we test whether a variable is relevant/irrelevant by fitting a decision stump to that variable and then determining how much it explains the variance of the response variable. 
Despite its simplicity, SDI performs favorably relative to its less interpretable competitors. Furthermore, due to the parsimony of the model, there is also no need to perform variable bandwidth selection or
calibrate the number terms in basis expansions.

On the other hand, we have sacrificed generality for analytical tractability. That is, decision stumps are poor at capturing interaction effects 
and, therefore, an importance measure built from a multi-level decision tree (such as MDI) may be more appropriate for models with more than just main effects. 
However, the presence of multi-level splits adds an additional layer of complexity to the analysis that, at the moment, we do not know how to overcome. It is also unclear how to leverage these additional splits to strengthen the theory. While out of the scope of the present paper, we view this as an important problem for future investigation. It is our hope that the tools in this paper can be used by other scholars to address this important issue. 

To conclude, we hope that our analysis of single-level decision trees for variable selection will shed further light on the unique benefits of tree structured learning.







\section{Appendix}
\setcounter{section}{0}
\def\thesection{\Alph{section}}




\section{Appendix}

\setcounter{section}{0}
\setcounter{subsection}{0}
\setcounter{subsubsection}{0}
\setcounter{equation}{0}
\setcounter{lemma}{0}
\setcounter{theorem}{0}
\setcounter{remark}{0}
\setcounter{proposition}{0}
\def\thesection{\Alph{section}}


\renewcommand{\theequation}{\thesection.\arabic{equation}}
\renewcommand{\thelemma}{\thesection.\arabic{lemma}}
\renewcommand{\thetheorem}{\thesection.\arabic{theorem}}
\renewcommand{\theremark}{\thesection.\arabic{remark}}
\renewcommand{\theproposition}{\thesection.\arabic{proposition}}

In this appendix, we first prove Lemma \ref{delta-tail} in detail in Appendix \ref{new-delta-tail}.
We then prove Theorem \ref{sis} on linear models with Gaussian variates in Appendix \ref{correlated predictors} and then use Theorem \ref{sis} to determine a sufficient sample size for model selection consistency (mentioned in Section \ref{comparisons}) in Appendix \ref{remark}. 
Finally, we prove Propositions \ref{proposition lipschitz} and \ref{proposition monotone} in Appendix \ref{23} and then prove Theorems \ref{theorem lipschitz ii} and \ref{theorem monotone ii} in Appendix \ref{45}. 

\section{Proof of Lemma \ref{delta-tail}} \label{new-delta-tail}
	Let $\pi$ be a permutation of the data such that $X_{\pi(1)j} \le X_{\pi(2)j} \le \cdots \le X_{\pi(n)j}$. 
	Recall from the representation \eqref{LR} that we have
	$$
	\widehat \Delta(X_j,Y) =   \max_{1 \le k \le n} \frac{k}{n} \Big( 1 - \frac{k}{n} \Big) \bigg(\underbrace{\frac{1}{k} \sum_{X_{ij} \le X_{\pi(k)j}} Y_i  - \frac{1}{n - k} \sum_{X_{ij} > X_{\pi(k)j}} Y_i}_{\text{(III)}}\bigg)^2.
	$$

	Now, since
	$$
	\sum_{i=1}^n\bigg(\frac{\mathbf{1}(X_{ij} \le X_{\pi(k)j})}{k} - \frac{\mathbf{1}(X_{ij} > X_{\pi(k)j})}{n-k}\bigg) = 0,
	$$
	we can rewrite (III) as
	\begin{align*}
	&\underbrace{\frac{1}{k} \sum_{X_{ij} \le X_{\pi(k)j}} (Y_i - f_j(X_{ij}))}_{\text{(a)}} 
	- \underbrace{\frac{1}{n - k} \sum_{X_{ij} > X_{\pi(k)j}} (Y_i - f_j(X_{ij}))}_{\text{(b)}} \\ 
	&+ \underbrace{\sum_{i=1}^n \Bigg(f_j(X_{ij}) - \frac{1}{n} \sum_{i=1}^n f_j(X_{ij}) \Bigg)\bigg(\frac{\mathbf{1}(X_{ij} \le X_{\pi(k)j})}{k} - \frac{\mathbf{1}(X_{ij} > X_{\pi(k)j})}{n-k}\bigg)}_{\text{(c)}}.
	\end{align*} 
	Therefore, we have that
	\begin{equation}\label{sos}
	\begin{aligned}
	\widehat \Delta(X_j,Y) & = \max_{1\le k \le n}  \frac{k}{n} \Big( 1 - \frac{k}{n} \Big) (\text{(a)} - \text{(b)} + \text{(c)})^2 \\
	& \le 3\max_{1\le k \le n} \frac{k}{n} \Big( 1 - \frac{k}{n} \Big)\text{(a)}^2 + 3\max_{1\le k \le n}  \frac{k}{n} \Big( 1 - \frac{k}{n} \Big)\text{(b)}^2 + \\ & \qquad 3 \max_{1\le k \le n} \frac{k}{n} \Big( 1 - \frac{k}{n} \Big)\text{(c)}^2,
	\end{aligned}
	\end{equation}
	where we use, in succession, the inequality $ (x-y+z)^2 \leq 3(x^2+y^2+z^2) $ for any real numbers $ x $, $ y $, and $ z $, and the fact that the maximum of a sum is at most the sum of the maxima.
	To finish the proof, we will bound the terms involving $\text{(a)}^2$, $\text{(b)}^2$, and $\text{(c)}^2$ separately. 
	
	For the last term in \eqref{sos}, notice that by the Cauchy-Schwartz inequality we have
	\begin{align*}
	\frac{k}{n} \Big(1 - \frac{k}{n} \Big)\text{(c)}^2 &=\frac{k}{n} \Big(1 - \frac{k}{n} \Big) 
	\Bigg[\sum_{i=1}^n \Big(f_j(X_{ij}) - \frac{1}{n} \sum_{i=1}^n f_j(X_{ij}) \Big) \times\\ 
	& \qquad\qquad\qquad \bigg(\frac{\mathbf{1}(X_{ij} \le X_{\pi(k)j})}{k} - \frac{\mathbf{1}(X_{ij} > X_{\pi(k)j})}{n-k}\bigg)\Bigg]^2 \\
	&\le \frac{k}{n} \Big(1 - \frac{k}{n} \Big) \sum_{i=1}^n\Big( f_j(X_{ij}) - \frac{1}{n} \sum_{i=1}^n f_j(X_{ij}) \Big)^2  \times \\ 
	& \qquad\qquad\qquad \sum_{i=1}^n \bigg(\frac{\mathbf{1}(X_{ij} \le X_{\pi(k)j})}{k} - \frac{\mathbf{1}(X_{ij} > X_{\pi(k)j})}{n-k}\bigg)^2,
	\end{align*}
	which is exactly equal to
	$$
	\frac{k}{n} \Big(1 - \frac{k}{n} \Big) 
	\Big[n \widehat{\Var}(f_j(X_j))  \Big(k \cdot \frac{1}{k^2} + (n-k) \cdot \frac{1}{(n-k)^2} \Big)\Big] = \widehat{\Var}(f_j(X_j)).
	$$
	Therefore we have shown that
	\begin{equation} \label{(c)}
	\max_{1\le k \le n} \frac{k}{n} \Big(1 - \frac{k}{n} \Big)\text{(c)}^2\le \widehat{\Var}(f_j(X_j)).
	\end{equation}
	To bound the first term in \eqref{sos}, by a union bound we have that 
	\begin{align}
	&\BP\Big(\max_{1\le k \le n}  \frac{k}{n} \Big(1 -\frac{k}{n}\Big) \text{(a)}^2 > \frac{\xi^2}{6}\Big) \nonumber  \\
	&\le \sum_{k=1}^n \BP\Big(\frac{k}{n} \Big(1 -\frac{k}{n}\Big)\text{(a)}^2  > \frac{\xi^2}{6} \Big) \nonumber \\
	&= \sum_{k=1}^n \BP\Big(\frac{k}{n} \Big(1 -\frac{k}{n}\Big) \Big(\frac{1}{k} \sum_{X_{ij} \le X_{\pi(k)j}} (Y_i - f_j(X_{ij})) \Big)^2 > \frac{\xi^2}{6} \Big). \label{probtail}
	\end{align}
	Next, notice that, conditional on $ X_{1j}, \dots, X_{nj} $, $ \sum_{X_{ij} \le X_{\pi(k)j}} (Y_i - f_j(X_{ij})) $ is a sum of $ k $ independent, sub-Gaussian, mean zero random variables.
	Thus, by the law of total probability, we have that \eqref{probtail} is equal to
	\begin{equation*}
	\sum_{k=1}^n \BE\Bigg[\BP\Bigg(\bigg|\frac{1}{k} \sum_{X_{ij} \le X_{\pi(k)j}} (Y_i - f_j(X_{ij})) \bigg| > \xi \sqrt{\frac{n^2}{6k(n-k)}} \quad \Bigg|  \quad X_{1j}, \dots, X_{nj} \Bigg)\Bigg] 
	\end{equation*}
	and, by Hoeffding's inequality for sub-Gaussian random variables, is bounded by
	\begin{equation*}
	\sum_{k=1}^n 2 \exp\Big(-k \frac{\xi^2n^2}{12k(n-k) \sigma^2_{Z_j}} \Big) \le 2n \exp\Big(-\frac{\xi^2n}{12\sigma^2_{Z_j}}\Big).
	\end{equation*}
Note that here we have implicitly used the fact that $ \BE[\exp(\lambda Z_j) | X_j] \leq \exp(\lambda^2\sigma^2_{Z_j}/2) $.
	It thus follows that with probability at least $1 - 2n \exp\big(-\frac{\xi^2n}{12\sigma^2_{Z_j}}\big)$ that
	\begin{equation} \label{(a)}
	\max_{1\le k \le n} \frac{k}{n} \Big(1 -\frac{k}{n}\Big) \text{(a)}^2 \le \frac{\xi^2}{6}.
	\end{equation} 
	A similar argument shows that with
	probability at least $1 - 2n \exp\Big(-\frac{\xi^2n}{12\sigma^2_{Z_j}}\Big)$, the second terms in \eqref{sos} obeys
	\begin{equation}\label{(b)}
	\max_{1\le k \le n} \frac{k}{n} \Big(1 -\frac{k}{n}\Big) \text{(b)}^2 \le \frac{\xi^2}{6}.
	\end{equation} 
	Therefore, substituting \eqref{(c)}, \eqref{(a)}, and \eqref{(b)} into \eqref{sos} and using a union bound, it follows that with probability at least $ 1- 4n \exp\big(-\frac{\xi^2n}{12\sigma^2_{Z_j}}\big)$, 
	$$
	\widehat \Delta(X_j,Y) \leq 3\widehat\Var(f_j(X_j)) + \xi^2.
	$$
	


\section{Proof of Theorem \ref{sis}} \label{correlated predictors}

The goal of this section is to prove Theorem \ref{sis}. In the first section, we prove the lower bound \eqref{sis-lower} and in the second section, we prove the upper bound \eqref{sis-upper}.

Throughout this section, for brevity, we let $ \rho_j = {\rm{Cor}}(X_j, Y) \neq 0 $. 

\subsection{Proof of the lower bound \eqref{sis-lower}}
Choosing $h(X_j) = X_j$ (which is monotone) in Lemma \ref{corr} to get that
$$
\widehat\Delta(X_j, Y) \geq \frac{1}{\log(2n)+1} \times\widehat{\Cov}^2\Bigg(\frac{X_j}{\sqrt{\widehat{\Var}(X_j)}}, Y\Bigg) = \frac{\widehat \Var(Y)}{\log(2n)+1} \times\widehat{\rho}^{\;2}(X_j, Y).
$$
Now observe that $ \widehat\rho\, ( X_j, Y) $ is the empirical Pearson sample correlation between two correlated normal distributions.
If $\rho_j> 0$, by \citep[Equation (44)]{hotelling1953correlation}, we have that
\begin{equation} \label{+}
\begin{aligned}
&\BP(\widehat \rho\,(X_j,Y) > (1 - \delta)\rho_j )\\
&= 1 - \BP(\widehat \rho\,(-X_j,Y) > - (1-\delta)\rho_j) \\
&\sim 1 - (2\pi)^{-1/2} \frac{\Gamma(n)}{\Gamma(n+1/2)}(1-\rho_j^2)^{n/2} (1-[(1-\delta)\rho_j]^2)^{(n-1)/2} \\
&\qquad \times (-(1-\delta)\rho_j - (-\rho_j))^{-1} (1 - (-\rho_j)(-(1-\delta)\rho_j) )^{-n + 3/2}(1+\calO(n^{-1})).
\end{aligned}
\end{equation}
If $\rho_j <0$ we can show the same bound on $\BP(\widehat \rho\,(X_j,Y) < (1 - \delta)\rho_j)$. Again by \citep[Equation (44)]{hotelling1953correlation}, we have the similar bound
\begin{equation} \label{-}
\begin{aligned}
&\BP(\widehat \rho\,(X_j,Y) < (1 - \delta)\rho_j ) \\
&= 1 - \BP(\widehat \rho\,(X_j,Y) > (1-\delta)\rho_j) \\
&\sim 1 - (2\pi)^{-1/2} \frac{\Gamma(n)}{\Gamma(n+1/2)}(1-\rho_j^2)^{n/2} (1-[(1-\delta)\rho_j]^2)^{(n-1)/2} \\
& \qquad \times ((1-\delta)\rho_j - \rho_j)^{-1} (1 - \rho_j\times (1-\delta)\rho_j) )^{-n + 3/2}(1+\calO(n^{-1})).
\end{aligned}
\end{equation}
Therefore because of \eqref{+} and \eqref{-}, regardless of the sign of $\rho_j$, it follows that there exists a universal constant $C_0$
for which 
\begin{align}
&\BP(|\widehat \rho\,(X_j,Y)| > (1 - \delta)|\rho_j|) \nonumber\\
&\ge 1- \frac{C_0}{\sqrt{2\pi} \delta |\rho_j|} \frac{\Gamma(n)}{\Gamma(n+1/2)} (1-\rho_j^2)^{\frac{n}{2}} (1-(1-\delta)^2\rho_j^2)^{\frac{n-1}{2}} (1 -(1-\delta)\rho_j^2)^{-n + \frac{3}{2}} \nonumber \\
&\ge 1-  \frac{C_0}{\sqrt{n} \delta |\rho_j|} \exp\big(-\rho_j^2n/2 - (1-\delta)^2\rho_j^2(n-1)/2 +(1-\delta)\rho_j^2(n-3/2)\big) \label{second} \\
&= 1- \frac{C_0}{\sqrt{n \delta^2 \rho_j^2}} \exp\big(-\rho_j^2n\delta^2/2+\rho_j^2(1-\delta)^2/2 -3(1-\delta)\rho_j^2/2\big) \nonumber \\
&\ge 1- \frac{C_0}{\sqrt{n \delta^2 \rho_j^2}} \exp(-\rho_j^2n\delta^2/2), \nonumber
\end{align}
where we used $\exp(x) \ge 1 + x$ and Wendel's inequality \citep{wendel2001gamma} $\frac{\Gamma(n)}{\Gamma(n+1/2)} \leq \sqrt{\frac{n+1/2}{n}} \frac{1}{\sqrt{n}} \le \sqrt{\frac{2\pi}{n}}$
in the second inequality \eqref{second}. 
Thus, we have that with probability at least $ 1 - \frac{C_0}{\sqrt{n \delta^2 \rho_j^2}} \exp(-\rho_j^2n\delta^2/2)$ that
$$
\widehat\Delta(X_j, Y) \geq \frac{(1-\delta)^2\widehat{\Var}(Y)\rho^2_j}{
	\log(2n)+1} \iff \widehat\rho^{\,2}(\widehat Y(X_j),Y) \ge \frac{(1-\delta)^2\rho_j^2}{\log(2n)+1}.
$$
This completes the first half of the proof of Theorem \ref{sis}. 
	
	\subsection{Proof of the upper bound \eqref{sis-upper}}

We first state the following sample variance concentration inequality, which will be helpful.

\begin{lemma}\label{chi-squared}
	Let $Z_1, \dots, Z_n$ be i.i.d. $ \calN(0, \sigma^2_Z) $. For any $0<\delta<1$, we have
	\begin{equation}\label{lower}
	\BP(\widehat{\Var}(Z)  \ge (1-\delta) \frac{n-1}{n}\sigma^2_Z) \ge 1-\exp(-\delta^2(n-1)/4)
	\end{equation}
	and
	\begin{equation}\label{upper}
	\BP (\widehat{\Var}(Z)  \le (1+\delta) \frac{n-1}{n}\sigma^2_Z) \ge1-\exp(-(n-1)(1+\delta-\sqrt{1+2\delta})/2).
	\end{equation}
\end{lemma}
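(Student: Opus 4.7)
The plan is to reduce this to a classical chi-squared tail bound. Since $Z_1,\ldots,Z_n \iiddistr \mathcal{N}(0,\sigma_Z^2)$, Cochran's theorem gives
$$W \coloneqq \frac{n\widehat{\Var}(Z)}{\sigma_Z^2} = \frac{1}{\sigma_Z^2}\sum_{i=1}^n (Z_i - \overline Z)^2 \sim \chi^2_{n-1}.$$
Setting $k = n-1$, the two claimed bounds become
$$\BP(W < (1-\delta)k) \leq \exp(-\delta^2 k/4), \qquad \BP(W > (1+\delta)k) \leq \exp(-k(1+\delta-\sqrt{1+2\delta})/2).$$
Both will be obtained from the Laurent--Massart one-sided inequalities for chi-squared variables, namely $\BP(W \le k - 2\sqrt{kx}) \le e^{-x}$ and $\BP(W \ge k + 2\sqrt{kx} + 2x) \le e^{-x}$ for every $x > 0$.

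For the lower tail \eqref{lower}, I would set $x = \delta^2 k/4$ in the first Laurent--Massart bound. Then $2\sqrt{kx} = \delta k$, so $k - 2\sqrt{kx} = (1-\delta)k$, and the inequality immediately yields $\BP(W \le (1-\delta)k) \le \exp(-\delta^2(n-1)/4)$, which is equivalent to \eqref{lower} after dividing by $n$ and multiplying by $\sigma_Z^2$ inside the event.

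For the upper tail \eqref{upper}, I would choose $x$ so that $k + 2\sqrt{kx} + 2x = (1+\delta)k$. Writing $u = \sqrt{x}$, this becomes the quadratic $2u^2 + 2\sqrt{k}\,u - \delta k = 0$, whose positive root is $u = \tfrac{\sqrt{k}}{2}(\sqrt{1+2\delta} - 1)$. Squaring gives
$$x = \frac{k}{4}(\sqrt{1+2\delta}-1)^2 = \frac{k}{2}\bigl(1+\delta - \sqrt{1+2\delta}\bigr).$$
Substituting into the second Laurent--Massart bound yields precisely \eqref{upper}.

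There is no real obstacle: the lemma is a direct application of a textbook concentration inequality, and the only nontrivial bookkeeping is the algebraic inversion for the upper-tail threshold. As an alternative route, one could instead do a direct Chernoff argument using the chi-squared MGF $\BE[e^{tW}] = (1-2t)^{-k/2}$ for $t < 1/2$, optimizing over $t$; this also recovers both bounds, though the upper-tail version arising from the Chernoff optimum naturally takes the form $\exp(\tfrac{k}{2}(\log(1+\delta) - \delta))$, which one would then weaken to the stated $\sqrt{1+2\delta}$ expression via an elementary inequality. Either route is short, so I would simply cite Laurent--Massart and plug in the above choices of $x$.
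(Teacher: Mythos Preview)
Your proposal is correct and follows essentially the same approach as the paper: reduce to a $\chi^2_{n-1}$ via Cochran's theorem and then apply the Laurent--Massart inequalities with exactly the same choices of $x$. The only difference is cosmetic—you explicitly solve the quadratic for the upper-tail threshold, whereas the paper simply states the value $x = (n-1)(1+\delta-\sqrt{1+2\delta})/2$ and plugs it in.
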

\begin{proof}[Proof of Lemma \ref{chi-squared}] 
	Since $Z_i$ are independent and normally distributed, by Cochran's theorem we have $ \widehat{\Var}(Z) \sim \frac{\sigma^2_Z}{n}\chi^2_{n-1} $. In the notation of \citep{laurent2000adaptive}, choosing $D = n-1$ and $x = \delta^2(n-1)/4$ for the chi-squared concentration inequality (4.4) in \citep{laurent2000adaptive}, we have that
	\begin{align*}
	\mathbb{P}\Big( \widehat{\Var}(Z)  \ge (1-\delta) \frac{n-1}{n}\sigma^2_Z\Big) 
	&= 1 - \BP(\chi_{n-1}^2 < (1-\delta)(n-1))\\
	&\geq 1-\exp(-\delta^2(n-1)/4),
	\end{align*}
	proving \eqref{lower}.
	For \eqref{upper}, 
	choosing $D=n-1$ and $x = (n-1)(1+\delta-\sqrt{1+2\delta})/2$
	in \citep[Equation (4.3)]{laurent2000adaptive} we see that
	\begin{align*}
	\mathbb{P}\Big( \widehat{\Var}(Z)  \le (1+\delta) \frac{n-1}{n}\sigma^2_Z\Big) &= 1 - \BP(\chi_{n-1}^2 >(1+\delta)(n-1)) \\
	&\geq 1-\exp(-(n-1)(1+\delta-\sqrt{1+2\delta})/2). \qedhere
	\end{align*}
\end{proof}
Now we are ready to prove the upper bound \eqref{sis-upper}. 
We begin with the inequality \eqref{Delta-additive}, as shown in the proof sketch of Theorem \ref{sis}. We aim to upper bound the right hand side of  \eqref{Delta-additive} using Lemma \ref{chi-squared}.
Since the samples $ X_{1j}, \dots, X_{nj} $ are i.i.d., using \eqref{upper} and choosing $\delta = 1$, we find that with probability at least $1-\exp\big(-(n-1)\frac{2-\sqrt{3}}{2}\big) \ge 1 - \exp(-(n-1)/16) $, we have that  $\widehat{\Var}(X_j)  \le 2\sigma^2_{X_j}$. Similarly, choosing $ \delta = 1/2$ in \eqref{lower}, we also have that with probability at least $1 - \exp(-(n-1)/16)$ that $\widehat \Var(Y) \ge \sigma_Y^2/4$. Substituting these concentration inequalities into the right hand side of \eqref{Delta-additive}, it follows by a union bound
that with probability at least $1 - 4n\exp(-n\delta^2/12) -2 \exp(-(n-1)/16) $,
$$
\widehat \Delta(X_j, Y)  \le 24 \widehat \Var(Y) \rho_j^2 + 4\delta^2\widehat \Var(Y) \iff \widehat\rho^{\,2}(\widehat Y(X_j),Y) \le 24 \rho_j^2 + 4\delta^2.
$$
Finally, noticing that $\sqrt{24 \rho_j^2 + 4\delta^2} < 5 |\rho_j| + 2 \delta$ completes the proof.

 \section{Proof of Model Selection Consistency for Linear Models} \label{remark}

Recall the setting mentioned under the heading ``\textbf{\textit{Minimum sample size for consistency}}''  in Section \ref{comparisons}, which considers the same linear model with Gaussian variates from Theorem \ref{sis}. To reiterate, we assume that $ \bSigma = \mathbf{I}_{p\times p} $
is the $ p\times p $ identity matrix, $ \sum_{k=1}^p \beta^2_k = \calO(1)$, and $\min_{j\in\calS}|\beta_j|^2 \asymp 1/s$, all of which are special cases of the more general setting considered in \citep[Corollary 1]{wainwright2009information}.
Under these assumptions, we then have $ \rho^2(X_j, Y) = \beta^2_j/(\sigma^2+\sum_{k=1}^p \beta^2_k) \gtrsim 1/s$ for any $j\in\calS$ and $ \rho(X_j, Y) = 0 $ for $ j \in \calS^c $. Our goal is to show that $n \asymp s \log (n) \log(n(p-s))$ samples suffice for high probability model selection consistency.


Choosing $\delta=1/2$ in \eqref{sis-lower} applied to $j\in\calS$ and using \eqref{impurity corr}, there exists a universal positive constant $C_0$ such that with probability at least $ 1 - \frac{2C_0}{\sqrt{n \rho^2(X_j,Y)}} \exp(-n\rho^2(X_j,Y)/8)$, we have
\begin{align*}
\widehat\Delta(X_j,Y) &\ge \widehat{\Var}(Y) \times \frac{\rho^2(X_j,Y)}{4(\log(2n)+1)} \\
& = \frac{\widehat{\Var}(Y) }{4(\log(2n)+1)} \times \frac{\beta_j^2}{\sigma^2+\sum_{k=1}^p \beta_k^2} \\
& \gtrsim \frac{\widehat{\Var}(Y)}{s\log(n)}.
\end{align*}
Therefore by a union bound over all $s$ relevant variables, we have that with probability at least $ 1 -s \max_{j\in\calS}\{\frac{2C_0}{\sqrt{n  \rho^2(X_j,Y)}} \exp(-n\rho^2(X_j,Y)/8)\}$, 
\begin{equation} \label{sis-lower2}
\widehat\Delta(X_j,Y) \gtrsim \frac{\widehat{\Var}(Y)}{s\log(n)} \qquad \forall j \in \calS.
\end{equation}
Furthermore, by applying \eqref{sis-upper} for $j\in \calS^c$ (and noting that $\rho(X_j,Y)=0$) and using \eqref{impurity corr}, 
with probability at least $ 1 - 4n \exp(-\delta^2 n/12) - 2\exp(-(n-1)/16) $, we have
$$
\widehat\Delta(X_j,Y) \le 4\widehat{\Var}(Y) \delta^2.
$$
Therefore by a union bound over all $p-s$ irrelevant variables we have that with probability at least $ 1 - 4n(p-s) \exp(-\delta^2 n/12) - 2(p-s)\exp(-(n-1)/16)$,
$$
\widehat\Delta(X_j,Y) \le 4\widehat{\Var}(Y) \delta^2 \qquad \forall j \in \calS^c.
$$
Now, choosing $\delta^2 = \frac{C_3}{s\log(n)}$ for some appropriate constant $C_3>0$ which only depends on $\sigma^2$ to match \eqref{sis-lower2}, we see by a union bound that
\begin{equation} \label{consistency}
\begin{aligned}
\BP(\widehat{\calS} = \calS) &\geq  1 - \max_{j\in\calS}\Big\{\frac{2C_0s}{\sqrt{n \rho^2(X_j,Y)}} \exp\Big(-\frac{n\rho^2(X_j,Y)}{8}\Big)\Big\} \\
& \qquad -  4n(p-s) \exp\Big(-\frac{C_3 n}{12s\log(n)}\Big) - 2(p-s)\exp\Big(-\frac{(n-1)}{16}\Big).
\end{aligned}
\end{equation}
Since $\rho^2(X_j, Y) \gtrsim 1/s$ for all $j\in\calS$, \eqref{consistency} implies that if $ n(p-s) \exp\big(-\frac{C_3 n}{12s\log(n)}\big) \rightarrow 0 $, then $  \BP(\widehat{\calS} = \calS) \rightarrow 1 $. Hence, a sufficient sample size for consistent support recovery is
$$
n \asymp s \log (n) \log(n(p-s)),
$$
as desired.

\section{Proof of Propositions \ref{proposition lipschitz} and \ref{proposition monotone}} \label{23}

This section will mainly be devoted to proving Proposition \ref{proposition lipschitz}. First we will present the machinery which will be used to prove Proposition \ref{proposition lipschitz}. At the end of the section we will complete the proof of Proposition \ref{proposition lipschitz} and prove Proposition \ref{proposition monotone} by recycling and simplifying the proof of Proposition \ref{proposition lipschitz}.



First, we state and prove a lemma that will be used in later proofs. Though stated in terms of general probability measures, we will be specifically interested in the case where $ \mathbb{P}  $ is the empirical probability measure $ \mathbb{P}_n $ and $ \mathbb{E} $ is the empirical expectation $ \mathbb{E}_n $, both with respect to a sample of size $n$.


\begin{lemma} \label{measure}
	For any random variables $ U $ and $ V $ with finite second moments with respect to a probability measure $\BP $,
	$$
	\Cov_\BP\Big(\frac{U}{\sqrt{\Var_\BP(U)}}, V\Big) \geq \sqrt{\Var_\BP(V)} - 2\sqrt{\mathbb{E}_\BP[(U-V)^2]}.
	$$
\end{lemma}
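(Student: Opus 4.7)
The plan is to prove the lemma via a direct computation using bilinearity of covariance, Cauchy--Schwarz, and the reverse triangle inequality for the $L^2$ semi-norm $\sqrt{\Var_\BP(\cdot)}$.

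First I would write $V = U + (V-U)$ and expand using bilinearity:
\[
\Cov_\BP(U,V) = \Var_\BP(U) + \Cov_\BP(U, V-U).
\]
Next, I would apply the Cauchy--Schwarz inequality to the second term and use the fact that variance is bounded by the raw second moment:
\[
|\Cov_\BP(U, V-U)| \leq \sqrt{\Var_\BP(U)\,\Var_\BP(V-U)} \leq \sqrt{\Var_\BP(U)\,\mathbb{E}_\BP[(U-V)^2]}.
\]
Combining these yields the intermediate bound
\[
\Cov_\BP(U,V) \geq \sqrt{\Var_\BP(U)}\Bigl(\sqrt{\Var_\BP(U)} - \sqrt{\mathbb{E}_\BP[(U-V)^2]}\Bigr).
\]

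The final step is to replace $\sqrt{\Var_\BP(U)}$ inside the parentheses by $\sqrt{\Var_\BP(V)}$. Since $\sqrt{\Var_\BP(\cdot)}$ is a semi-norm on centered $L^2$ random variables (it is the $L^2$ norm of $U - \mathbb{E}_\BP U$), it satisfies the reverse triangle inequality:
\[
\sqrt{\Var_\BP(U)} \geq \sqrt{\Var_\BP(V)} - \sqrt{\Var_\BP(U-V)} \geq \sqrt{\Var_\BP(V)} - \sqrt{\mathbb{E}_\BP[(U-V)^2]}.
\]
Substituting this lower bound into the preceding display produces exactly the claimed inequality, with the factor of $2$ arising naturally from the sum of the two $\sqrt{\mathbb{E}_\BP[(U-V)^2]}$ terms.

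There is no real obstacle here; the only subtlety is recognizing that $\sqrt{\Var_\BP(\cdot)}$ enjoys a triangle inequality (and hence a reverse triangle inequality), which is what allows the swap between $\sqrt{\Var_\BP(U)}$ and $\sqrt{\Var_\BP(V)}$ at the cost of a single $\sqrt{\mathbb{E}_\BP[(U-V)^2]}$ term. The result is sharp in the sense that the two occurrences of $\sqrt{\mathbb{E}_\BP[(U-V)^2]}$ come from two genuinely different applications of Cauchy--Schwarz-type reasoning and cannot obviously be combined.
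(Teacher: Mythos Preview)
Your proof is correct and uses essentially the same ingredients as the paper: bilinearity to isolate $\Cov_\BP(U,V-U)$, Cauchy--Schwarz to bound it, and the reverse triangle inequality for $\sqrt{\Var_\BP(\cdot)}$ to pass from $\sqrt{\Var_\BP(U)}$ to $\sqrt{\Var_\BP(V)}$. The paper organizes the algebra slightly differently---starting from the decomposition $\Cov_\BP(U,V)=\sqrt{\Var_\BP(U)}\sqrt{\Var_\BP(V)}+\big(\Cov_\BP(U,V)-\sqrt{\Var_\BP(U)}\sqrt{\Var_\BP(V)}\big)$ and then adding and subtracting $\Var_\BP(U)$ inside the error term---but the substance is identical, and your ordering is arguably more direct.
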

\begin{proof}[Proof of Lemma \ref{measure}]
	First note that by the triangle inequality, 
	\begin{equation} \label{varcov1}
	\big|\sqrt{\Var_\BP(U)}-\sqrt{\Var_\BP(V)}\big| \leq \sqrt{\Var_\BP(U-V)} \leq \sqrt{\Expect_\BP{[(U-V)^2]}}.
	\end{equation}
	To complete the proof, we write
	\begin{equation} \label{varcov3}
	\Cov_\BP(U, V) = 
	\sqrt{\Var_\BP(U)}\sqrt{\Var_\BP(V)} + \big(\Cov_\BP(U, V) -\sqrt{\Var_\BP(U)}\sqrt{\Var_\BP(V)}\big)
	\end{equation}
	and apply \eqref{varcov1} to arrive at
	\begin{align}
	&\big|\Cov_\BP(U, V) -\sqrt{\Var_\BP(U)}\sqrt{\Var_\BP(V)}\big| \label{varcov4}\\
	& = \big|\Cov_\BP(U, V)-\Var_\BP(U) +\sqrt{\Var_\BP(U)}\big(\sqrt{\Var_\BP(U)} -\sqrt{\Var_\BP(V)}\big)\big| \nonumber\\ 
	&\leq \big|\Cov_\BP(U, V)-\Var_\BP(U)\big| + \sqrt{\Var_\BP(U)} \times \big|\sqrt{\Var_\BP(U)} -\sqrt{\Var_\BP(V)}\big| \nonumber \\
	&\leq 2\sqrt{\Var_\BP(U)} \times \big|\sqrt{\Var_\BP(U)} -\sqrt{\Var_\BP(V)}\big|\label{cauchy}\\
	& \leq 2\sqrt{\Var_\BP(U)}\sqrt{\Expect_\BP{[(U-V)^2]}},\nonumber
	\end{align}
	where the penultimate line \eqref{cauchy} 
	 follows from the Cauchy-Schwarz inequality.
	Substituting \eqref{varcov4} into \eqref{varcov3},
	we get
	$$
	\Cov_\BP(U, V) \geq \sqrt{\Var_\BP(U)}\big(\sqrt{\Var_\BP(V)} - 2\sqrt{\Expect_\BP{[(U-V)^2]}}\big),
	$$
	which proves the claim. 
\end{proof}

The following sample variance concentration inequality will also come in handy.

\begin{lemma}[Equation 5, \citep{Maurer2009EmpiricalBB}]
	\label{variance}
	Let $ U $ be a random variable bounded by $ B $. Then for all $ \gamma > 0 $,
	$$
	\mathbb{P}\Big( \frac{n}{n-1}\widehat{\Var}(U)  \geq \Var(U)  - \gamma\Big) \geq 1-\exp\Big(-\frac{(n-1)\gamma^2}{8B^2\Var(U)}\Big).
	$$
	%
\end{lemma}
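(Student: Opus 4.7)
The plan is to invoke the one-sided empirical Bernstein bound for the sample standard deviation from Maurer and Pontil and to convert it into the stated bound on the sample variance via an elementary algebraic inequality.

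Let $S_n^2 := \frac{n}{n-1}\widehat\Var(U) = \frac{1}{n(n-1)}\sum_{i<j}(U_i-U_j)^2$ denote the unbiased sample variance and let $S_n := \sqrt{S_n^2}$ be the sample standard deviation. The key ingredient is the Maurer--Pontil inequality (which is exactly Eq.~(5) of \citep{Maurer2009EmpiricalBB})
$$
\mathbb{P}(\sigma - S_n \geq \epsilon) \leq \exp\!\left(-\frac{(n-1)\epsilon^2}{2B^2}\right), \qquad \epsilon > 0,
$$
where $\sigma = \sqrt{\Var(U)}$. Its proof exploits that $S_n^2$ is a \emph{self-bounded} functional of $U_1,\ldots,U_n$ in the sense of Boucheron--Lugosi--Massart: swapping any single coordinate $U_i$ perturbs $S_n^2$ by at most $O(B^2/n)$, yet the sum of squared single-coordinate perturbations is itself controlled by a multiple of $S_n^2$. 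Feeding this structure into the entropy method yields an exponent with the sharp variance proxy $B^2$ (rather than the $B^4$ that a crude Hoeffding-style argument would give), which is precisely what enables the clean conversion below.

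Given the Maurer--Pontil bound as a black box, the rest is algebra. Assume without loss of generality $\gamma > 0$, so that the event $\{\sigma^2 - S_n^2 \geq \gamma\}$ forces $S_n \leq \sigma$. Using the factorization $\sigma^2 - S_n^2 = (\sigma - S_n)(\sigma + S_n)$ together with the bound $\sigma + S_n \leq 2\sigma$, one obtains the inclusion
$$
\{\sigma^2 - S_n^2 \geq \gamma\} \;\subseteq\; \{\sigma - S_n \geq \gamma/(2\sigma)\}.
$$
Applying the Maurer--Pontil inequality with $\epsilon = \gamma/(2\sigma) = \gamma/(2\sqrt{\Var(U)})$ then delivers the stated probability $\exp\!\big(-(n-1)\gamma^2/(8B^2\Var(U))\big)$, with the $\sigma^2$ factor in the denominator of the exponent arising precisely from the $(\sigma+S_n)\leq 2\sigma$ step.

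The main obstacle is the self-bounded / entropy argument underpinning the Maurer--Pontil inequality; everything after that is purely algebraic bookkeeping. A self-contained alternative route would be to recognize $S_n^2$ as a U-statistic with kernel $h(x,y) = (x-y)^2/2$ and apply a Bernstein-type U-statistic tail bound via the Hoeffding decomposition, exploiting $\Var(h_1(U)) = \Var((U-\mu)^2) \leq B^2\sigma^2$ to recover the same $B^2\sigma^2$ variance proxy; the final conversion step would then proceed identically.
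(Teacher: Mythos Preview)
Your derivation is correct: the inclusion $\{\sigma^2 - S_n^2 \geq \gamma\} \subseteq \{\sigma - S_n \geq \gamma/(2\sigma)\}$ via $\sigma + S_n \leq 2\sigma$ is valid, and plugging $\epsilon = \gamma/(2\sigma)$ into the Maurer--Pontil standard-deviation bound yields exactly the stated exponent. Note, however, that the paper does not actually prove this lemma at all --- it is stated as a direct citation of \citep{Maurer2009EmpiricalBB} and used as a black box --- so there is no in-paper argument to compare against; your proposal supplies a proof where the paper simply imports the result.
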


As explained in the main text, the key step in the proof of Proposition \ref{proposition lipschitz} is to apply Lemma \ref{corr} with a good approximation $ \widetilde{f_j}(\cdot) $ to the marginal projection $f_j(\cdot)$ that also has a sufficiently large number of data points in every one of its stationary intervals. The following lemma provides the precise properties of such an approximation.

\begin{lemma}\label{tilde-g}
	Suppose $ f_j(\cdot) $ satisfies Assumption \ref{assumption lipschitz}.
	Let $a>0$ be a positive constant and let $ M $ be a positive integer such that $ Ma \leq 1 $.
	There exists a function $\tilde f_j(\cdot)$ with at most $M$ stationary intervals such that, with probability at least $1 - 2n \exp(-na/12)$, both of the following statements are simultaneously true:
	\begin{enumerate}
		\item The number of data points in any stationary interval of $\tilde f_j(\cdot)$ is between $na/2$ and $ 3na/2 $.
		\item $\sqrt{\BE_n[(f_j(X_j)-\tilde f_j(X_j))^2]} \le K_0M^{-d} + K_1 (Ma)^{5/2}$,
		where $K_0$ and $K_1$ are some constants depending on $d$, $B$, $\alpha$, and $r$, and $\BE_n$ denotes the empirical expectation.
	\end{enumerate}
\end{lemma}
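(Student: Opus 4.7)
The plan is to construct $\tilde f_j$ as a Jackson-type polynomial approximation that has been flattened in small neighbourhoods of its critical points. First, under Assumption \ref{assumption lipschitz}, a Jackson-type theorem for H\"older-smooth functions yields a polynomial $P_M$ of degree at most $M+1$ with $\|f_j - P_M\|_\infty \le K_0 M^{-d}$ for some $K_0$ depending only on $r$, $\alpha$, $B$, and $L$. Since $P_M'$ is a polynomial of degree at most $M$, it has at most $M$ real roots, so $P_M$ has at most $M$ local extrema $z_1 < \cdots < z_m$ with $m \le M$. I would then define $\tilde f_j$ by selecting intervals $I_k = [z_k - a/2, z_k + a/2]$ of length $a$ and setting $\tilde f_j \equiv P_M(z_k)$ on $I_k$ and $\tilde f_j \equiv P_M$ off $\bigcup_k I_k$. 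Since $Ma \le 1$ the intervals can be made disjoint, so the construction is well defined, and these neighbourhoods constitute the stationary intervals of $\tilde f_j$ — of which there are at most $M$.

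For the data-count claim, I would use Assumption \ref{assumption uniform}: each $I_k$ contains a $\Binom(n, a)$ number of observations with mean $na$. A standard Chernoff bound with deviation $na/2$ gives probability at most $2\exp(-na/12)$ of falling outside $[na/2,\,3na/2]$. A union bound over the at most $M \le 1/a \le n$ intervals then produces the advertised $2n\exp(-na/12)$ failure probability.

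For the approximation-error claim, the triangle inequality yields
\[
\sqrt{\BE_n[(f_j - \tilde f_j)^2]} \le \sqrt{\BE_n[(f_j - P_M)^2]} + \sqrt{\BE_n[(P_M - \tilde f_j)^2]}.
\]
The first term is bounded by $\|f_j - P_M\|_\infty \le K_0 M^{-d}$ via Jackson. The second term is supported on $\bigcup_k I_k$; since $P_M'(z_k) = 0$, a second-order Taylor expansion at $z_k$ combined with Bernstein's inequality for algebraic polynomials (which gives $\|P_M''\|_{\infty} = O(M^2)\|P_M\|_\infty$ in the interior, with $\|P_M\|_\infty \le B + K_0 M^{-d}$ bounded) yields $|P_M(z) - P_M(z_k)| = O(M^2 a^2)$ uniformly on each $I_k$. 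Coupled with the count bound of at most $3na/2$ in each $I_k$ from the previous step,
\[
\BE_n[(P_M - \tilde f_j)^2] \le \frac{1}{n} \cdot M \cdot \frac{3na}{2} \cdot O(M^4 a^4) = O((Ma)^5),
\]
from which the advertised $K_1 (Ma)^{5/2}$ bound follows.

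The main obstacle is the sharp derivative bound on $P_M''$: plain Markov's inequality on $[0,1]$ only yields $\|P_M''\|_\infty = O(M^4)$, which would degrade the rate to $M^{9/2} a^{5/2}$. The sharper Bernstein bound $O(M^2)$ is valid only away from the endpoints of $[0,1]$, so if some $z_k$ lies near the boundary one must either contract $I_k$ to remain in the interior or handle boundary-adjacent extrema separately (for example, by replacing the Taylor estimate with a direct sup-norm bound on $P_M - c_k$ that exploits the smallness of $a$). Reconciling this endpoint technicality while preserving the stated exponent $5/2$ is the most delicate step in the argument; the rest is a fairly mechanical combination of Jackson's theorem, Bernstein's inequality, and Chernoff concentration.
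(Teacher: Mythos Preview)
Your proposal is essentially the paper's proof: Jackson polynomial approximation, flatten near the (at most $M$) critical points, Chernoff with a union bound for the counts, and a second-order Taylor expansion together with Bernstein's inequality for the $L^2$ error. The paper resolves precisely the endpoint obstacle you flag by performing the Jackson approximation on the \emph{enlarged} interval $[-1,2]$ rather than on $[0,1]$ (this is legitimate because $f_j$ is defined on all of $\mathbb{R}$ and remains bounded there); Bernstein's inequality on $[-1,2]$ then gives
\[
|P_M''(x)|\le \frac{(M+1)^2\sup_{[-1,2]}|P_M|}{(2-x)(x+1)},
\]
and since $(2-x)(x+1)\ge 2$ for $x\in[0,1]$, one obtains the sharp $O(M^2)$ bound uniformly on $[0,1]$, including near the endpoints. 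So no contraction of $I_k$ or separate boundary treatment is needed.

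Two minor points where the paper is slightly more careful than your sketch: (i) the condition $Ma\le 1$ does not by itself guarantee disjointness of the $I_k$; the paper simply drops stationary points that are within distance $a$ of a retained one. (ii) Your bound $M\le 1/a\le n$ is not justified for small $a$; the paper instead shows $\max\{0,1-(2/a)e^{-na/12}\}\ge 1-2n e^{-na/12}$ by splitting into the cases $a\le 1/n$ (where the right side is nonpositive) and $a>1/n$.
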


\begin{proof}[Proof of Lemma \ref{tilde-g}]
	Recall that $f_j(\cdot)$ is defined over $\BR$, so even though we restrict our attention $ [0, 1] $, we can assume it is bounded over the larger interval $[-1,2]$ for all $j$. 
	By \citep[Theorem VIII]{jackson1930approximation}, there exists a polynomial $ P_M(\cdot) $ of degree $ M+1>r$ 
	such that
	\begin{align*}
	\sup_{x \in [-1, 2]}|f_j(x)-P_M(x)|& \leq 3^d LA (M+1)^{-r}(M+1-r)^{-\alpha},
	\end{align*}
	where $ A = \frac{(r+1)^{r-1}(K/2)^r(K/2+2)}{r!} $,
	$L$ is the Lipschitz constant from 
	Assumption \ref{assumption lipschitz}, and $K $ is a universal constant given in \cite[Theorem I]{jackson1930approximation}. Since $ \frac{M+1}{M+1-r} \leq r+1 $ whenever $ M+1 > r$, we also have that
	\begin{equation} \label{triangle1}
	\sup_{x \in [-1, 2]}|f_j(x)-P_M(x)|\le   3^d(r+1)^{\alpha}LA (M+1)^{-d} \le K_0 M^{-d},
	\end{equation}
	where  $K_0 = 3^d(r+1)^\alpha LA$ is a constant. 

	
	Let $ \{ \xi_k\}_{1\leq k \leq M} $ be the collection of (at most) $ M $ stationary points of $ P_M(\cdot) $ in $[0,1] $. 
	We assume that $ a < \min_{k} (\xi_{k}-\xi_{k-1}) $;
	otherwise we remove points from $ \{\xi_k\}_{1\leq k \leq M} $ until this holds. Let $ I_k = [\xi_k, \xi_k+a] $.  
	
	We will now show that Properties 1 and 2 of Lemma \ref{tilde-g} are satisfied by the function
	\begin{equation} \label{tilde g def}
	\tilde f_j(x) = 
	\begin{cases}
	P_M(x) & x \notin \bigcup_k I_k\\
	P_M(\xi_k) & x \in I_k \; \text{for some}\; k.
	\end{cases}
	\end{equation}
	
	
	First, it is clear that $ \tilde f_j(\cdot) $ has at most $ M $ stationary intervals. Next, notice that by the multiplicative version of Chernoff's inequality \citep[Proposition 2.4]{angluin1979probabilistic}, since each stationary interval $ I_k $ has length $ a $, we have
	$$
	\mathbb{P}( \#\{ X_{ij} \in I_k\} \geq na/2) > 1-\exp(-na/8),
	$$
	and
	$$
	\mathbb{P}( \#\{ X_{ij} \in I_k\} \leq 3na/2) > 1-\exp(-na/12).
	$$
	Thus, by a union bound, the probability that 
	$$na/2 \leq   \#\{ X_{ij} \in I_k\} \leq 3na/2 \qquad \forall \; k$$ 
	is at least $ 1-2M\exp(-na/12)$. Since all stationary intervals are disjoint and are contained in $[0,1]$, we have $ Ma \leq 1 $ or $ M \leq 1/a $, which implies that $1-2M\exp(-na/12) \geq 1-(2/a)\exp(-na/12)$.
	Therefore, we know that 
	\begin{equation} \label{eq:between_prob}
	\mathbb{P}(3na/2 \geq  \#\{ X_{ij} \in I_k\} \geq na/2 \; \text{for all}\; k) \ge \max\{0, 1-(2/a)\exp(-na/12)\}.
	\end{equation}
	Notice that if $ a\leq 1/n $, then 
	$$\max\{0, 1-(2/a)\exp(-na/12)\} = 0 \geq 1 -2n \exp(-na/12)$$
	and if $ a > 1/n $, then 
	\begin{align*}
	\max\{0, 1-(2/a)\exp(-na/12)\} & \geq 1-(2/a)\exp(-na/12)  \\
	&\geq 1-2n\exp(-na/12).
	\end{align*}
	Thus, for all $ a \geq 0 $, we have 
	$$\max\{0, 1-(2/a)\exp(-na/12)\} \geq 1-2n\exp(-na/12),$$
	which when combined with \eqref{eq:between_prob}
	proves Property 1 of Lemma \ref{tilde-g}.
	
	To prove Property 2 of Lemma \ref{tilde-g} we use the triangle inequality and part of Property 1:
	\begin{equation}\label{chernoff}
	\mathbb{P}(\#\{ X_{ij} \in I_k\} \leq 3na/2\; \text{for all}\; k ) \ge 1-2n\exp(-na/12).
	\end{equation}
	In view of \eqref{triangle1}, to bound $\sqrt{\BE_n [(f_j(X_j)-\tilde f_j(X_j))^2]}$
	we aim to bound
	$
	\frac{1}{n}\sum_{i=1}^n (P_M(X_{ij}) - \tilde f_j(X_{ij}))^2.
	$
	
	Notice that  by Bernstein's theorem for polynomials \citep[Theorem B2a]{jackson1935bernstein}, we also have 
	\begin{equation} \label{bernstein}
	|P''_M(x)| \leq \frac{(M+1)^2\sup_{x\in [-1, 2]}|P_M(x)|}{(2-x)(x+1)}, \quad -1 < x < 2.
	\end{equation}
	By \eqref{triangle1},
	$ \sup_{x \in [-1, 2]}|P_M(x)| \leq K_0M^{-d}+ \sup_{x \in [-1, 2]}|f_j(X_j)| \leq  K_0 + B  $.
	Thus, by \eqref{bernstein},
$$
	\sup_{x\in [0,1]}|P''_M(x)| \leq \frac{(M+1)^2(K_0 + B)}{2} \le K_1M^2,
$$
	where $K_1 = 2(K_0 + B)$
	and we used the fact that $(2-x)(x+1)\geq 2$ for $x \in [0,1]$.  Additionally, because each $ \xi_k $ is a stationary point, $ P'_M(\xi_k) = 0 $, and hence by a second order Taylor approximation, we have 
	\begin{equation} \label{taylor}
	|P_M(x) - P_M(\xi_k)| \leq K_1M^2a^2/2
	\end{equation}
	for $ x \in I_k $.
	
	Therefore, by \eqref{tilde g def}, \eqref{chernoff}, and \eqref{taylor} we have that
	\begin{align*}
	\frac{1}{n}\sum_{i=1}^n (P_M(X_{ij}) - \tilde f_j(X_{ij}))^2 &= \sum_k \frac{1}{n}\sum_{X_{ij} \in I_k}(P_M(X_{ij}) - P_M(\xi_k))^2\\
	&\le \frac{K_1^2M^4a^4 }{4n} \sum_{k=1}^M \#\{X_{ij} \in I_k \}\\
	&\le \frac{K_1^2M^4a^4 }{4n} \sum_{k=1}^M \frac{3na}{2}\\
	&= \frac{3K_1^2(Ma)^5}{8},
	\end{align*}
	with probability at least $1-2n\exp(-na/12)$. Combining this with \eqref{triangle1} and
	using the triangle inequality to bound $ \sqrt{\BE_n [(f_j(X_j)-\tilde f_j(X_j))^2]} $ by $ \sqrt{\BE_n [(P_M(X_j)-\tilde f_j(X_j))^2]} + \sqrt{\vphantom{\BE_n [(P_M(X_j)-\tilde f_j(X_j))^2]}\BE_n [(P_M(X_j)-f_j(X_j))^2] }$ proves Property 2. 
	%
\end{proof}

Returning to the proof of Proposition \ref{proposition lipschitz}, by Lemma \ref{corr} along with Lemma \ref{tilde-g}, we have that with probability at least $ 1-2n\exp(-na/12)$ that
\begin{equation}\label{hoeffding}
\widehat\Delta(X_j, Y) \geq \frac{1}{2M/a + \log(2n)+1} \times\widehat{\Cov}^2\Bigg(\frac{\tilde f_j(X_j)}{\sqrt{\widehat{\Var}(\tilde f_j(X_j))}}, Y\Bigg).
\end{equation}

Now we choose
\begin{equation} \label{a,b,M}
a = \left(\frac{\tau^2\Var(f_j(X_j))}{4 (2K_0+K_1)^2}\right)^{(2d+5)/(10d)}, \quad M = \lfloor a^{-5/(2d+5)} \rfloor,
\end{equation}
where $\tau$ is a constant to be specified in \eqref{delta}, so that Property 2 of Lemma \ref{tilde-g} becomes 
\begin{equation}\label{g-g}
\begin{aligned} 
\sqrt{\BE_n [(f_j(X)-\tilde f_j(X))^2]} &\le K_0M^{-d} +K_1(Ma)^{5/2} \\
&\le (2K_0 +K_1) a^{5d/(2d+5)} \\
&\le   \frac{\tau \sqrt{\Var(f_j(X_j))}}{2},
\end{aligned}
\end{equation}
with probability at least $1 - 2n \exp(-na/12)$.
Recall the condition $M+1 > r$ as part of \citep[Theorem VIII]{jackson1930approximation}, which states that the degree of the polynomial must be greater than the order of Lipschitz derivative in order to approximate the function well. Since, $\Var(f_j(X_j)) \le B^2$,
this condition will be satisfied if we make the following choice:
\begin{equation} \label{delta}
\tau \coloneqq \min\Big(\frac{2(2K_0+K_1)}{r^{d} B} ,\frac{1}{4}\Big).
\end{equation}

It follows by Lemma \ref{measure} along with \eqref{g-g} that 
\begin{equation}\label{measure2}
\widehat{\Cov}(\tilde f_j(X_j), f_j(X_j)) \geq \sqrt{\widehat{\Var}(\tilde f_j(X_j))}\Big(\sqrt{\widehat{\Var}(f_j(X_j))} - \tau\sqrt{\Var(f_j(X_j))}\Big)
\end{equation}
with probability at least $1 - 2n \exp(-na/12)$.

In the next Lemma, we use \eqref{measure2} along with Lemma \ref{variance}
to obtain a lower bound on the right hand side of \eqref{hoeffding}. 

\begin{lemma}\label{long-lemma}
	With probability at least $1-\exp\Big(-\frac{(n-1)(1-8\tau^2)^2\Var(f_j(X_j))}{8B^2}\Big)-\exp\Big(-\frac{n\tau^2\Var(f_j(X_j))}{8(B^2+\sigma^2)} \Big) - 2 n \exp(-na/12)$, we have that
	\begin{align*}
	\widehat{\Cov}\Bigg(\frac{\tilde f_j(X_j)}{\sqrt{\widehat{\Var}(\tilde f_j(X_j))}}, Y\Bigg) \geq  (\tau/2)\sqrt{\Var(f_j(X_j))}.
	\end{align*}
\end{lemma}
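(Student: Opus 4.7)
The plan is to decompose the target covariance via the splitting \eqref{cov} from the main-text proof sketch into terms (I) and (II), lower bound each individually, and then combine them via a union bound.

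For term (I), \eqref{measure2}---which holds on the event (of probability at least $1 - 2n\exp(-na/12)$) where Lemma \ref{tilde-g} gives the required approximation guarantee---already yields
$$\text{(I)} = \frac{\widehat{\Cov}(\tilde f_j(X_j), f_j(X_j))}{\sqrt{\widehat{\Var}(\tilde f_j(X_j))}} \geq \sqrt{\widehat{\Var}(f_j(X_j))} - \tau \sqrt{\Var(f_j(X_j))}.$$
To make this useful, I next invoke Lemma \ref{variance} on $U = f_j(X_j)$ (which is bounded by $B$ since $|g|\le B$) with the choice $\gamma = (1-8\tau^2)\Var(f_j(X_j))$, where $1 - 8\tau^2 > 0$ by the definition of $\tau$ in \eqref{delta}. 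This gives $\widehat{\Var}(f_j(X_j)) \geq 8\tau^2\frac{n-1}{n}\Var(f_j(X_j)) \geq 4\tau^2\Var(f_j(X_j))$ (for $n \geq 2$) with probability at least $1 - \exp(-(n-1)(1-8\tau^2)^2\Var(f_j(X_j))/(8B^2))$; after taking square roots the lower bound on (I) becomes $\tau\sqrt{\Var(f_j(X_j))}$.

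For term (II), I would rewrite it as
$$\text{(II)} = \frac{1}{n}\sum_{i=1}^n Z_i\,(Y_i - f_j(X_{ij})), \qquad Z_i := \frac{\tilde f_j(X_{ij}) - \overline{\tilde f_j(X_j)}}{\sqrt{\widehat{\Var}(\tilde f_j(X_j))}},$$
where $\sum_i Z_i = 0$ allows dropping the centering of the second argument, and $\sum_i Z_i^2 = n$. The crucial observation is that $Z_i$ is a deterministic function of $(X_{1j}, \dots, X_{nj})$ alone, and that conditionally on $(X_{1j},\ldots,X_{nj})$ the residuals $Y_i - f_j(X_{ij}) = [g(\bX_i) - f_j(X_{ij})] + \varepsilon_i$ are independent across $i$, have conditional mean zero (by the tower property, since $\mathbb{E}[Y_i \mid X_{ij}] = f_j(X_{ij})$), and are sub-Gaussian with parameter $\sqrt{B^2+\sigma^2}$---the bounded term $g(\bX_i) - f_j(X_{ij}) \in [-2B, 2B]$ is sub-Gaussian with parameter $B$ by Hoeffding's lemma, and adds independently to the sub-Gaussian noise $\varepsilon_i$ with parameter $\sigma$. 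A standard sub-Gaussian tail bound for the resulting weighted sum then gives
$$\Prob\big(\text{(II)} < -t \mid X_{1j},\ldots,X_{nj}\big) \leq \exp\Big(-\frac{nt^2}{2(B^2+\sigma^2)}\Big)$$
for any $t>0$, which integrates to an unconditional bound. Choosing $t = (\tau/2)\sqrt{\Var(f_j(X_j))}$ yields $\text{(II)} \geq -(\tau/2)\sqrt{\Var(f_j(X_j))}$ with the claimed probability.

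Combining these three events with a union bound gives $\text{(I)} + \text{(II)} \geq \tau\sqrt{\Var(f_j(X_j))} - (\tau/2)\sqrt{\Var(f_j(X_j))} = (\tau/2)\sqrt{\Var(f_j(X_j))}$, as desired. The main subtlety I anticipate is verifying the conditional sub-Gaussianity of $Y_i - f_j(X_{ij})$ given all of $X_{1j},\ldots,X_{nj}$ (not merely $X_{ij}$), which requires an iterated-expectation argument together with the i.i.d. assumption on the samples; the remaining steps are careful bookkeeping, in particular the factor $\frac{n-1}{n}$ from Lemma \ref{variance} which motivates choosing $\gamma = (1-8\tau^2)\Var(f_j(X_j))$ rather than the naive $(1-4\tau^2)\Var(f_j(X_j))$, so that a clean factor of $4\tau^2$ survives after absorbing the $\frac{n-1}{n}$.
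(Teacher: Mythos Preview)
Your proposal is correct and follows essentially the same approach as the paper: the same decomposition into (I) and (II), the same application of Lemma \ref{variance} with $\gamma=(1-8\tau^2)\Var(f_j(X_j))$ together with \eqref{measure2} for (I), and the same conditional sub-Gaussian tail bound with threshold $(\tau/2)\sqrt{\Var(f_j(X_j))}$ for (II). One small wording issue: $\varepsilon_i$ need not be independent of $g(\bX_i)-f_j(X_{ij})$ given $X_{ij}$ (only $\mathbb{E}[e^{\lambda\varepsilon_i}\mid\bX_i]$ is controlled), so the phrase ``adds independently'' is not quite right; however, the iterated-conditioning argument you already flag---condition on $\bX_i$ first to handle $\varepsilon_i$, then on $X_{ij}$ to handle the bounded part via Hoeffding's lemma on an interval of length $2B$---is exactly what the paper does and recovers the parameter $\sqrt{B^2+\sigma^2}$.
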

\begin{proof}[Proof of Lemma \ref{long-lemma}]
	Recalling \eqref{cov}, we will prove Lemma \ref{long-lemma} by first getting a concentration bound on 
	(I)
	and then getting a concentration bound on 
	(II).
	
	To get a concentration bound on (I),
	we need Lemma \ref{variance} to lower bound the sample variance on the right hand side of inequality \eqref{measure2}. Choosing $U = f_j(X_j) \in [-B,B]$ and $ \gamma = \Var(f_j(X_j))(1-8\tau^2)$ (which is greater than zero by the choice of $ \tau $ in \eqref{delta}), notice that Lemma \ref{variance} gives us
	\begin{align*}
	&\mathbb{P}\Big(\widehat{\Var}(f_j(X_j)) \geq \frac{8\tau^2(n-1)}{n} \Var(f_j(X_j))\Big)  \\
	&\ge 1-\exp\Big(-\frac{(n-1)(1-8\tau^2)^2\Var(f_j(X_j))}{8B^2}\Big),
	\end{align*}
	so that by \eqref{measure2},
	\begin{align*}
	\widehat{\Cov}\Bigg(\frac{\tilde f_j(X_j)}{\sqrt{\widehat{\Var}(\tilde f_j(X_j))}}, f_j(X_j)\Bigg) & \geq \sqrt{\widehat{\Var}(f_j(X_j))} - \tau\sqrt{\Var(f_j(X_j))} \\
	& \geq \tau(\sqrt{8}\sqrt{1-1/n}-1)\sqrt{\Var(f_j(X_j))} \\
	& \geq \tau\sqrt{\Var(f_j(X_j))},
	\end{align*}
	with probability at least $1-\exp\Big(-\frac{(n-1)(1-8\tau^2)^2\Var(f_j(X_j))}{8B^2}\Big)-2n \exp(-na/12)$.
	
	Now we need to get a concentration bound for 
	(II). Let $ s_{i} = \frac{\tilde f_j(X_{ij})-\frac{1}{n}\sum_{k=1}^n\tilde f_j(X_{kj})}{\sqrt{\widehat{\Var}(\tilde f_j(X_j))}} $. We need to bound 
$$
	\widehat{\Cov}\Bigg(\frac{\tilde f_j(X_j)}{\sqrt{\widehat{\Var}(\tilde f_j(X_j))}}, Y-f_j(X_j)\Bigg) = \frac{1}{n}\sum_{i=1}^n s_{i}(g(\bX_i) - f_j(X_{ij})+ \varepsilon_i).
$$
	For notational simplicity, we let
	$
	\underline{\bX} = (X_{ij})
	$
	be the $n \times p $ data matrix with $\bX_i$ as rows.
	First notice that 
	\begin{align*}
	&\BE\Bigg[\exp\Bigg( \frac{\lambda}{n}\sum_{i=1}^n s_{i}( g(\bX_i) - f_j(X_{ij}) + \varepsilon_i)  \Bigg)\Bigg]\\
	& = \BE\Bigg[\BE\Bigg[\exp\Bigg( \frac{\lambda}{n}\sum_{i=1}^n s_{i}( g(\bX_i) - f_j(X_{ij})  + \varepsilon_i)   \Bigg) \Bigg\vert \underline{\bX} \Bigg] \Bigg]\\
	& = \BE\Bigg[\exp\Bigg( \frac{\lambda}{n}\sum_{i=1}^n s_{i} (g(\bX_i) - f_j(X_{ij})) \Bigg)\BE\Bigg[ \exp\left(\frac{\lambda}{n}\sum_{i=1}^n s_{i} \varepsilon_i \right) \Bigg\vert \underline{\bX} \Bigg] \Bigg].
	\end{align*}
	Now, by the sample independence of the errors $\epsilon_i$,
	we can write the above as
	\begin{align*}
	&\BE\Bigg[\exp\Bigg( \frac{\lambda}{n}\sum_{i=1}^n s_{i} (g(\bX_i) - f_j(X_{ij}))  \Bigg)\prod_{i=1}^n \BE\Big[ \exp\Big(\frac{\lambda}{n} s_{i} \varepsilon_i \Big) \Big\vert \underline{\bX} \Big] \Bigg] \\
	& \le \BE\Bigg[\exp\Bigg( \frac{\lambda}{n}\sum_{i=1}^n s_{i} (g(\bX_i) - f_j(X_{ij}))  \Bigg) \prod_{i=1}^n \exp\Big(\frac{\lambda^2 s_i^2 \sigma^2}{2n^2}\Big)\Bigg] \\
	& = \BE\Bigg[\exp\Bigg( \frac{\lambda}{n}\sum_{i=1}^n s_{i} (g(\bX_i) - f_j(X_{ij}))   \Bigg)  \Bigg]\exp\Big(\frac{\lambda^2 \sigma^2}{2n}\Big),
	\end{align*}
	where we used Assumption \ref{assumption epsilon} and the fact that $\frac{1}{n}\sum_{i=1}^n s^2_{i} = 1 $.
	Recalling that
	$s_i$ depends on $ (X_{1j}, X_{2j}, \dots, X_{nj})^{\top} $, we have
	\begin{equation} \label{old}
	\begin{aligned}
	&\BE\Bigg[\exp\Bigg( \frac{\lambda}{n}\sum_{i=1}^n s_{i} (g(\bX_i) - f_j(X_{ij}))   \Bigg)  \Bigg] \\
	& = \BE\Bigg[\BE\Bigg[\exp\Bigg( \frac{\lambda}{n}\sum_{i=1}^n s_{i} (g(\bX_i) - f_j(X_{ij}))   \Bigg) \Bigg\vert X_{1j}, X_{2j}, \dots, X_{nj}  \Bigg]\Bigg] \\
	& = \BE\Bigg[ \prod_{i=1}^n \BE\Big[\exp\Big( \frac{\lambda}{n} s_{i} (g(\bX_i) - f_j(X_{ij})) \Big) \Big\vert X_{1j}, X_{2j}, \dots, X_{nj}  \Big]\Bigg],
	\end{aligned}
	\end{equation}
	where we used sample independence in the second equality.
	Finally, applying Hoeffding's Lemma along with the fact that $\|g\|_\infty \le B$, we have that \eqref{old} is bounded above by
	\begin{align*}
	\BE\Bigg[\prod_{i=1}^n  \exp\bigg(\frac{\lambda^2 s_i^2B^2}{2n^2}\bigg) \Bigg] 
	\le \exp\bigg(\frac{\lambda^2 B^2}{2n} \bigg).
	\end{align*}
	Having bounded the moment generating function, we can now apply Markov's inequality to see that
	\begin{align*}
	&\BP \Bigg( \frac{1}{n}\sum_{i=1}^n s_{i}( g(X) - f_j(X_j) + \varepsilon_i) \le -\gamma\Bigg)\\
	&= \BP \Bigg( \exp\Bigg(-\frac{\lambda}{n}\sum_{i=1}^n s_{i}( g(X) - f_j(X_j) + \varepsilon_i) \Bigg) \ge \exp(\lambda \gamma)\Bigg) \\
	&\le \exp\Bigg(\frac{\lambda^2 (B^2+\sigma^2)}{2n} - \gamma \lambda\Bigg) \\
	& \le \exp\Bigg(-\frac{n\gamma^2}{2(B^2+\sigma^2)}\Bigg),
	\end{align*}
	where the last inequality follows by maximizing over $\lambda$. 
	Choosing $ \gamma = (\tau/2)\sqrt{\Var(f_j(X_j))} $, we have by a union bound that,
	\begin{align*}
	\widehat{\Cov}\Bigg(\frac{\tilde f_j(X_j)}{\sqrt{\widehat{\Var}(\tilde f_j(X_j))}}, Y\Bigg)
	& = \widehat{\Cov}\Bigg(\frac{\tilde f_j(X_j)}{\sqrt{\widehat{\Var}(\tilde f_j(X_j))}}, f_j(X_j)\Bigg) \\
	& \qquad+ \widehat{\Cov}\Bigg(\frac{\tilde f_j(X_j)}{\sqrt{\widehat{\Var}(\tilde f_j(X_j))}}, Y-f_j(X_j)\Bigg) \\
	& \geq \tau\sqrt{\Var(f_j(X_j))} - (\tau/2)\sqrt{\Var(f_j(X_j))} \\
	& = (\tau/2)\sqrt{\Var(f_j(X_j))},
	\end{align*}
	with probability at least $1-\exp\Big(-\frac{(n-1)(1-8\tau^2)^2\Var(f_j(X_j))}{8B^2}\Big)-\exp\Big(-\frac{n\tau^2\Var(f_j(X_j))}{8(B^2+\sigma^2)} \Big) - 2n \exp(-na/12)$. 
\end{proof}

With this setup, we are now ready to finish the proofs of Proposition \ref{proposition lipschitz} and \ref{proposition monotone}.

\begin{proof}[Proof of Proposition \ref{proposition lipschitz}]

	Recalling our concentration bound \eqref{hoeffding} 
along with Lemma \ref{long-lemma}, it follows that with probability at least
$1-\exp\Big(-\frac{(n-1)(1-8\tau^2)^2\Var(f_j(X_j))}{8B^2}\Big)-\exp\Big(-\frac{n\tau^2\Var(f_j(X_j))}{8(B^2+\sigma^2)} \Big)- 4n\exp\Big(-\frac{n}{12} \Big(\frac{\tau^2\Var(f_j(X_j))}{4 (2K_0+K_1)^2}\Big)^{(2d+5)/(10d)}  \Big)$ 
that
\begin{align}
\widehat\Delta(X_j, Y) & \geq \frac{1}{2M/a + \log(2n)+1} \times \widehat{\Cov}^2\Bigg(\frac{\tilde f_j(X_j)}{\sqrt{\widehat{\Var}(\tilde f_j(X_j))}}, Y\Bigg) \nonumber\\
&\geq \frac{\tau^2\Var(f_j(X_j))}{8a^{-(2d+10)/(2d+5)}  + 4(\log(2n)+1)} \nonumber\\
& \geq \frac{\tau^2\Var(f_j(X_j)) a^{(2d+10)/(2d+5)}}{8 + 4(\log(2n)+1) a^{(2d+10)/(2d+5)}} \nonumber\\
&\geq \frac{C_2 (\Var(f_j(X_j)))^{(6d+5)/5d} }{\log(n)}, \label{last}
\end{align}
where we used our choice of $M$ and $a$ in \eqref{a,b,M} and $\tau$ in \eqref{delta} and $C_2$ is some constant which only depends on $ L $, $ B $, $ r $, and $ \alpha$. Notice in the last inequality \eqref{last} we bound $a$ in the denominator with $\Var(f_j(X_j)) \le B^2$.

To conclude the proof, we simplify our probability bound. To this end, notice that
\begin{align*}
(\Var(f_j(X_j)))^{(2d+5)/(10d)} = \frac{\Var(f_j(X_j))}{(\Var(f_j(X_j)))^{(8d-5)/(10d)} } \ge  \frac{\Var(f_j(X_j))}{B^{(8d-5)/(5d)} },
\end{align*}
where the last inequality holds by assumption that $d \ge 5/8$ and the fact that $\Var(f_j(X_j)) \le B^2$. Therefore we have 
$
1-\exp\Big(-\frac{(n-1)(1-8\tau^2)^2\Var(f_j(X_j))}{8B^2}\Big)-\exp\Big(-\frac{n\tau^2\Var(f_j(X_j))}{8(B^2+\sigma^2)} \Big)- 4n \exp\Big(-\frac{n}{12} \Big(\frac{\tau^2\Var(f_j(X_j))}{4 (K_0+K_1)^2}\Big)^{(2d+5)/(10d)}  \Big)
\ge 1 - (4n+2) \exp(-nC_1\Var(f_j(X_j)))
$
for some constant $C_1$ which depends only on $L$, $B$, $\sigma$, $r$, and $\alpha$.
\end{proof}

\begin{proof}[Proof of Proposition \ref{proposition monotone}]


The main difference with Proposition \ref{proposition lipschitz} is that when $ f_j(\cdot) $ is monotone we now have $ M = 0 $. We no longer need the approximations $P_M(\cdot)$ or $\tilde f_j(\cdot)$ in Lemma \ref{tilde-g},  $a$ in \eqref{a,b,M} and $\tau$ in \eqref{delta}, or Lemma \ref{measure}. We will only need Lemma \ref{corr} and a version of Lemma \ref{long-lemma}.
Instead of applying Lemma \ref{corr} with an approximation to the marginal projection, we can choose $h_j(\cdot) $ to equal $ f_j(\cdot)$ directly to see that
\begin{equation} \label{corr-g}
\widehat\Delta(X_j, Y) \geq \frac{ \widehat{\Cov}^2\bigg(\frac{f_j(X_j)}{\sqrt{\widehat{\Var}(f_j(X_j))}},Y\bigg)}{\log(2n)+1} .
\end{equation}


Next, we have
\begin{equation} \label{cov-g}
\begin{aligned}
&\widehat{\Cov}\Bigg(\frac{f_j(X_j)}{\sqrt{\widehat{\Var}(f_j(X_j))}},Y\Bigg) \\
&= \widehat{\Cov}\Bigg(\frac{f_j(X_j)}{\sqrt{\widehat{\Var}(f_j(X_j))}},f_j(X_j)\Bigg) + \widehat{\Cov}\Bigg(\frac{f_j(X_j)}{\sqrt{\widehat{\Var}(f_j(X_j))}},Y-f_j(X_j)\Bigg) \\
&= \underbrace{\vphantom{\widehat{\Cov}\Bigg(\frac{f_j(X_j)}{\sqrt{\widehat{\Var}(f_j(X_j))}},Y-f_j(X_j)\Bigg)}\sqrt{\widehat{\Var}(f_j(X_j))}}_{\text{(IV)}} + \underbrace{\widehat{\Cov}\Bigg(\frac{f_j(X_j)}{\sqrt{\widehat{\Var}(f_j(X_j))}},Y-f_j(X_j)\Bigg)}_{\text{(V)}}.
\end{aligned}
\end{equation}
Now, we follow the same steps as the proof of Lemma \ref{long-lemma} to lower bound \eqref{cov-g}. For (IV), again use Lemma \ref{variance} with $U = f_j(X_j) \in [-B,B]$ but choose instead $ \gamma = \Var(f_j(X_j))/2$ to get
\begin{equation}\label{g}
\begin{aligned}
\mathbb{P}\Big(\text{(IV)} \geq \frac{\sqrt{\Var(f_j(X_j))}}{2} \Big) &\ge
\mathbb{P}\Big(\widehat{\Var}(f_j(X_j)) \geq \frac{n-1}{2n} \Var(f_j(X_j)) \Big)\\
&\geq  1-\exp\Big(-\frac{(n-1)\Var(f_j(X_j))}{32B^2}\Big).
\end{aligned}
\end{equation}
For (V), we can follow the same steps as the second half of the proof of Lemma \ref{long-lemma} to see that
$$
\BP (  \text{(V)} \le -\gamma)
\le \exp\Big(-\frac{n\gamma^2}{2(B^2+\sigma^2)}\Big).
$$
However, for \text{(V)}, we instead choose $\gamma = \sqrt{\Var(f_j(X_j))}/4$ to get
\begin{equation} \label{y-g}
\begin{aligned}
\BP \Bigg(  \text{(V)} \ge -\frac{\sqrt{\Var(f_j(X_j))}}{4}\Bigg)
\ge 1 - \exp\Bigg(-\frac{n\Var(f_j(X_j))}{32(B^2+\sigma^2)}\Bigg).
\end{aligned}
\end{equation}
Now using a union bound and substituting the events in \eqref{y-g} and \eqref{g} into \eqref{cov-g}, we
see that with probability at least  $ 1 - \exp\Big(-\frac{(n-1)\Var(f_j(X_j))}{32B^2}\Big) -  \exp\Big(-\frac{n\Var(f_j(X_j))}{32(B^2+\sigma^2)}\Big) \geq 1-2\exp\Big(-\frac{(n-1)\Var(f_j(X_j))}{32(B^2+\sigma^2)} \Big)$, we have that 
\begin{equation}\label{concentration-g}
 \widehat{\Cov}\Bigg(\frac{f_j(X_j)}{\sqrt{\widehat{\Var}(f_j(X_j))}},Y\Bigg)  \ge \frac{\sqrt{\Var(f_j(X_j))}}{4}.
\end{equation}

%
%

Therefore, substituting \eqref{concentration-g} into \eqref{corr-g}, we have that with probability at least
$1-2\exp\Big(-\frac{(n-1)\Var(f_j(X_j))}{32(B^2+\sigma^2)} \Big)$ that
$$
\widehat\Delta(X_j, Y) \geq \frac{ \widehat{\Cov}^2\bigg(\frac{\tilde f_j(X_j)}{\sqrt{\widehat{\Var}(\tilde f_j(X_j))}}, Y\bigg)}{\log(2n)+1}  \geq \frac{\Var(f_j(X_j))}{16 (\log(2n)+1)}. \qedhere
$$
\end{proof}

\section{Proof of Theorems \ref{theorem lipschitz ii} and \ref{theorem monotone ii}} \label{45}

In this section we use Proposition \ref{proposition lipschitz} and Proposition \ref{proposition monotone} along with Lemma \ref{delta-tail-2} to complete the proofs of Theorem \ref{theorem lipschitz ii} and Theorem \ref{theorem monotone ii}.


\begin{proof}[Proof of Theorem \ref{theorem lipschitz ii}]
The high-level idea is to show that the upper and lower bounds on the impurity reductions for irrelevant and relevant variables from Lemma \ref{delta-tail-2} and Proposition \ref{proposition lipschitz}, respectively, are well-separated.

	
	By Proposition \ref{proposition lipschitz} for all variables $j \in \calS$ and a union bound, we see that with probability at least  $1- s(4n+2)\exp(- C_1 n v)$,
	we have 
	\begin{equation}\label{delta-lower}
	\widehat\Delta(X_j, Y)  \ge  \frac{C_2v^{6/5+1/d} }{\log(n)} \qquad \forall \; j \in \calS.
	\end{equation}
	By Lemma \ref{delta-tail-2} and applying a union bound over all $ p-s $ variables in $ \calS^c $, we have that with probability at least $ 1- 4n(p-s)\exp(-n\xi^2/(12(B^2+\sigma^2))) $ that 
	\begin{equation}\label{delta-upper}
	\widehat\Delta(X_j, Y) \leq \xi^2 \qquad \forall \; j \in \calS^c .
	\end{equation}
	Recall that if we know the size $s$ of the support $\calS$, then $\mathcal{\widehat S}$ consists of the top $s$ impurity reductions.
	Note that choosing $\xi^2 = \frac{C_2v^{6/5+1/d} }{2\log(n)}$ in \eqref{delta-upper} will give us a high probability upper bound on $ \widehat\Delta(X_j, Y) $ for irrelevant variables which is dominated by the lower bound on $ \widehat\Delta(X_j, Y) $ for relevant variables in \eqref{delta-lower}. Thus, by a union bound, it follows that with probability at least $1-s(4n+2)\exp(- C_1 n v) - 4n(p-s)\exp\Big(-\frac{nC_2v^{6/5+1/d}}{24\log(n)(B^2+\sigma^2)}\Big)$, we have $\mathcal{\widehat S} = \mathcal{S}$.
\end{proof}

\begin{proof}[Proof of Theorem \ref{theorem monotone ii}]
The proof is similar to that of Theorem \ref{theorem lipschitz ii} except that we use Proposition \ref{proposition monotone} in place of Proposition \ref{proposition lipschitz}.

By Proposition \ref{proposition monotone} for all variables $j \in \calS$ and a union bound, we see that with probability at least  $1 - 2s\exp\Big(-\frac{(n-1)v}{32(B^2+\sigma^2)}\Big) $,
$$
\widehat\Delta(X_j, Y)  \ge  \frac{v}{4(1+\log(2n))} \qquad \forall \; j \in \calS.
$$
Again, we have by Lemma \ref{delta-tail-2} and a union bound over all $p-s$ variables in  $\calS^c$ that with probability at least $ 1- 4n(p-s)\exp(-n\xi^2/(12(B^2+\sigma^2))) $ that 
\begin{equation} \label{delta-upper-2}
\widehat\Delta(X_j, Y) \leq \xi^2 \qquad \forall \; j \in \calS^c. 
\end{equation}
Choosing $\xi^2 = \frac{ v}{8(1+\log(2n))}$ in \eqref{delta-upper-2} and using a union bound, it follows that with probability at least $1 - 2s\exp\Big(-\frac{(n-1)v}{32(B^2+\sigma^2)}\Big)- 4n(p-s)\exp\Big(-\frac{nv}{96(1+\log(2n))(B^2+\sigma^2)}\Big)$, we have $\mathcal{\widehat S} = \mathcal{S}$. 
\end{proof}

\bibliographystyle{plain}
\bibliography{selection.bib}

	

\end{document}